\documentclass{article} 
\usepackage[final]{colm2026_conference}

\usepackage{microtype}
\usepackage{hyperref}
\usepackage{url}
\usepackage{booktabs}

\definecolor{darkblue}{rgb}{0, 0, 0.5}
\hypersetup{colorlinks=true, citecolor=darkblue, linkcolor=darkblue, urlcolor=darkblue}

\usepackage{amsmath,amsfonts,amssymb,amsthm}
\usepackage{graphicx}
\usepackage{algorithm}
\usepackage{algorithmic}
\usepackage{listings}
\usepackage[x11names, svgnames, dvipsnames]{xcolor}
\usepackage{natbib}
\usepackage{array}
\usepackage{mathtools}
\usepackage{tabularx}
\usepackage{enumerate}
\usepackage[inline,shortlabels]{enumitem}
\usepackage{subcaption}
\usepackage{multirow}
\usepackage{pifont}
\newcommand{\ms}[2]{{#1\tiny{$\pm$#2}}}

\usepackage[capitalize]{cleveref}
\usepackage{wrapfig}

\usepackage{arydshln}
\title{Training-Free versus Training-Based Intent Classification in LLMs: Accuracy, Robustness, and Failure Modes}

\author{Nan Chen\thanks{Equal contribution. Corresponding authors: \texttt{\{nchen38,zyang145\}@jh.edu}}
\quad
Zhouhao Yang\footnotemark[1]
\quad
Soufiane Hayou
\\
Department of Applied Mathematics and Statistics\\
Johns Hopkins University
}

\newcommand{\revise}[1]{{\color{black}#1}}

\newcommand{\reals}{\mathbb{R}}
\newcommand{\normal}{\mathcal{N}}
\newcommand{\E}{\mathbb{E}}

\newcommand{\Tr}{\mathop{\mathrm{Tr}}}
\newcommand{\M}{\mathcal{M}}
\newcommand{\vecstat}{{VecStat}}
\newcommand{\normstat}{{NormStat}}

\newcommand{\normstatKL}{NormStat:KL}
\newcommand{\vecstatKL}{VecStat:KL}
\newcommand{\vecstatCos}{VecStat:Cos}
\newcommand{\embedavg}{Avg-MLP}
\newcommand{\embedlast}{Tail-MLP}
\newcommand{\linearavg}{Avg-Linear}
\newcommand{\llmcall}[1]{LLM Call (#1-shot)}

\newcommand{\KL}{\mathrm{KL}}
\newcommand{\Normal}{\mathcal{N}}

\DeclareMathOperator{\Var}{{\rm Var}}

\newtheorem{thm}{Theorem}
\newtheorem{lemma}{Lemma}

\begin{document}

\maketitle

\begin{abstract}
Intent classification in Large Language Models (LLMs) involves categorizing user prompts into predefined classes. For instance, given a user prompt, the system must determine whether it primarily concerns mathematics, coding, or general text processing. Such classification enables routing prompts to specialized models optimized for specific domains, improving both accuracy and computational efficiency.
In this work, we conduct a systematic study comparing training-free vs training-based approaches for intent classification. For this purpose, we consider two lightweight, training-free methods based on statistics of internal representations and compare them against MLP classifiers and linear probes. Our comprehensive empirical evaluation reveals that 1) Both training-free and training-based methods saturate easy benchmarks (mathematics vs. coding vs. natural language), 2) Training-based classifiers have an advantage on harder classification tasks (e.g. Java vs Python), and 3) Training-free methods are generally more robust to mixed-intent and adversarial prompts.
\end{abstract}

\section{Introduction}
Large Language Models (LLMs) have demonstrated remarkable capabilities across diverse tasks, including mathematical reasoning \citep{wei2022chain,yao2023tree,gao2023pal} and code generation \citep{li2022competition,guo2024deepseek,zhu2024deepseek}. At the same time, modern LLM systems are increasingly deployed through routing architectures, where an intent classifier first identifies the user’s intent and then dispatches the prompt to an appropriate downstream model or tool \citep{gpt5,bocklisch2017rasa,bunk2020diet,arora2024intent}. Such routing can improve the efficiency and reliability of production-scale systems, but it also raises a basic question: \emph{what kind of intent classifier should be used?}

Existing approaches to intent classification face a difficult trade-off. Direct LLM calls are flexible, but they are hard to calibrate and less convenient when reliable uncertainty estimates are needed \citep{bang2023multitask,banerjee2025llms}. Dedicated supervised classifiers \citep{larson2019evaluation,chen2019bert} can achieve strong task-specific performance, but they might require extensive training data and computational resources, and need retraining when the intent label space evolves. In this work, we focus on two types of lightweight intent classifiers, training-free versus training-based methods\footnote{We use the wording training-free vs.\ training-based methods to refer to whether gradient updates are needed.}, and conduct a systematic empirical study to study the accuracy, robustness, and failure modes of each approach.

For training-free methods, we introduce \vecstat{} and \normstat{}, which operate entirely in the prefill phase with negligible cost. The motivation is the observation that different prompt types (mathematics, coding, general text, etc.) induce distinct activation distributions. Specifically, \vecstat{} and \normstat{} represent two levels of statistical compression: \vecstat{} preserves coordinate-wise directional information, while \normstat{} compresses features into radial summaries and uses substantially less memory.
For training-based methods, we consider an MLP classifier inspired by sentence-classification pipelines \citep{casanueva2020efficient,Jiang2024scaling}, applied to the LLM's final projection layer. The MLP head is trained on labeled data, whereas the training-free methods require no gradient updates, only a simple statistical estimation.

We conduct extensive empirical analysis to compare the two approaches. We apply these methods to LLMs ranging from 1B to 32B parameters and evaluate intent classification at both coarse-grained and fine-grained levels across seven benchmark datasets. The empirical results reveal that \emph{there is no one-fits-all method for intent classification}. On easy coarse-grained tasks, both training-free and training-based methods often saturate the benchmark, while on harder fine-grained tasks, training-based classifiers typically achieve higher accuracy. At the same time, this advantage does not necessarily persist under our stress tests: when prompts contain mixed intent, training-free methods provide better uncertainty estimates, and when prompts are adversarially rephrased to inject misleading content, they tend to remain more stable than the trained-based methods. Our contributions are:

\begin{itemize}[leftmargin=1.3em]
    \item We introduce two lightweight statistical methods, \normstat{} and \vecstat{}, that operate on prefill-phase, and develop theory explaining their respective strengths. We show that VecStat is advantageous when class separation is primarily directional, while NormStat is sufficient in coarser regimes where radial information is rich enough to separate data, with the additional benefit of cheaper calibration.
     \item We conduct a systematic empirical study comparing training-free and training-based intent classification across seven LLMs (1B--32B parameters) at both coarse and fine levels of granularity. Results show both paradigms saturate on easy tasks, while training-based methods typically outperform training-free ones on harder, fine-grained distinctions.

    \item \revise{We create a mixed-intent dataset and an adversarial prompt dataset to stress-test intent classifiers under mixed-intent ambiguity and misleading surface cues.} We find that training-free methods are typically more robust to mixed-intent prompts and adversarial rephrasings, making them potentially attractive for practical LLM routing settings where inputs are often noisy or ambiguous.

\end{itemize}

\subsection{Related Work}
\paragraph{Task Classification}
Intent classification maps a user prompt to a predefined label. Classical approaches either (i) train supervised classifiers over tokenized utterances (e.g., CNNs) to produce a distribution over intents \citep{hashemi2016query,goo2018slot,he2019using}, or (ii) fine-tune contextual encoders, particularly BERT-based models, where hidden states feed specialized intent classification heads, often jointly trained with slot filling tasks \citep{chen2019bert,bocklisch2017rasa,bunk2020diet}. In modern LLM-based systems, intent classification serves as a critical routing mechanism that allows the selection of appropriate downstream tools and models, enforces guardrails and fallback policies, and optimizes inference cost and latency \citep{souha2023pre,arora2024intent}. The predominant approach involves direct LLM inference through several key techniques \citep{liu2023pre,rodriguez2024intentgpt,wang2023large,arora2024intent,hong2024exploring}. However, the computational expense of LLM inference at scale has motivated hybrid architectures that combine fast, lightweight classifiers (including PEFT-tuned encoders) with LLMs through uncertainty-aware routing mechanisms. These systems employ confidence thresholding, entropy-based measures, or learned routing policies to reserve expensive LLM calls for ambiguous cases where simpler models exhibit high uncertainty \citep{arora2024intent}.

\paragraph{LLMs as text encoder}
Recent advances in LLMs have prompted researchers to explore their use as text encoders. An interesting approach is embedding extraction where existing methods typically operate on the last layer outputs through three strategies: using the last token embedding \citep{ma2024fine,neelakantan2022text,wang2024improving,meng2024sfrembedding,Jiang2024scaling}, averaging across all token embeddings \citep{muennighoff2022sgpt,muennighoff2024generative,behnamghader2024llm2vec}, or employing trainable modules \citep{lee2024nv,tang2024pooling}. 
Interested readers can refer to ~\citep{tao2024llms,nie2024text} for a more detailed review on this topic.
In contrast to these approaches, this work addresses user-intent classification for routing where both accuracy and computational efficiency are primary considerations. Our method utilizes prefill-time outputs from general-purpose LLMs without modification or additional training. By leveraging computational intermediates already produced during LLM prefill phase, this approach avoids the storage overhead of maintaining a dedicated billion-parameter model for intent classification. 

\paragraph{Neural Feature Analysis}
Our approach extracts representations $Wz$, where $W$ is a pretrained weight matrix and $z$ is model's hidden state. This design is motivated by two lines of research. First, linear probes effectively extract semantic information from transformer representations \citep{alain2016understanding,hewitt2019structural}, with sparse autoencoder studies suggesting that many concepts are captured by a small number of sparse features in the activation space \citep{cunningham2024sae,gao2024scalingSAE}. Superposition theory provides theoretical grounding, explaining how features remain recoverable through linear projections \citep{elhage2022toy}. Second, activation steering research demonstrates that intent-related behaviors can be manipulated through linear interventions in the representation space \citep{turner2023activation,panickssery2023steering}. Finally, activations $Wz$ were successfully used in \cite{hayou2025ploppreciseloraplacement} to determine target module for LoRA finetuning, showing that activation capture  data signal.

Further discussion of related works in LLM routing is deferred to Appendix~\ref{app:more_related_works}.

\section{Methodology}

We study training-free versus training-based intent classification in large language models. Given a prompt, the goal is to assign it to one of several intent classes, such as mathematics, code, or general text. All methods considered in this paper operate on neural features computed during the model’s forward pass before autoregressive decoding \citep{shazeer2019fast,ainslie2023gqa,chang2024palu,aguirre2025fine,jie2025specache}. The difference lies in how these representations are used. Training-based methods fit a classifier head from labeled data, whereas training-free methods compare the prompt’s neural feature statistics to class-specific reference statistics computed from calibration data.
This distinction leads to different strengths and weaknesses. Training-based classifiers can achieve higher accuracy on harder fine-grained tasks, while training-free methods often behave more robustly when prompts are ambiguous or adversarially rephrased.


\subsection{Training-free Intent Classification: A Statistical Approach}\label{sec:training_free}
Consider an LLM with weight modules $\M = \{W_1, W_2, \dots, W_p\}$, for some $p \geq 1$. We abuse the notation and use $W_\ell$ to refer to both the module and its weight matrix. Let $\mathbf x = (x_t)_{1 \leq t \leq T}$ be a prompt of $T$ tokens. For each weight module $W_k$, let $\left(y_{\ell,t}\right)_{1\leq t\leq T}$ denote the output features in module $W_\ell$. For instance, $\left(y_{\ell,t}\right)_{1\leq t\leq T}$ could be the output of a Query head, or the projection layer in an MLP block. Each $y_{\ell,t}$ is a $d$-dimensional vector given by $y_{\ell,t} = W_\ell z_{\ell, t}$, where $d$ is the output dimension in module $W_\ell$, and $z_{\ell, t}$ is the module input for token $t$. 

\begin{figure}[t]
    \centering
    \vspace{-4em}
    \includegraphics[width=0.8\linewidth]{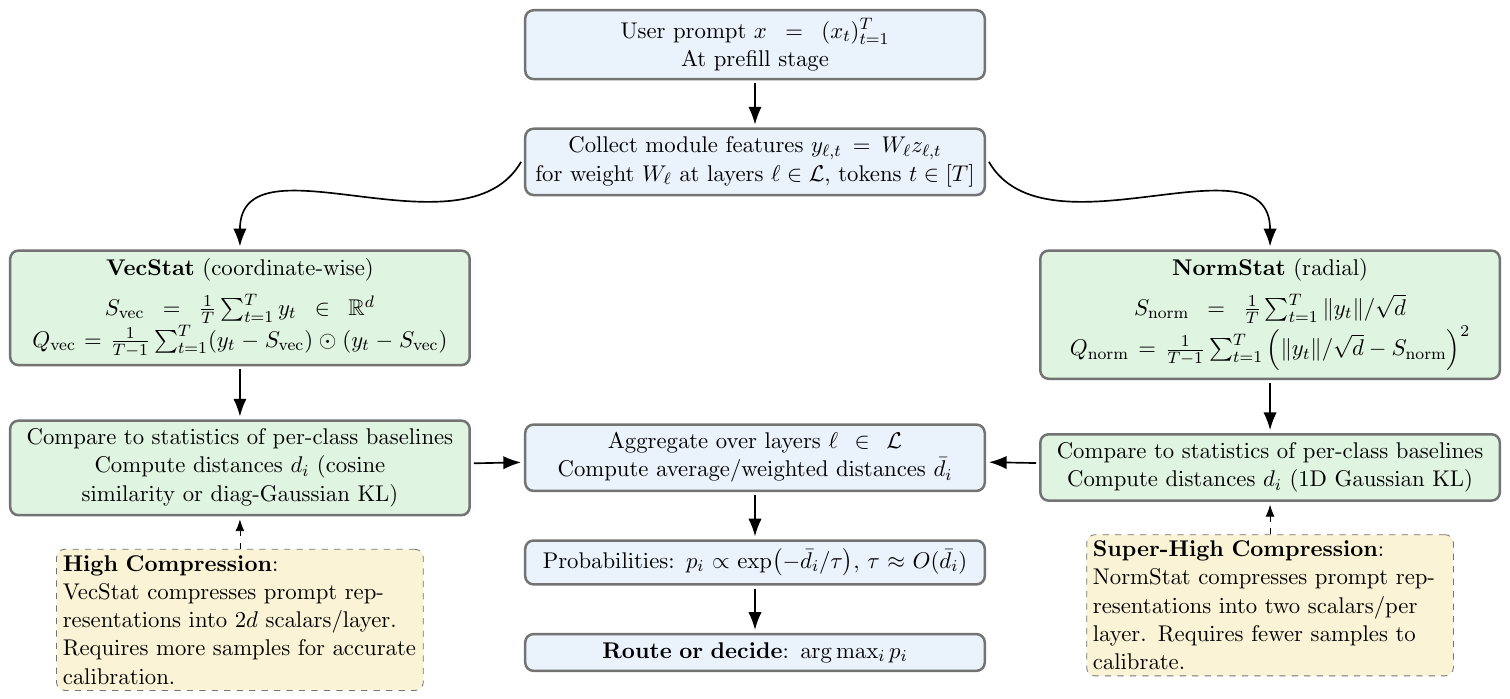}
    \vspace{-7.5em}
   \caption{\small Flowchart of \vecstat\ vs \normstat\ for prefill-time intent classification and routing.}
   \vspace{-1em}
    \label{fig:flowchart}
\end{figure} 

\paragraph{Intent classification.} We aim to classify the prompt $\mathbf x$ into one of the classes $C_1, C_2, \dots, C_m$, where $m \geq 2$. For instance, a binary classification where $C_1$ is mathematics and $C_2$ is coding. For each module $W_\ell$, we compute summary statistics from the
features $\{y_{\ell,t}\}_{t=1}^T$ and compare them to per-class baselines:
(i) for each class $C_i$, precompute the same statistics on \emph{calibration data} at the same module $W_\ell$;
(ii) for the incoming prompt, compute the statistics at $W_\ell$ and measure similarity to each baseline;
(iii) assign the class  $C_i$ with highest similarity. 

We consider two methods: 1) \emph{\vecstat}, which relies on coordinate-wise mean and variance for classification, and 2) \emph{\normstat}, a lighter variant that relies solely on the norm across all tokens and coordinates. \cref{fig:flowchart} summarizes both methods. In the following, we present the two methods in the single-layer case. When multiple layers are used, we aggregate similarity scores across $\ell$ by averaging.

\noindent\textbf{Vector Statistic }(\vecstat): calculate  \emph{coordinate-wise} token means and second moments tokens
\begin{equation}\label{eq:vecstat}
S_{\mathrm{vec}}=\frac{1}{T} \sum_{t=1}^T y_t\in\reals^d,\qquad Q_{\mathrm{vec}}=\frac{1}{T-1}\sum_{t=1}^T (y_t - S_{\mathrm{vec}})\odot (y_t - S_{\mathrm{vec}})\in\reals^d.
\end{equation}

\noindent\textbf{Norm Statistic }(\normstat): summarize each $y_t$ through a the norm $\|y_t\|$ and aggregate across tokens to obtain the statistics
\begin{equation}\label{eq:normstat}
S_{\text{norm}} = \frac{1}{T} \sum_{t=1}^T \frac{\|y_t\|}{\sqrt{d}}\in\reals, \quad Q_{\text{norm}} = \frac{1}{T-1} \sum_{t=1}^T \left(\frac{\|y_t\|}{\sqrt{d}} - S_{\text{norm}}\right)^2\in\reals.
\end{equation}

\begin{wrapfigure}{r}{0.3\textwidth} 
\vspace{-3em}
\centering
\includegraphics[width=\linewidth]{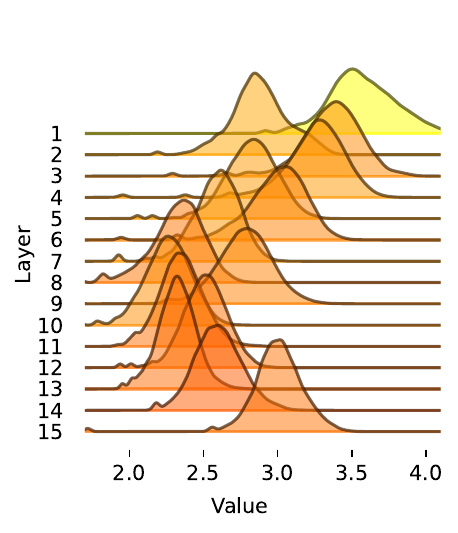}
\caption{\small{\textbf{Per-token query-norm distributions across layers.}
  Histograms of $\{\|y_{\ell,t}\|/\sqrt{d}\}_{t=1}^T$ for representative layers $\ell$.}}
\label{fig:norm_dist}
\vspace{-1em}
\end{wrapfigure}

For both methods, we score a prompt against each class using the closed-form KL divergence between \revise{Gaussian surrogate distributions parameterized by the corresponding summary statistics} (see Eqs.~\eqref{eq:vecstat-kl-sum} and \eqref{eq:normstat-kl-1d} in the Appendix). \vecstat~uses a diagonal Gaussian over the activation coordinates, whereas \normstat~uses a univariate Gaussian over the normalized token norms $\|y_{\ell,t}\|/\sqrt{d}$. \revise{The resulting score is the exact KL divergence between the Gaussian surrogates, but it is not intended to approximate the KL divergence between the unknown underlying activation distributions. Rather, Gaussian moment matching provides a simple and computationally efficient scoring rule based on first- and second-order statistics. Figure~\ref{fig:norm_dist} provides qualitative motivation for this choice by showing approximately bell-shaped radial statistics at representative layers, while we do not claim that LLM activations are literally Gaussian or independent.}\footnote{One could estimate an empirical KL without summaries, but doing so robustly at inference time is prohibitively expensive in both compute and memory.} We also evaluate cosine similarity for \vecstat~as an alternative that does not rely on Gaussian KL scoring.



These two choices form a \emph{statistical compression ladder}: \vecstat~keeps per-coordinate first and second moments, while \normstat~compresses all coordinates to a single \emph{radial} information per token and then to its mean/variance across tokens. The rest of this section develops this story rigorously: (i) we discuss when each is statistically preferable using a simplified Gaussian setting, and (ii) we connect those guarantees to \emph{compute}, \emph{memory}, and \emph{calibration} costs. Proofs are in \cref{app:proofs}.

\paragraph{Intuitive Analysis in a Gaussian Setting}

To compare NormStat and VecStat, we study an analytically tractable Gaussian proxy model. 
For a fixed class $k \in \{1,\dots,m\}$, suppose that the token features (not necessarily representations in an LLM) satisfy
\begin{equation}\label{eq:model}
y_t \mid k \overset{\mathrm{i.i.d.}}{\sim} \mathcal N(\mu_k,\Sigma_k),
\qquad t=1,\dots,T,
\end{equation}
where $\Sigma_k = \mathrm{Diag}(\sigma_{k,1}^2,\dots,\sigma_{k,d}^2)$ is diagonal. 

\paragraph{Expressivity.}
\revise{Since NormStat depends only on 
$\{\|y_t\|_2\}_{t=1}^T$, it is invariant under orthogonal transformations and cannot distinguish classes whose separation lies purely in direction. In contrast, VecStat is sensitive to coordinate-wise changes. Specifically, when classes share the same mean and differ only through spherical covariance scales, radial information is sufficient and NormStat is already \emph{Bayes-optimal}. In the directional regime, where classes have equal covariance and equal mean norm but different mean directions, every radius-only rule is blind, whereas a coordinate-aware rule can achieve exponentially small error in $T$. See the following theorem for a formal statement. The proof is provided in \cref{app:proofs}.}
\begin{thm}[\normstat~vs.\ \vecstat]
\label{thm:radial-vs-coord}
Consider binary classification with uniform class priors.
\begin{enumerate}[ref=\ref{thm:radial-vs-coord}-(\arabic*)]
    \item \label{thm:directional}
    \textbf{Directional regime.}
    If $\Sigma_1=\Sigma_2=\sigma^2 I_d$,
    $\|\mu_1\|=\|\mu_2\|$, and $\mu_1\neq\mu_2$,
    then every classifier based only on
    $\{\|y_t\|\}_{t=1}^T$ has error $1/2$, whereas the
    likelihood-ratio test satisfies
    \[
    \Pr(\hat{k}\neq k)
    \leq
    \exp\left(
    -\frac{T\|\mu_1-\mu_2\|^2}{8\sigma^2}
    \right).
    \]

    \item \label{thm:isotropic}
    \textbf{Isotropic-scale regime.}
    If $\mu_1=\mu_2=0$ and
    $\Sigma_k=\sigma_k^2 I_d$ with $\sigma_1\neq\sigma_2$,
    then the likelihood-ratio test depends only on
    \(
    R_T=\sum_{t=1}^T\|y_t\|^2.
    \)
    Hence radial information is sufficient, and coordinate-wise
    information cannot improve the Bayes risk.
\end{enumerate}
\end{thm}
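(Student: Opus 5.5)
We treat the two parts separately, both under the Gaussian proxy model \eqref{eq:model} with $y_1,\dots,y_T$ i.i.d.\ given the class.

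\textbf{Directional regime, radial blindness.} When $\Sigma_1=\Sigma_2=\sigma^2 I_d$ and $\|\mu_1\|=\|\mu_2\|$, there is an orthogonal matrix $U$ with $U\mu_1=\mu_2$; for instance, the Householder reflection across the hyperplane orthogonal to $\mu_1-\mu_2$. Then $Uy_t\sim\mathcal N(U\mu_1,\sigma^2 UU^\top)=\mathcal N(\mu_2,\sigma^2 I_d)$, and $\|Uy_t\|=\|y_t\|$. The vector $(\|y_t\|)_{t=1}^T$ therefore has the same joint law under both classes; it is a product of i.i.d.\ scaled noncentral $\chi_d$ variables whose noncentrality depends only on $\|\mu_k\|/\sigma$. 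Any (possibly randomized) classifier that is a function of these norms thus has the same output distribution under $k=1$ and $k=2$. Its error under uniform priors is
\[
\tfrac12\Pr(\hat k=2\mid 1)+\tfrac12\Pr(\hat k=1\mid 2)=\tfrac12 .
\]

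\textbf{Directional regime, likelihood-ratio bound.} The log-likelihood ratio is
\[
\Lambda=\sum_{t=1}^T\frac{\langle y_t,\mu_1-\mu_2\rangle}{\sigma^2}-\frac{T(\|\mu_1\|^2-\|\mu_2\|^2)}{2\sigma^2},
\]
and the second term vanishes because $\|\mu_1\|=\|\mu_2\|$. Define $\Delta=\mu_1-\mu_2$. Under class $1$, $\Lambda$ is Gaussian with mean $T\|\Delta\|^2/(2\sigma^2)$ and variance $T\|\Delta\|^2/\sigma^2$. The LRT errs when $\Lambda<0$, which has probability $\Phi\!\left(-\sqrt{T}\|\Delta\|/(2\sigma)\right)$. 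Class $2$ gives the same value by symmetry. Applying the Chernoff bound $\Phi(-x)\le e^{-x^2/2}$ gives
\[
\Pr(\hat k\neq k)\le\exp\!\left(-\frac{T\|\Delta\|^2}{8\sigma^2}\right).
\]
An alternative is the Bhattacharyya bound, $\Pr(\text{err})\le\tfrac12\int\sqrt{p_1p_2}$, which gives the same exponent with an extra factor $\tfrac12$. One constant needs checking: $\Phi(-x)\le e^{-x^2/2}$ holds for all $x\ge0$ (indeed $\le\tfrac12e^{-x^2/2}$), so the bound is fine.

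\textbf{Isotropic-scale regime.} The joint density under class $k$ is
\[
(2\pi\sigma_k^2)^{-Td/2}\exp\!\left(-\frac{R_T}{2\sigma_k^2}\right),
\]
so the log-likelihood ratio equals
\[
\frac{Td}{2}\log\frac{\sigma_2^2}{\sigma_1^2}+\frac{R_T}{2}\left(\frac1{\sigma_2^2}-\frac1{\sigma_1^2}\right),
\]
a function of $R_T$ alone. By Fisher--Neyman factorization, $R_T$ is sufficient for $k$. Under uniform priors the LRT with threshold $0$ is the Bayes rule. Because the Bayes rule is measurable in $R_T=\sum_t\|y_t\|^2$, which is a function of the norms, no classifier using coordinate information achieves a smaller risk. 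Equivalently, conditioning on $R_T$ leaves a distribution of $(y_t)$ that does not depend on $k$; this follows from the uniform law on the sphere given the radius.

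The only delicate point is the blindness claim in the first part. It requires equality of the full joint law of the norm vector, not just its marginals, and the i.i.d.\ structure together with the single common orthogonal map $U$ handles this cleanly.
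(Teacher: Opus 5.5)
Your proposal is correct and follows essentially the same route as the paper. An orthogonal map sending $\mu_1$ to $\mu_2$ makes the norm vector identically distributed under both classes; the LRT error is computed exactly as $\Phi\bigl(-\sqrt{T}\|\mu_1-\mu_2\|/(2\sigma)\bigr)$ and bounded via $\Phi(-x)\le e^{-x^2/2}$; and in the isotropic case the log-likelihood ratio is affine in $R_T$. Your extras (the explicit Householder reflection, randomized rules, the Bhattacharyya alternative, and the conditional-law-on-the-sphere remark) are minor refinements, not a different argument.
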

\revise{\Cref{thm:radial-vs-coord} describes two limiting regimes, whereas
real activation distributions may contain both radial and directional
differences. To illustrate an intermediate setting, write
\[
y_t=s_t\mu+\varepsilon_t,
\qquad
\Pr(s_t=1)=\pi,
\qquad
\varepsilon_t\sim\mathcal N(0,\sigma^2I_d),
\]
where $s_t\in\{\pm1\}$, $\mu\in\mathbb R^d$, and
$\gamma\coloneqq\mathbb E[s_t]=2\pi-1$. Equivalently,
\[
y_t\sim
\pi\mathcal N(\mu,\sigma^2I_d)
+(1-\pi)\mathcal N(-\mu,\sigma^2I_d).
\]
The corresponding statistics satisfy
\[
\mathbb E[S_{\mathrm{vec}}]=\gamma\mu,
\qquad
\mathbb E[Q_{\mathrm{vec}}]
=
\sigma^2\mathbf 1_d
+
(1-\gamma^2)\mu\odot\mu,
\qquad
\mathbb E[S_{\mathrm{norm}}]
=
F(\|\mu\|,\sigma^2),
\]
for a function $F$ depending only on the radial parameters. Because the norm
is invariant under $\mu\mapsto-\mu$, the distributions of
$S_{\mathrm{norm}}$ and $Q_{\mathrm{norm}}$ do not depend on $\pi$.
By contrast, \vecstat~retains information about $\pi$ through both its mean
and coordinate-wise variance. As $\pi\to\tfrac12$, the first-moment signal
$\mathbb E[S_{\mathrm{vec}}]$ vanishes, but the second-moment term
$\mu\odot\mu$ generally remains. Thus, \vecstat~may still retain
coordinate-wise information that is invisible to \normstat, even in the
symmetric-mixture case.}

\paragraph{Calibration Cost.} When using Normstat and Vecstat, an important aspect is \emph{calibration} which we use to refer to estimating key statistics used for classification.\footnote{Calibration for Vecstat and Normstat is the equivalent of training for MLP-based methods.} An important aspect of statistical methods is sample complexity, or more specifically, the convergence rate of key statistics in sample size. This provides an estimate of the total number of calibration samples needed to create the target statistics for classes $k \in \{1,2, \dots, m\}$. The next theorem show the calibration advantage of \normstat~over \vecstat.

\begin{thm}[Calibration cost]
\label{thm:calibration}
Fix a class $k$. Let $y_1,\dots,y_N \stackrel{\text{i.i.d.}}{\sim} \mathcal N(\mu_k,\Sigma_k)$ in $\mathbb R^d$,
where $N$ is the number of calibration samples drawn for this class.
Let $q = \E[\|y_1\|]$ and define
$\hat\mu_k = N^{-1}\sum_{i=1}^N y_i,$ and $\quad
\hat q = N^{-1}\sum_{i=1}^N d^{-1/2}\|y_i\|$.
Then, for any $\delta \in (0,1)$, with probability at least $1 - \delta$, we have:
\begin{enumerate}
\item \textbf{\normstat\ (dimension-free):}  $\left|\hat q- q \right|\lesssim \sqrt{\frac{\log(1/\delta)}{N}}.$
\item \textbf{\vecstat\ (dimension-dependent):} $\|\hat\mu_k-\mu_k\|_2 \lesssim  \sqrt{\frac{d+\log(1/\delta)}{N}}$.
\end{enumerate}
\end{thm}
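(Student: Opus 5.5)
The plan is to reduce both parts to Gaussian concentration of Lipschitz functions, writing $y_i = \mu_k + \Sigma_k^{1/2} g_i$ with $g_i \stackrel{\text{i.i.d.}}{\sim} \mathcal N(0, I_d)$. The implicit constant in $\lesssim$ will depend only on $\sigma_{\max}^2 \coloneqq \|\Sigma_k\|_{\mathrm{op}} = \max_j \sigma_{k,j}^2$, and not on $d$ or $N$. I will read $q$ as the population counterpart of $\hat q$, namely $q = \E[d^{-1/2}\|y_1\|]$. Under the literal definition $q=\E\|y_1\|$ the difference $\hat q - q$ does not even tend to zero, so I will state this normalization explicitly at the start.

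\textbf{\normstat.} Consider the map
\[
f(g_1,\dots,g_N) = \frac{1}{N}\sum_{i=1}^N d^{-1/2}\,\bigl\|\mu_k + \Sigma_k^{1/2} g_i\bigr\|
\]
on $\reals^{Nd}$. Each summand is Lipschitz in $g_i$ with constant $d^{-1/2}\sigma_{\max}$, since the norm is $1$-Lipschitz and $\Sigma_k^{1/2}$ has operator norm $\sigma_{\max}$. Applying Cauchy--Schwarz over the $N$ blocks, $f$ is Lipschitz with constant
\[
L = \sigma_{\max}/\sqrt{dN}.
\]
The Gaussian concentration inequality (Tsirelson--Ibragimov--Sudakov) then gives $\Pr(|f - \E f| > s) \le 2\exp(-s^2/(2L^2))$. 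Since $\E f = q$, setting the right-hand side equal to $\delta$ yields
\[
|\hat q - q| \le \sigma_{\max}\sqrt{2\log(2/\delta)/(dN)},
\]
which is dimension-free (in fact it improves with $d$). Equivalently, one can view each $d^{-1/2}\|y_i\|$ as independent sub-Gaussian with variance proxy $\sigma_{\max}^2/d$ and apply Hoeffding's inequality.

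\textbf{\vecstat.} Here $\hat\mu_k - \mu_k \sim \mathcal N(0, \Sigma_k/N)$, so $\|\hat\mu_k - \mu_k\|_2 = \|N^{-1/2}\Sigma_k^{1/2} g\|$ with $g\sim\mathcal N(0,I_d)$. I will split this into its mean plus a fluctuation.
\begin{itemize}
\item For the mean, Jensen's inequality gives $\E\|\hat\mu_k-\mu_k\| \le \sqrt{\Tr(\Sigma_k)/N} \le \sigma_{\max}\sqrt{d/N}$.
\item For the fluctuation, the map $g \mapsto \|N^{-1/2}\Sigma_k^{1/2}g\|$ is $\sigma_{\max}/\sqrt N$-Lipschitz, so Gaussian concentration gives a deviation of at most $\sigma_{\max}\sqrt{2\log(1/\delta)/N}$ with probability $1-\delta$.
\end{itemize}
Adding the two pieces and using $\sqrt a + \sqrt b \le \sqrt{2(a+b)}$ gives the claimed bound $\lesssim \sqrt{(d+\log(1/\delta))/N}$.

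It would be worth remarking that this dimension dependence is sharp and not an artifact of the proof. When $\Sigma_k = \sigma^2 I_d$, the quantity $\|\hat\mu_k-\mu_k\|^2$ equals $\sigma^2\chi^2_d/N$, which concentrates around $\sigma^2 d/N$. This tightness is what justifies the comparative reading of the theorem.

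The main obstacle is not technical but one of bookkeeping. Both bounds rely on Gaussian Lipschitz concentration; the real work is tracking the Lipschitz constants carefully enough to see the contrast. The $d^{-1/2}$ normalization cancels any potential growth in \normstat, while the $\Tr(\Sigma_k)$ term produces the $d$ in \vecstat. One further point needs care: $\E\|y\|$ is not a simple closed form when $\mu_k\neq 0$. This is harmless, because concentration is around the expectation itself, and I will not attempt to compute it.
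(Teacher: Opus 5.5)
Your argument is correct, but for the \normstat{} part it follows a genuinely different route from the paper. The paper's appendix proof switches to the squared, unnormalized statistic $q=\|y\|^2$. It computes $\E q=\|\mu_k\|^2+\Tr(\Sigma_k)$ and $\Var(q)=2\Tr(\Sigma_k^2)+4\mu_k^\top\Sigma_k\mu_k$, and uses Hanson--Wright to make $\|y_i\|^2-\E q$ sub-exponential. It then applies Bernstein, keeping only the sub-Gaussian branch of the tail. Finally it asserts $\Var(q)=\Theta(d)$ under bounded eigenvalues, so that dividing by $d$ gives a dimension-free rate.

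You instead apply Gaussian Lipschitz concentration directly to the unsquared quantity $d^{-1/2}\|y_i\|$, which is what \normstat{} actually computes. This buys three things:
\begin{itemize}
\item a genuinely sub-Gaussian tail valid for every $\delta\in(0,1)$, whereas the paper's step ``choose $t\lesssim\Var(q)/B$'' quietly restricts the admissible range of $\delta$;
\item a bound that does not depend on $\mu_k$ at all, whereas the paper's claim $\Var(q)=\Theta(d)$ tacitly needs $\mu_k^\top\Sigma_k\mu_k=O(d)$;
\item an explicit constant $\sigma_{\max}\sqrt{2\log(2/\delta)/(dN)}$ that displays the extra $d^{-1/2}$ factor.
\end{itemize}
In exchange, the paper's route offers a closed-form centering and a variance-explicit constant in terms of $\Tr(\Sigma_k^2)$ and $\mu_k^\top\Sigma_k\mu_k$. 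As you note, the lack of a closed form for $\E\|y_1\|$ costs you nothing. Your reinterpretation $q=\E[d^{-1/2}\|y_1\|]$ is the right repair of the normalization mismatch in the statement. It is in the same spirit as the paper's own restatement, which normalizes by $d$ only at the end.

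For \vecstat{} the two proofs are essentially equivalent. The paper whitens, uses $\|\Sigma_k^{-1/2}(\hat\mu_k-\mu_k)\|_2^2\sim\chi^2_d/N$ with the Laurent--Massart inequality, and multiplies by $\sigma_{\max}$. Your Jensen-plus-Lipschitz split avoids whitening and yields the intermediate bound $\sqrt{\Tr(\Sigma_k)/N}+\sigma_{\max}\sqrt{2\log(1/\delta)/N}$, which is sharper when $\Tr(\Sigma_k)\ll d\,\sigma_{\max}^2$. The chi-square formulation, on the other hand, also carries the lower tail, which is what underlies the sharpness remark you make separately.
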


Considering just the statistics $\hat \mu$ and $\hat q$, to obtain an estimation error of order $\epsilon$, one needs $N=\Omega(\epsilon^{-2})$ for $\hat q$ and $N=\Omega(d \epsilon^{-2})$ for $\hat \mu$, showing the computational advantage of \normstat~over \vecstat. This is particularly important in data scarce regimes with few samples for each class. We discuss this in more details in the next section.

\subsection{Training-based Intent Classification: MLP Classifier and Linear Probes}
For a comprehensive and fair comparison with the training-free methods above, we consider a trained classifier head on top of frozen LLM features from the \emph{prefill} phase. We use the last
Transformer block and write $y_{\ell^\star,t}\in\reals^d$ for its token features
($t=1,\dots,T$). Inspired by prompt/sentence
classification pipelines (e.g., \citep{ma2024fine,wang2024improving,meng2024sfrembedding}), we build a
single prompt-level vector in two ways:
\[
\textbf{Avg-MLP:}\quad z_{\mathrm{avg}} := \frac{1}{T}\sum_{t=1}^T y_{\ell^\star,t}
=S_{\mathrm{vec}}^{(\ell^\star)}\in\reals^d,
\qquad
\textbf{Tail-MLP:}\quad z_{\mathrm{tail}} := y_{\ell^\star,T}\in\reals^d.
\]
Given $z\in\{z_{\mathrm{avg}}, z_{\mathrm{tail}}\}$, we train a two-layer MLP with hidden width $h$ with cross entropy loss. Since $z_{\mathrm{avg}}$ or $z_{\mathrm{tail}}$
is produced during prefill, the incremental latency is a single MLP forward pass.
We also consider a simple linear probe variant that we call \textbf{Avg-Linear} where we use a simple projection instead of MLP. 
We defer discussion of training/calibration costs and practical trade-offs between training-free and training-based methods to~\cref{app:add_discussion}.


\section{Experiments}
In this section, we evaluate the effectiveness of \normstat, \vecstat, Avg-MLP, Tail-MLP, and Avg-Linear across multiple LLMs and classification datasets. All experimental details can be found in~\cref{appendix:exp-details}, and additional experimental results are presented in~\cref{appendix:exp}.
Our results provide a systematic comparison between training-free and training-based intent classification methods, showing the advantages and disadvantages of each approach. For completeness, we also compare with zero-shot direct LLM call for intent classification.
The code and datasets associated with this work are publicly available at \url{https://github.com/Zhouhao-Yang/Training-Free-versus-Training-Based-Intent-Classification-in-LLMs}.

\subsection{Experimental Setup}
\paragraph{Classification tasks and granularities.}
We consider four classification tasks with different levels of granularity. Task~1 addresses coarse-grained intent classification among three broad categories: general text, mathematics, and code; Task~2 considers programming language identification; Task~3 considers natural language identification;  Task~4 considers mathematical subfield classification.  Task 1-3 are coarse-grained (level-1 granularity), while Task~4 is a fine-grained task (level-2 granularity).

For calibration in Task~1, we use representative datasets for each intent class: MMLU European History~\citep{hendrycks2021ethics,hendryckstest2021} for general text, GSM8K~\citep{cobbe2021gsm8k} for mathematics, and Magicoder~\citep{wei2023magicoder} for code. We test on MMLU US History for general text, GSM8K and MATH500~\citep{lightman2023lets} for mathematics (in-distribution and out-of-distribution, respectively), and Magicoder and HumanEval~\citep{chen2021evaluating} for code (in-distribution and out-of-distribution, respectively). For Task~2, we use Magicoder for programming language identification. For Task~3, we use the Aya dataset~\citep{singh2024aya} for natural language identification, splitting each language subset into calibration and test sets. For Task~4, we use domain-specific subsets of Competition Math~\citep{hendrycksmath2021} for mathematical subfield classification. More details are provided in~\cref{appendix:datatset_composition}.


\paragraph{Method and LLM selection} We compare five classification methods: \normstat{}, \vecstat{} (with two variants: cosine similarity, \vecstatCos{}, and KL divergence, \vecstatKL{}), and training-based baselines \embedavg{}, \embedlast{}, and \linearavg{}. Training-based methods use the same calibration data for training to ensure fair comparison.
We also benchmark 0-shot and 3-shot LLM calls, where model predicts intent directly from the prompt. 
We find that providing a high-level overview of the intent classes in the prompt is necessary for achieving a reasonable performance.
All calibration prompts are truncated to 512 tokens, with training-free methods probing all linear modules. 
We evaluate Qwen3 and Llama models across 1B–32B scales, including base and instruction-tuned variants; see \cref{appendix:benchmark_LLM} for details. \revise{Additionally, we fine-tune RoBERTa-Base end-to-end with a task-specific classification head, using the same labeled training/calibration and test splits as the other methods.}

\paragraph{Evaluation Metrics} For task 1, we compute accuracy on each test dataset independently, where each dataset contains samples from a single ground-truth class. This approach ensures our evaluation is not biased by varying dataset sizes across classes. For task 2 and 3, we report mean accuracy across all classes within each task due to space constraints. This mean accuracy corresponds to the balanced accuracy metric, providing equal weight to each class regardless of test set size and effectively handling the natural class imbalance among test datasets. All experiments are run with three seeds, with mean and std reported. 


\subsection{Empirical Results}
\Cref{tab:method_comparison} summarizes the computational and practical trade-offs among intent classification methods. \Cref{tab:main-table} reports classification results for Tasks~1--3 (all level-1 granularity) across five methods and four representative LLMs. For Task~1, we report per-dataset accuracy; for Tasks~2 and~3, we report mean balanced accuracy across classes. \Cref{tab:level-2-math-results-short} provides per-subfield accuracy for Task~4 (level-2 mathematical subfield classification). Full results for all seven LLMs are reported in \cref{appendix:l1-exp} (level-1) and \cref{appendix:l2-exp} (level-2).

\begin{table}[t]
\centering
\small
\resizebox{0.995\linewidth}{!}{
\begin{tabular}{lcccc}
\hline
\textbf{Method} & \textbf{FLOPs Overhead} & \textbf{Memory Overhead} & \textbf{Extendability of New Classes} & \textbf{Adversarial Robustness}\\
\hline
\textbf{NormStat} 
& $O(Td)$
& $O(m)$ 
& Compute new baselines
& Better\\
\textbf{VecStat} 
& $O(Td)$
& $O(md)$ 
& Compute new baselines
& Better\\
\textbf{MLP} 
& $O(hd)$
& $O(hd)$
& Retrain a new MLP head
& Worse\\
\textbf{LLM Call} 
& $\Omega(Td^2)$ 
& $\sim$
& Extend via prompt engineering
& $\times$ \\
\hline
\end{tabular}
}
\caption{\small Classifier Comparison. $T$: prompt length; $d$: hidden width; $m$: \# classes; $h$: MLP hidden size. }

\label{tab:method_comparison}
\end{table}

\paragraph{Computational Overhead.}
NormStat stores only $O(m)$ scalars and performs $O(m)$ scoring FLOPs at 
inference, making it the most lightweight option. VecStat stores $O(md)$ 
numbers and requires $O(md)$ scoring FLOPs. Meanwhile, training-based 
methods rely on an MLP head, requiring $O(hd)$ FLOPs and $O(hd)$ parameter storage, where $h$ is the hidden layer dimension. Direct LLM calls incur 
the highest computational overhead at $\Omega(Td^2)$ FLOPs.

\begin{table}[t]
\centering

\small
\caption{\small Classification accuracy across methods on Tasks~1--3 (all level-1 granularity). Task~1 reports per-dataset accuracy for domain classification (general text, math, code). Tasks~2 and~3 report mean balanced accuracy across classes: programming language identification and natural language identification, respectively. LLM Call results omitted for Llama-3.2-1B models due to invalid outputs.}
\label{tab:main-table}
\resizebox{\linewidth}{!}{%
\begin{tabular}{llcccccccc}
\toprule
\multirow{2}{*}{Model} & \multirow{2}{*}{Method} & \multicolumn{5}{c}{Task~1} & \multicolumn{1}{c}{Task~2} & \multicolumn{1}{c}{Task~3} \\
\cmidrule(lr){3-7} \cmidrule(lr){8-8} \cmidrule(lr){9-9}
 & & gsm8k & humaneval & magicoder & math500 & mmlu\_history & programming & natural language \\



 



\midrule

\multirow{6}{*}{Llama-3.2-1B}
 & \embedavg    & \ms{100.00}{0.00} & \ms{100.00}{0.00} & \ms{99.99}{0.01} & \ms{64.33}{8.33} & \ms{99.84}{0.28} & \ms{99.97}{0.03} & \ms{99.91}{0.04} \\
 & \embedlast   & \ms{99.95}{0.04} & \ms{100.00}{0.00} & \ms{99.97}{0.03} & \ms{98.93}{0.42} & \ms{99.35}{1.13} & \ms{99.46}{0.19} & \ms{99.83}{0.07} \\
 & \linearavg   & \ms{100.00}{0.00} & \ms{99.80}{0.35} & \ms{99.99}{0.01} & \ms{71.47}{7.78} & \ms{98.69}{1.86} & \ms{99.96}{0.03} & \ms{99.90}{0.02} \\

\cdashline{2-9}
 
 & \normstatKL  & \ms{99.49}{0.09} & \ms{90.85}{0.00} & \ms{96.39}{0.20} & \ms{83.40}{0.00} & \ms{92.48}{0.57} & \ms{49.21}{1.38} & \ms{86.53}{1.33} \\
 & \vecstatKL   & \ms{100.00}{0.00} & \ms{99.39}{0.00} & \ms{99.98}{0.02} & \ms{78.80}{0.12} & \ms{100.00}{0.00} & \ms{98.99}{0.20} & \ms{99.19}{0.08} \\
 & \vecstatCos  & \ms{100.00}{0.00} & \ms{99.39}{0.00} & \ms{99.97}{0.02} & \ms{77.60}{0.20} & \ms{100.00}{0.00} & \ms{98.71}{0.20} & \ms{99.72}{0.01} \\

\midrule

\multirow{7}{*}{Qwen3-8B}
 & \embedavg    & \ms{99.42}{0.74} & \ms{99.59}{0.35} & \ms{99.99}{0.02} & \ms{77.27}{11.02} & \ms{100.00}{0.00} & \ms{99.97}{0.02} & \ms{99.92}{0.01} \\
 & \embedlast   & \ms{99.82}{0.12} & \ms{98.58}{2.46} & \ms{99.98}{0.02} & \ms{81.20}{9.72} & \ms{100.00}{0.00} & \ms{99.66}{0.12} & \ms{99.88}{0.04} \\
 & \linearavg   & \ms{99.57}{0.24} & \ms{100.00}{0.00} & \ms{100.00}{0.00} & \ms{83.00}{2.03} & \ms{100.00}{0.00} & \ms{99.97}{0.02} & \ms{99.94}{0.01} \\

\cdashline{2-9}

 & \normstatKL  & \ms{85.14}{0.35} & \ms{10.37}{1.06} & \ms{99.85}{0.06} & \ms{92.93}{0.12} & \ms{99.51}{0.49} & \ms{56.39}{0.87} & \ms{90.09}{0.40} \\
 & \vecstatKL   & \ms{99.95}{0.04} & \ms{99.59}{0.35} & \ms{99.99}{0.02} & \ms{92.20}{0.00} & \ms{100.00}{0.00} & \ms{99.23}{0.13} & \ms{99.20}{0.08} \\
 & \vecstatCos  & \ms{100.00}{0.00} & \ms{95.73}{0.61} & \ms{99.98}{0.03} & \ms{94.20}{0.00} & \ms{100.00}{0.00} & \ms{99.34}{0.14} & \ms{99.68}{0.03} \\

\cdashline{2-9}

& \llmcall{0}  & \ms{99.67}{0.04} & \ms{100.00}{0.00} & \ms{99.42}{0.07} & \ms{99.60}{0.00} & \ms{99.67}{0.28} & \ms{98.89}{0.05} & \ms{81.72}{0.09} \\

& \revise{\llmcall{3}} & \revise{\ms{99.14}{0.18}} & \revise{\ms{97.15}{0.35}} & \revise{\ms{99.43}{0.07}} & \revise{\ms{99.87}{0.12}} & \revise{\ms{99.51}{0.49}} & - & -\\

\midrule

\multirow{7}{*}{Qwen3-32B}
 & \embedavg    & \ms{95.88}{3.31} & \ms{100.00}{0.00} & \ms{99.99}{0.01} & \ms{86.87}{5.22} & \ms{100.00}{0.00} & \ms{99.97}{0.03} & \ms{99.93}{0.02} \\
 & \embedlast   & \ms{99.39}{0.00} & \ms{100.00}{0.00} & \ms{99.98}{0.00} & \ms{96.80}{0.40} & \ms{99.02}{0.49} & \ms{98.41}{0.39} & \ms{99.84}{0.01} \\
 & \linearavg   & \ms{98.61}{1.59} & \ms{100.00}{0.00} & \ms{99.99}{0.02} & \ms{87.53}{6.94} & \ms{100.00}{0.00} & \ms{99.97}{0.03} & \ms{99.94}{0.01} \\

\cdashline{2-9}
 
 & \normstatKL  & \ms{97.93}{0.04} & \ms{24.59}{0.35} & \ms{99.78}{0.06} & \ms{97.93}{0.12} & \ms{100.00}{0.00} & \ms{57.02}{1.03} & \ms{89.62}{0.08} \\
 & \vecstatKL   & \ms{100.00}{0.00} & \ms{99.39}{0.00} & \ms{99.98}{0.03} & \ms{96.60}{0.00} & \ms{100.00}{0.00} & \ms{99.53}{0.11} & \ms{98.74}{0.05} \\
 & \vecstatCos  & \ms{100.00}{0.00} & \ms{98.17}{0.00} & \ms{99.98}{0.03} & \ms{96.80}{0.35} & \ms{100.00}{0.00} & \ms{99.61}{0.10} & \ms{99.58}{0.02} \\

\cdashline{2-9}

  & \llmcall{0}  & \ms{86.91}{0.29} & \ms{100.00}{0.00} & \ms{98.69}{0.06} & \ms{96.27}{0.42} & \ms{100.00}{0.00} & \ms{99.82}{0.03} & \ms{99.16}{0.01}\\

   & \revise{\llmcall{3}} & \revise{\ms{97.80}{0.20}} & \revise{\ms{100.00}{0.00}} & \revise{\ms{98.87}{0.05}} & \revise{\ms{99.80}{0.20}} & \revise{\ms{100.00}{0.00}} & - & - \\

\midrule

\revise{RoBERTa} & - & \revise{\ms{100.00}{0.00}} & \revise{\ms{100.00}{0.00}} & \revise{\ms{99.98}{0.02}} & \revise{\ms{42.33}{13.05}} & \revise{\ms{100.00}{0.00}} & \revise{\ms{99.97}{0.05}} & \revise{\ms{99.92}{0.16}} \\

\bottomrule
\end{tabular}%
}
\vspace{-0.5em}
\end{table}

\paragraph{Level-1 classification: All methods perform well in Task 1 and 3; Task~2 exposes NormStat's limitation.} For task 1 and 3, all methods achieve strong in-distribution performance. The effectiveness of \normstat{}, despite using only radial statistics, demonstrates itself as a computational- and memory-efficient approach when the classes are different enough. 
\linearavg{} performs comparably to \embedavg{} but with slightly lower accuracy, consistent with the reduced expressiveness of a linear layer versus two-layer architecture. Direct LLM inference shows improved classification accuracy with larger models, though it requires careful prompt design to achieve a reasonable performance.
\revise{Adding few-shot examples can improve LLM Call performance through in-context learning, but at the cost of a longer prompt and consequently higher computational cost.}
Notably, Llama-3.2-1B models fail to produce valid responses, hence their results are omitted.
\revise{The fine-tuned RoBERTa encoder is competitive on most level-1 tasks but falls to \(42.33\%\) on the out-of-distribution MATH500 set, where several LLM-feature methods remain above \(90\%\).}
Furthermore, out-of-distribution generalization varies substantially, as evidenced by the performance difference between GSM8K and MATH500 for mathematical tasks, suggesting that different methods might capture distinct aspects of domain characteristics.

Meanwhile, task 2 reveals a clear limitation of \normstat: despite training-based methods and \vecstat~ saturate on the classification, \normstat~ degrades substantially.
This result is consistent with \cref{thm:radial-vs-coord} that directional information in the neural feature space is critical for within-domain discrimination: since programming languages share the same broad ``code'' domain, their feature distributions differ primarily in direction rather than radial scale, making norm-only statistics insufficient.

\paragraph{Level-2 classification: Training-based methods are more accurate.}
Fine-grained mathematical subfield classification reveals a clear gap between training-based and training-free methods (Table~\ref{tab:level-2-math-results-short}). 
\revise{When averaged across subfields, both the trained LLM-feature classifiers and RoBERTa consistently outperform the training-free methods and direct LLM inference.}
However, no single method dominates uniformly: the best-performing method varies by subfield. 
These findings suggest that effective discrimination among closely related mathematical topics benefits from non-linear transformations learned through supervised training, rather than simple statistical summaries of activation distributions.

\begin{table}[htb]
\centering
\small
\caption{\small Level-2 mathematical subfield classification results. Values represent per-subfield accuracy across seven mathematical subfields. Best accuracy in each subfield is highlighted in bold.}
\label{tab:level-2-math-results-short}
\resizebox{\linewidth}{!}{%
\begin{tabular}{llccccccc}
\toprule
Model & Method & Algebra & Counting \& Probability & Geometry & Intermediate Algebra & Number Theory & Prealgebra & Precalculus \\
\midrule
 \multirow{7}{*}{Qwen3-8B} & \embedavg & \textbf{\ms{73.94}{1.63}} & \ms{82.70}{0.98} & \ms{87.13}{0.56} & \textbf{\ms{81.00}{3.61}} & \ms{86.59}{4.35} & \ms{52.95}{5.24} & \textbf{\ms{89.36}{0.42}} \\
 
 & \embedlast & \ms{68.43}{1.81} & \ms{80.52}{2.03} & \ms{89.92}{1.91} & \ms{73.87}{6.12} & \ms{82.92}{5.55} & \textbf{\ms{53.58}{0.64}} & \ms{81.10}{5.72} \\
 & \linearavg & \ms{73.28}{4.60} & \ms{80.82}{3.83} & \textbf{\ms{90.29}{2.47}} & \ms{79.92}{3.87} & \ms{82.65}{0.90} & \ms{52.19}{3.70} & \ms{84.76}{0.54} \\
 \cdashline{2-9}
 & \normstatKL & \ms{21.01}{6.35} & \ms{19.03}{1.50} & \ms{40.19}{3.56} & \ms{46.38}{0.91} & \ms{73.45}{4.22} & \ms{0.73}{0.59} & \ms{31.98}{1.31} \\
 
 & \vecstatKL & \ms{36.23}{3.18} & \ms{51.24}{1.17} & \ms{35.88}{3.28} & \textbf{\ms{81.00}{1.60}} & \ms{88.89}{0.47} & \ms{1.52}{0.12} & \ms{55.15}{3.27} \\
 
 & \vecstatCos & \ms{64.46}{1.45} & \ms{61.95}{1.87} & \ms{36.25}{3.19} & \ms{74.13}{0.95} & \textbf{\ms{89.33}{0.33}} & \ms{2.06}{0.40} & \ms{55.08}{1.08} \\
 \cdashline{2-9}
 & \llmcall{0} & \ms{44.49}{0.22} & \textbf{\ms{83.22}{1.58}} & \ms{78.32}{2.46} & \ms{67.76}{0.44} & \ms{52.38}{2.21} & \ms{18.70}{1.02} & \ms{64.63}{1.63} \\
 \midrule
 \multirow{7}{*}{Qwen3-32B} & \embedavg & \textbf{\ms{73.25}{4.75}} & \ms{77.83}{2.89} & \ms{89.01}{3.74} & \textbf{\ms{84.14}{3.27}} & \ms{81.88}{2.64} & \textbf{\ms{57.92}{4.22}} & \ms{86.65}{3.02} \\
 
 & \embedlast & \ms{66.70}{8.95} & \ms{78.43}{2.02} & \ms{88.52}{4.65} & \ms{79.61}{3.33} & \ms{74.06}{11.10} & \ms{50.57}{10.68} & \ms{80.62}{3.30} \\
& \linearavg & \ms{71.97}{5.76} & \ms{79.93}{3.82} & \ms{87.86}{0.92} & \ms{82.33}{3.88} & \ms{85.06}{0.72} & \ms{52.82}{4.70} & \textbf{\ms{87.53}{1.89}} \\
\cdashline{2-9}
 
 & \normstatKL & \ms{31.99}{2.02} & \ms{21.72}{0.85} & \ms{40.56}{3.48} & \ms{44.61}{1.14} & \ms{75.53}{2.56} & \ms{0.13}{0.05} & \ms{31.44}{1.47} \\
 
 & \vecstatKL & \ms{55.76}{1.11} & \ms{58.50}{2.04} & \ms{35.94}{3.28} & \ms{77.80}{1.04} & \textbf{\ms{89.87}{0.34}} & \ms{1.46}{0.12} & \ms{52.78}{0.59} \\
 
 & \vecstatCos & \ms{71.16}{0.46} & \ms{64.42}{2.16} & \ms{37.04}{3.20} & \ms{73.32}{0.82} & \ms{89.49}{0.87} & \ms{3.00}{0.72} & \ms{58.47}{1.31} \\
\cdashline{2-9}
& \llmcall{0} & \ms{69.28}{0.36} & \textbf{\ms{86.59}{1.44}} & \textbf{\ms{91.92}{0.42}} & \ms{35.81}{1.07} & \ms{59.61}{1.31} & \ms{20.04}{0.51} & \ms{63.28}{0.12} \\

\midrule

 \revise{RoBERTa} & - & \revise{\ms{67.03}{3.37}} & \revise{\ms{81.95}{1.58}} & \revise{\ms{90.28}{1.39}} & \revise{\ms{78.59}{1.72}} & \revise{\ms{82.76}{2.61}} & \revise{\ms{51.91}{1.58}} & \revise{\ms{87.53}{1.89}}\\

\bottomrule
\end{tabular}%
}
\vspace{-1em} 
\end{table}

\paragraph{Distance metric comparison} The cosine distance variant of \vecstat{} consistently outperforms its KL divergence counterpart, particularly in task 4, suggesting that angular separation between prompt and baseline statistics better captures directional differences.

\subsection{Uncertainty Quantification for Mixed-Intent Prompts}\label{subsec:mixed-intent}

To evaluate whether the proposed methods can handle ambiguous prompts, we construct a mixed-intent dataset by interleaving samples from both math and code datasets at five known mix ratios, assessed across two prompt orderings (code-first and math-first). The probability outputs of each method are temperature-calibrated, and performance is measured by the RMSE between the predicted math probability and the true math fraction. Full experimental details are provided in \cref{appendix:mix-intent}.

\Cref{fig:mix-intent-qwen3-1.7B,tab:mix-intent-qwen3-1.7B} report results on Qwen3-1.7B; results for additional models are in \cref{fig:mix-intent-all,tab:mix-intent-calibration}. \vecstat{} provides the most accurate uncertainty estimates across both prompt orderings, achieving the lowest calibration RMSE. \embedlast{} is the weakest method overall and the most sensitive to prompt order, consistent with its reliance on the last-token embedding. In contrast, \normstat{} and \embedavg{} are more stable across orderings but their prediction curves remain flatter than the ideal diagonal, suggesting they under-react to changes in mixture ratio rather than tracking them continuously.

\begin{figure}[htb]
  \begin{minipage}[c]{0.5\linewidth}
    \centering
    \includegraphics[width=\linewidth]{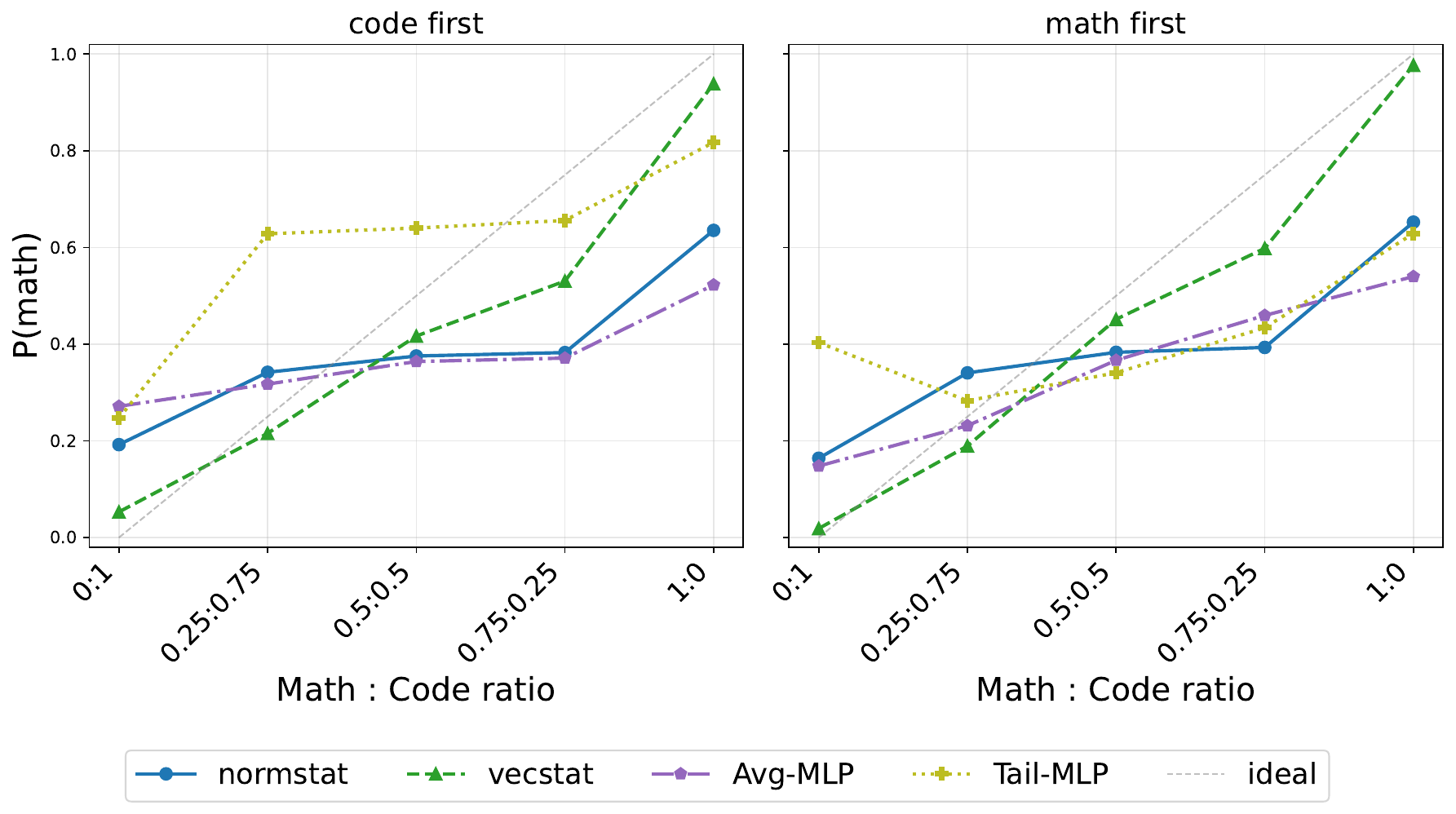}
    \captionof{figure}{\small Predicted $P(\text{math})$ vs.\ mix ratio for \textsc{Qwen3-1.7B}.
                       Curves are temperature-calibrated.}
    \label{fig:mix-intent-qwen3-1.7B}
  \end{minipage}
  \hfill
  \begin{minipage}[c]{0.45\linewidth}
    \centering
    \captionof{table}{\small Calibration RMSE for \textsc{Qwen3-1.7B}.
    $\mathrm{RMSE} = \sqrt{\frac{1}{N}\sum_{i=1}^{N}(\hat{p}_i - p_i^*)^2}$,
    where $\hat{p}_i$ is the predicted math probability and
    $p_i^*$ is the target math fraction, evaluated at mix-ratio points.            \textbf{Bold} marks the best method per column.}
    \label{tab:mix-intent-qwen3-1.7B}
    \scriptsize
    \begin{tabular}{lcc}
      \toprule
      \textbf{Method} & code-first & math-first \\
      \midrule
      \normstat  & 0.2213          & 0.2098          \\
      \vecstat   & \textbf{0.1216} & \textbf{0.0858} \\
      Avg-MLP   & 0.2451          & 0.1764          \\
      Tail-MLP  & 0.2412          & 0.2688          \\
      \bottomrule
    \end{tabular}
  \end{minipage}
  \vspace{-0.7em}
\end{figure}

\subsection{Robustness to Adversarial Attack}\label{sec:adv}

\begin{wraptable}{r}{0.559\linewidth}
\centering
\vspace{-2.4em}
\caption{\small Accuracy on adversarial datasets.}
\label{tab:adv_dataset-short}
\scriptsize
\begin{tabular}{llccc}
\toprule
Model & Method & Easy & Medium & Hard \\
\midrule
{GPT-5-Nano} & LLM Call & 93 & 64 & 34 \\
\revise{GPT-5} & \revise{LLM Call} & \revise{98} & \revise{98} & \revise{64} \\
\midrule 
\multirow{5}{*}{Qwen3-32B}
 & \embedavg      & \ms{11.73}{8.92}  & \ms{0.40}{0.69}  & {0.00} \\
 & \embedlast     & \ms{61.60}{14.67} & \ms{0.93}{0.81}  & {0.20} \\
 \cdashline{2-5}
 & \normstatKL & \textbf{\ms{92.33}{0.12}}  & \textbf{\ms{80.67}{0.12}} & {0.00} \\
 & \vecstatKL  & \ms{70.93}{0.31}  & \ms{32.33}{0.31} & {0.00} \\
 & \vecstatCos & \ms{64.87}{0.58}  & \ms{30.33}{0.64} & {0.00} \\
\midrule
\multirow{5}{*}{Qwen3-8B}
 & \embedavg       & \ms{12.67}{7.82}  & \ms{4.20}{3.30}  & {0.00} \\
 & \embedlast      & \ms{36.67}{31.67} & \ms{1.47}{2.20}  & {0.00} \\
  \cdashline{2-5}
 & \normstatKL & \ms{36.13}{0.50}  & \ms{27.00}{0.40} & {0.00} \\
 & \vecstatKL  & \ms{41.00}{0.69}  & \ms{25.00}{0.35} & {0.00} \\
 & \vecstatCos & \textbf{\ms{78.60}{1.39}}  & \textbf{\ms{54.80}{0.69}} & {0.20} \\
\bottomrule
\end{tabular}
\vspace{-1.4em}
\end{wraptable}

\paragraph{Adversarial Dataset.} 
We create three adversarial variants of MATH500 at increasing levels of camouflage—Easy (lexical), Medium (structural), and Hard (genre-level)~\footnote{\revise{The dataset is available at~\url{https://huggingface.co/datasets/nanchennn/Adv_MATH500}}}. 
Each variant disguises math problems as code-related tasks to misguide the classifier, while remaining recognizable as math problems to human readers. 
For each level, we craft a dedicated prompt (see \cref{tab:adv_prompt_templates}) and call GPT-4o to rewrite the problems accordingly. 
\revise{Furthermore, we use API-based LLM calls to GPT-5-Nano and GPT-5 to sanity-check the intended difficulty stratification.}


\paragraph{Observations.}

\cref{tab:adv_dataset-short} reports accuracy on three adversarial variants of MATH500 for two large-scale models (see \cref{tab:adv_dataset} for full results). 
Performance degrades monotonically from Easy to Medium to Hard across all methods, as expected. 
On the Hard tier, all methods collapse to near-zero accuracy, indicating a shared failure mode.
\revise{In contrast, GPT-5 achieves \(64\%\), suggesting that identifying the underlying mathematical intent despite the bug-report framing requires substantially stronger semantic understanding.}
\revise{Developing lightweight classifiers with comparable robustness represents a promising avenue for future work.}


Training-free methods are more robust than training-based ones. 
On Qwen3-8B and Qwen3-32B, the MLP-based classifiers degrade sharply, while training-free methods retain non-trivial accuracy on the Easy and Medium tiers.
We attribute this to two reasons.
First, for large models, training-free classifiers might require far fewer calibration samples than training-based ones. 
Second, training-based classifiers (MLP heads) learn a discriminative boundary directly from the token-level embedding distribution of calibration data. 
When adversarial rephrasing injects coding vocabulary into a math problem, it shifts the token distribution toward the coding class, causing the learned boundary to shift. 

\subsection{Effect of number of probed layers and prompt length}
In \cref{fig:main_nb_layers}, \vecstat{} demonstrates robust performance 
regardless of layer count, while \normstat{} exhibits dataset-dependent 
behavior, though both achieve competitive accuracy using only the first 12 
layers out of 28. These findings imply that intent classification can be 
performed without completing a full forward pass, substantially saving 
computation costs. As shown in \cref{fig:main_seqlen}, \vecstat{} maintains near-optimal accuracy across prompt lengths from 32 to 512 tokens, whereas \normstat{} is more sensitive to prompt length, plateauing at approximately 128 tokens. 
Additional results are presented in \cref{appendix-nb_layers,appendix:seqlen}.

\begin{figure}[t]
  \centering
  \begin{minipage}[t]{0.46\textwidth}
    \centering
    \includegraphics[width=\linewidth]{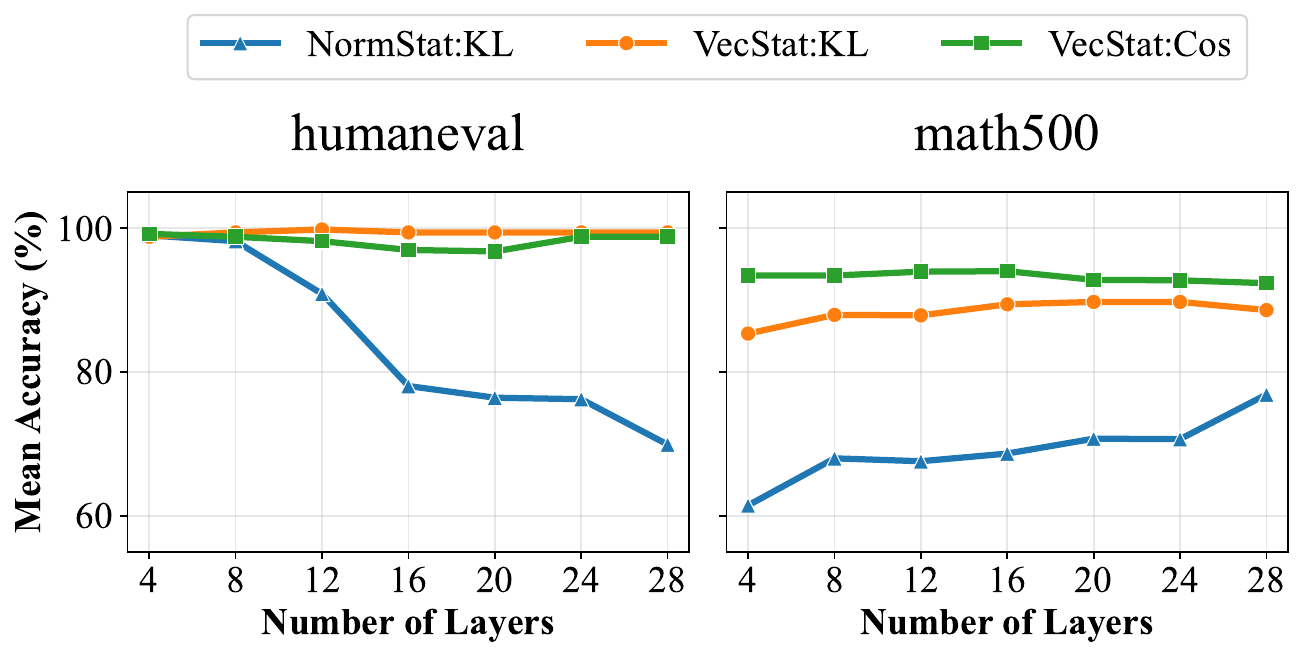}
    \caption{\small Effect of the number of layers on level-1 classification accuracy for Qwen3-1.7B.}
    \label{fig:main_nb_layers}
  \end{minipage}\hfill
  \begin{minipage}[t]{0.46\textwidth}
    \centering
    \includegraphics[width=\linewidth]{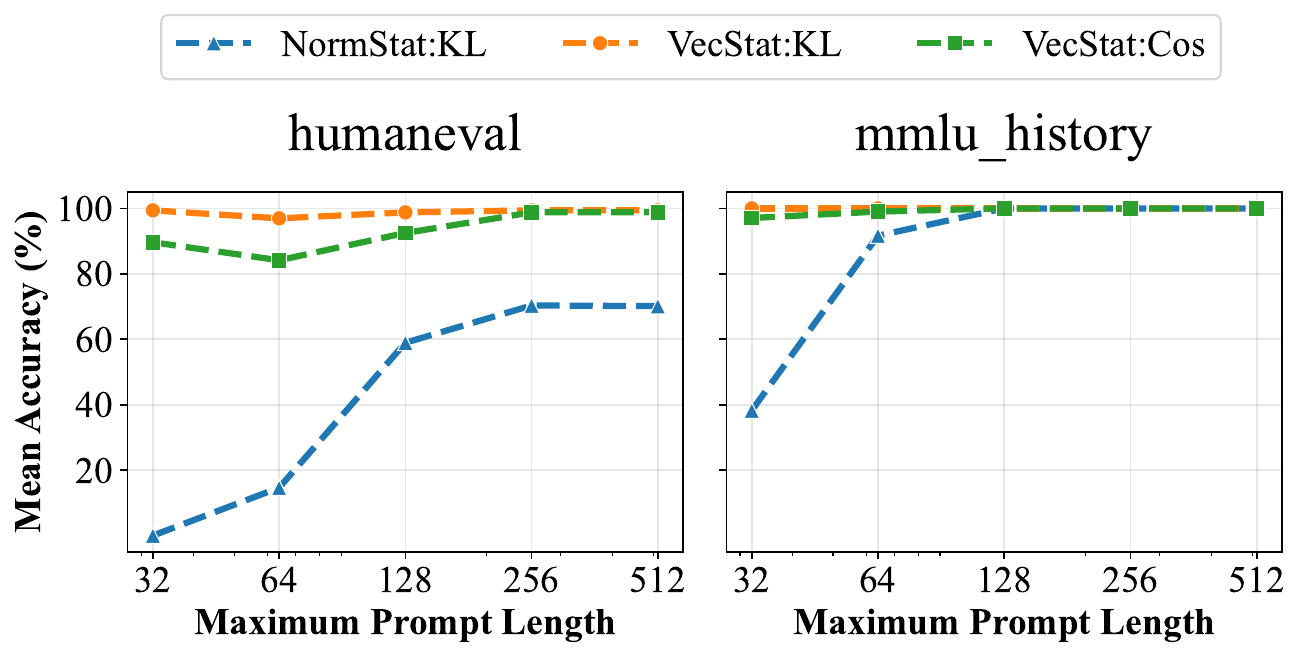}
    \caption{\small Effect of the maximum prompt length on level-1 classification for Qwen3-1.7B.}
    \label{fig:main_seqlen}
  \end{minipage}

  \vspace{-1.0em}
\end{figure}

\subsection{Calibration Analysis}

To validate~\cref{thm:calibration}, we compare the empirical and theoretical convergence rates of \normstat\ and \vecstat\ on the MagiCoder and present the result in~\cref{fig:calibration-convergence}. We vary the calibration sample size from 512 to 32768 and run with different seeds. Our results demonstrate strong match with the theoretical bounds. Both methods show $\tilde O(N^{-0.5})$ rate for the mean error, following the predicted theoretical curves.
Notably, \normstat\ attains much lower absolute errors, which is consistent with its dimension-free bound, whereas \vecstat\ sits higher due to its dimension-dependent bounds.
Calibration results for other LLMs are in~\cref{appendix:calibration}.

\begin{figure}[h]
\centering    
\begin{subfigure}[b]{0.45\textwidth}
\centering
\includegraphics[width=\textwidth]{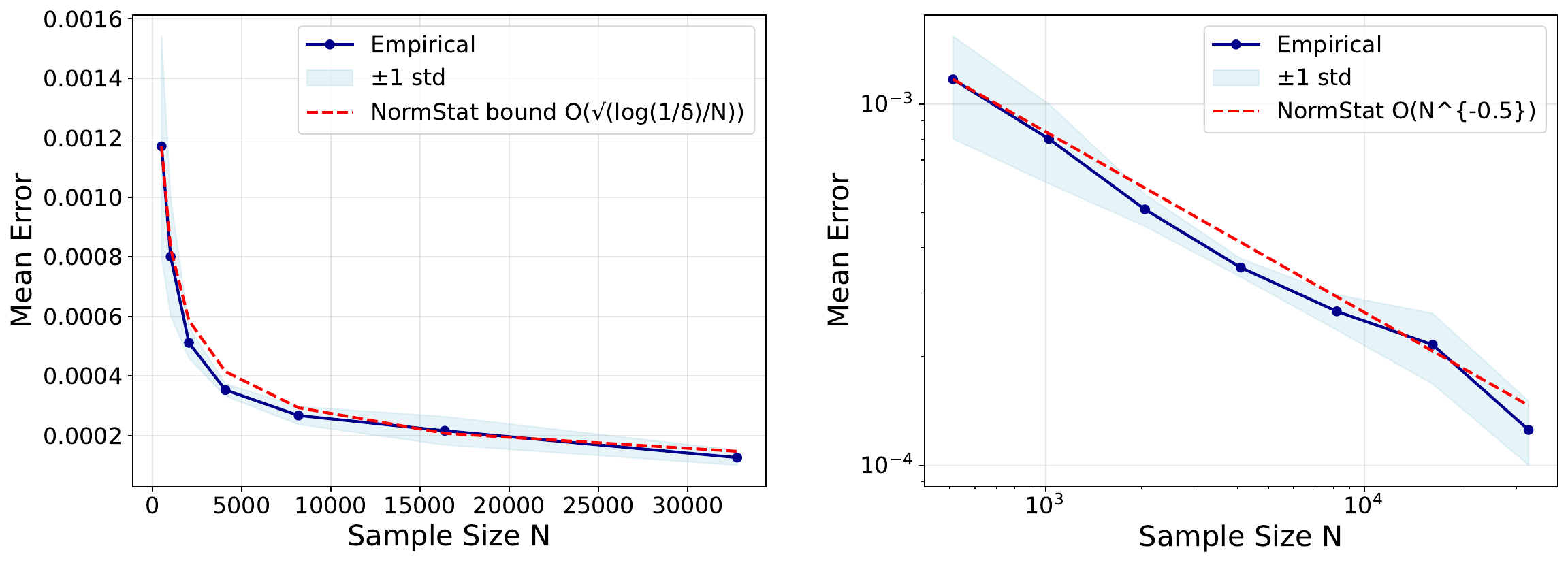}
\caption{\normstat}
\label{fig:qwen-magicoder-norm}
\end{subfigure}
\hfill
\begin{subfigure}[b]{0.45\textwidth}
\centering
\includegraphics[width=\textwidth]{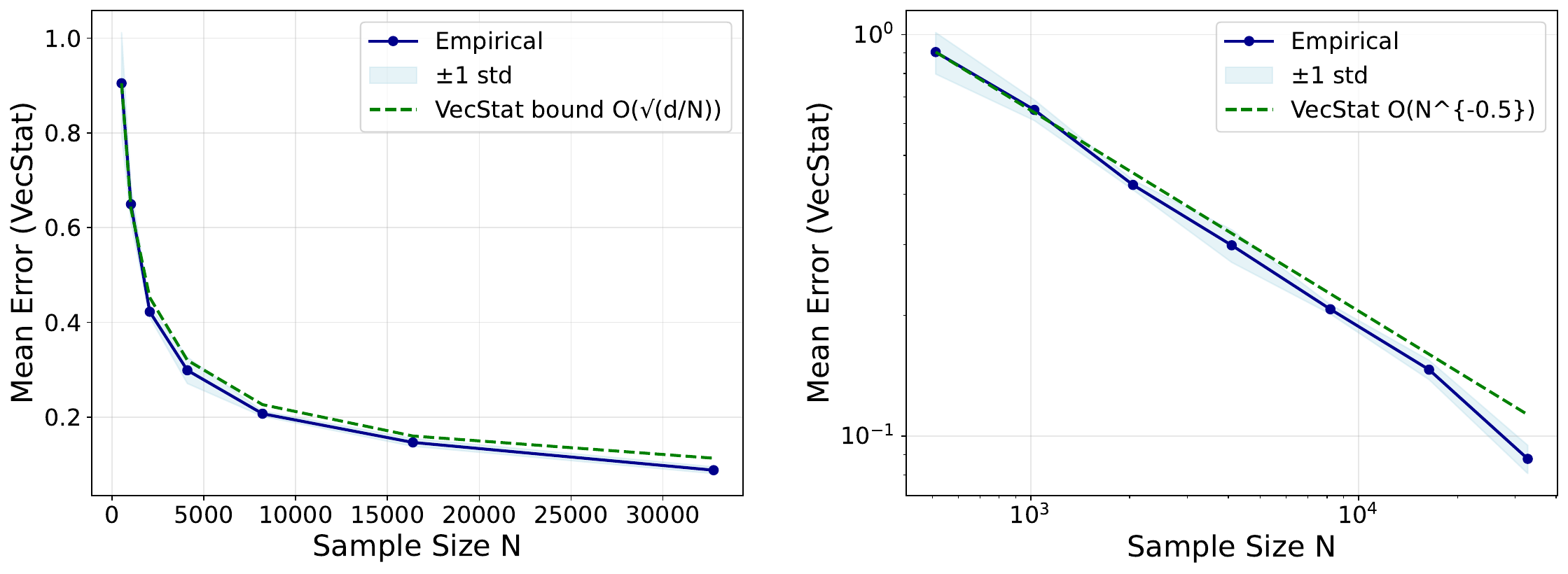}
\caption{\vecstat}
\label{fig:qwen-magicoder-proj}
\end{subfigure}
\caption{\small Calibration convergence analysis for Qwen3-8B on the MagiCoder dataset. Each subplot shows both linear and log-log scales
  comparing empirical results with theoretical bounds. }
\label{fig:calibration-convergence}
\vspace{-0.80em}
\end{figure}

\section{Conclusion}
We propose two training-free methods, \normstat{} and \vecstat{} and show that there is no one-size-fits-all method for intent classification. 
Training-based methods are stronger on fine-grained tasks, while training-free methods remain competitive on coarse-grained tasks and more robust under noisy or ambiguous prompts. 

\newpage

\section*{Acknowledgement}
\revise{
This work used GPU servers at DeltaAI at the National Center for Supercomputing Applications (NCSA) through allocation \#CIS250883 from the Advanced Cyberinfrastructure Coordination Ecosystem: Services \& Support (ACCESS) program, which is supported by U.S. National Science Foundation grants \#2138259, \#2138286, \#2138307, \#2137603, and \#2138296.
NC was funded by the NSF–Simons Research Collaboration on the Mathematical and Scientific Foundations of Deep Learning (MoDL) (NSF
DMS 2031985).}

\bibliographystyle{colm2026_conference}
\bibliography{refs}

@misc{hayou2025ploppreciseloraplacement,
      title={PLoP: Precise LoRA Placement for Efficient Finetuning of Large Models}, 
      author={Soufiane Hayou and Nikhil Ghosh and Bin Yu},
      year={2025},
      eprint={2506.20629},
      archivePrefix={arXiv},
      primaryClass={cs.LG},
      url={https://arxiv.org/abs/2506.20629}, 
}

@inproceedings{he2019using,
  title={Using convolutional neural network with BERT for intent determination},
  author={He, Changai and Chen, Sibao and Huang, Shilei and Zhang, Jian and Song, Xiao},
  booktitle={2019 International Conference on Asian Language Processing (IALP)},
  pages={65--70},
  year={2019},
  organization={IEEE}
}

@inproceedings{hashemi2016query,
  title={Query intent detection using convolutional neural networks},
  author={Hashemi, Homa B and Asiaee, Amir and Kraft, Reiner},
  booktitle={International conference on web search and data mining, workshop on query understanding},
  volume={23},
  year={2016}
}

@article{liu2023pre,
  title={Pre-train, prompt, and predict: A systematic survey of prompting methods in natural language processing},
  author={Liu, Pengfei and Yuan, Weizhe and Fu, Jinlan and Jiang, Zhengbao and Hayashi, Hiroaki and Neubig, Graham},
  journal={ACM computing surveys},
  volume={55},
  number={9},
  pages={1--35},
  year={2023},
  publisher={ACM New York, NY}
}

@inproceedings{hong2024exploring,
  title={Exploring the Use of Natural Language Descriptions of Intents for Large Language Models in Zero-shot Intent Classification},
  author={Hong, Taesuk and Ahn, Youbin and Lee, Dongkyu and Shin, Joongbo and Won, Seungpil and Han, Janghoon and Choi, Stanley Jungkyu and Seo, Jungyun},
  booktitle={Proceedings of the 25th Annual Meeting of the Special Interest Group on Discourse and Dialogue},
  pages={458--465},
  year={2024}
}

@inproceedings{souha2023pre,
  title={Pre-trained models for intent classification in chatbot: Comparative study and critical analysis},
  author={Souha, Adnane and Ouaddi, Charaf and Benaddi, Lamya and Jakimi, Abdeslam},
  booktitle={2023 6th international conference on advanced communication technologies and networking (CommNet)},
  pages={1--6},
  year={2023},
  organization={IEEE}
}

@article{arora2024intent,
  title={Intent detection in the age of llms},
  author={Arora, Gaurav and Jain, Shreya and Merugu, Srujana},
  journal={arXiv preprint arXiv:2410.01627},
  year={2024}
}

@article{turner2023activation,
  title={Activation addition: Steering language models without optimization},
  author={Turner, Alexander Matt and Thiergart, Lisa and Leech, Gavin and Udell, David and Vazquez, Juan J and Mini, Ulisse and MacDiarmid, Monte},
  journal={arXiv e-prints},
  pages={arXiv--2308},
  year={2023}
}

@article{panickssery2023steering,
  title={Steering llama 2 via contrastive activation addition},
  author={Panickssery, Nina and Gabrieli, Nick and Schulz, Julian and Tong, Meg and Hubinger, Evan and Turner, Alexander Matt},
  journal={arXiv preprint arXiv:2312.06681},
  year={2023}
}

@inproceedings{goo2018slot,
  title={Slot-gated modeling for joint slot filling and intent prediction},
  author={Goo, Chih-Wen and Gao, Guang and Hsu, Yun-Kai and Huo, Chih-Li and Chen, Tsung-Chieh and Hsu, Keng-Wei and Chen, Yun-Nung},
  booktitle={Proceedings of the 2018 Conference of the North American Chapter of the Association for Computational Linguistics: Human Language Technologies, Volume 2 (Short Papers)},
  pages={753--757},
  year={2018}
}

@article{chen2019bert,
  title={Bert for joint intent classification and slot filling},
  author={Chen, Qian and Zhuo, Zhu and Wang, Wen},
  journal={arXiv preprint arXiv:1902.10909},
  year={2019}
}

@inproceedings{larson2019evaluation,
  title={An evaluation dataset for intent classification and out-of-scope prediction},
  author={Larson, Stefan and Mahendran, Anish and Peper, Joseph J and Clarke, Christopher and Lee, Andrew and Hill, Parker and Kummerfeld, Jonathan K and Leach, Kevin and Laurenzano, Michael A and Tang, Lingjia and others},
  booktitle={Proceedings of the 2019 Conference on Empirical Methods in Natural Language Processing and the 9th International Joint Conference on Natural Language Processing (EMNLP-IJCNLP)},
  pages={1311--1316},
  year={2019}
}

@article{casanueva2020efficient,
  title={Efficient intent detection with dual sentence encoders},
  author={Casanueva, I{\~n}igo and Tem{\v{c}}inas, Tadas and Gerz, Daniela and Henderson, Matthew and Vuli{\'c}, Ivan},
  journal={arXiv preprint arXiv:2003.04807},
  year={2020}
}

@article{gpt5,
  title={GPT-5 System Card},
  author={OpenAI},
  url={https://cdn.openai.com/gpt-5-system-card.pdf},
  year={2025}
}

@inproceedings{cunningham2024sae,
  title   = {Sparse Autoencoders Find Highly Interpretable Features in Language Models},
  author  = {Cunningham, Henry and Huben, Ryan and others},
  booktitle = {ICLR},
  year    = {2024},
  url     = {https://proceedings.iclr.cc/paper_files/paper/2024/file/1fa1ab11f4bd5f94b2ec20e794dbfa3b-Paper-Conference.pdf}
}

@misc{gao2024scalingSAE,
  title  = {Scaling and Evaluating Sparse Autoencoders},
  author = {Leo Gao and Tom Dupr{\'e} la Tour and Henk Tillman and Gabriel Goh and Rajan Troll and Alec Radford and Ilya Sutskever and Jan Leike and Jeffrey Wu},
  year   = {2024},
  url    = {https://cdn.openai.com/papers/sparse-autoencoders.pdf}
}

@article{bunk2020diet,
  title={Diet: Lightweight language understanding for dialogue systems},
  author={Bunk, Tanja and Varshneya, Daksh and Vlasov, Vladimir and Nichol, Alan},
  journal={arXiv preprint arXiv:2004.09936},
  year={2020}
}

@article{elhage2022toy,
  title={Toy models of superposition},
  author={Elhage, Nelson and Hume, Tristan and Olsson, Catherine and Schiefer, Nicholas and Henighan, Tom and Kravec, Shauna and Hatfield-Dodds, Zac and Lasenby, Robert and Drain, Dawn and Chen, Carol and others},
  journal={arXiv preprint arXiv:2209.10652},
  year={2022}
}

@article{wang2023large,
  title={Large language models are zero-shot text classifiers},
  author={Wang, Zhiqiang and Pang, Yiran and Lin, Yanbin},
  journal={arXiv preprint arXiv:2312.01044},
  year={2023}
}

@inproceedings{aguirre2025fine,
  title={Fine-Tuning Medium-Scale LLMs for Joint Intent Classification and Slot Filling: A Data-Efficient and Cost-Effective Solution for SMEs},
  author={Aguirre, Maia and M{\'e}ndez, Ariane and Del Pozo, Arantza and Torres, Mar{\'\i}a In{\'e}s and Torralbo, Manuel},
  booktitle={Proceedings of the 31st International Conference on Computational Linguistics: Industry Track},
  pages={251--262},
  year={2025}
}

@article{rodriguez2024intentgpt,
  title={Intentgpt: Few-shot intent discovery with large language models},
  author={Rodriguez, Juan A and Botzer, Nicholas and Vazquez, David and Pal, Christopher and Pedersoli, Marco and Laradji, Issam},
  journal={arXiv preprint arXiv:2411.10670},
  year={2024}
}

@article{shazeer2019fast,
  title={Fast transformer decoding: One write-head is all you need},
  author={Shazeer, Noam},
  journal={arXiv preprint arXiv:1911.02150},
  year={2019}
}

@article{ainslie2023gqa,
  title={Gqa: Training generalized multi-query transformer models from multi-head checkpoints},
  author={Ainslie, Joshua and Lee-Thorp, James and De Jong, Michiel and Zemlyanskiy, Yury and Lebr{\'o}n, Federico and Sanghai, Sumit},
  journal={arXiv preprint arXiv:2305.13245},
  year={2023}
}

@article{chang2024palu,
  title={Palu: Compressing kv-cache with low-rank projection},
  author={Chang, Chi-Chih and Lin, Wei-Cheng and Lin, Chien-Yu and Chen, Chong-Yan and Hu, Yu-Fang and Wang, Pei-Shuo and Huang, Ning-Chi and Ceze, Luis and Abdelfattah, Mohamed S and Wu, Kai-Chiang},
  journal={arXiv preprint arXiv:2407.21118},
  year={2024}
}

@article{jie2025specache,
  title={SpeCache: Speculative Key-Value Caching for Efficient Generation of LLMs},
  author={Jie, Shibo and Tang, Yehui and Han, Kai and Deng, Zhi-Hong and Han, Jing},
  journal={arXiv preprint arXiv:2503.16163},
  year={2025}
}

@incollection{lehmann2011completeness,
  title={Completeness, similar regions, and unbiased estimation-Part I},
  author={Lehmann, Erich Leo and Scheff{\'e}, Henry},
  booktitle={Selected works of EL Lehmann},
  pages={233--268},
  year={2011},
  publisher={Springer}
}

@book{lehmann1998theory,
  title={Theory of point estimation},
  author={Lehmann, Erich Leo and Casella, George},
  year={1998},
  publisher={Springer}
}

@article{tang2024pooling,
  title={Pooling and attention: What are effective designs for llm-based embedding models?},
  author={Tang, Yixuan and Yang, Yi},
  journal={arXiv preprint arXiv:2409.02727},
  year={2024}
}

@inproceedings{wang2024improving,
  author={Liang Wang and Nan Yang and Xiaolong Huang and Linjun Yang and Rangan Majumder and Furu Wei},
  title={Improving Text Embeddings with Large Language Models},
  year={2024},
  cdate={1704067200000},
  pages={11897-11916},
  url={https://doi.org/10.18653/v1/2024.acl-long.642},
  booktitle={ACL (1)}
}

@article{meng2024sfrembedding,
  title={Sfrembedding-mistral: enhance text retrieval with transfer learning},
  author={Meng, Rui and Liu, Ye and Joty, Shafiq Rayhan and Xiong, Caiming and Zhou, Yingbo and Yavuz, Semih},
  journal={Salesforce AI Research Blog},
  volume={3},
  pages={6},
  year={2024}
}

@article{neelakantan2022text,
  title={Text and code embeddings by contrastive pre-training},
  author={Neelakantan, Arvind and Xu, Tao and Puri, Raul and Radford, Alec and Han, Jesse Michael and Tworek, Jerry and Yuan, Qiming and Tezak, Nikolas and Kim, Jong Wook and Hallacy, Chris and others},
  journal={arXiv preprint arXiv:2201.10005},
  year={2022}
}

@inproceedings{muennighoff2024generative,
  title={Generative representational instruction tuning},
  author={Muennighoff, Niklas and Su, Hongjin and Wang, Liang and Yang, Nan and Wei, Furu and Yu, Tao and Singh, Amanpreet and Kiela, Douwe},
  booktitle={International Conference on Learning Representations},
  volume={2025},
  pages={45544--45613},
  year={2025}
}

@article{behnamghader2024llm2vec,
  title={Llm2vec: Large language models are secretly powerful text encoders},
  author={BehnamGhader, Parishad and Adlakha, Vaibhav and Mosbach, Marius and Bahdanau, Dzmitry and Chapados, Nicolas and Reddy, Siva},
  journal={arXiv preprint arXiv:2404.05961},
  year={2024}
}

@article{muennighoff2022sgpt,
  title={Sgpt: Gpt sentence embeddings for semantic search},
  author={Muennighoff, Niklas},
  journal={arXiv preprint arXiv:2202.08904},
  year={2022}
}

@inproceedings{lee2024nv,
  title={Nv-embed: Improved techniques for training llms as generalist embedding models},
  author={Lee, Chankyu and Roy, Rajarshi and Xu, Mengyao and Raiman, Jonathan and Shoeybi, Mohammad and Catanzaro, Bryan and Ping, Wei},
  booktitle={International Conference on Learning Representations},
  volume={2025},
  pages={79310--79333},
  year={2025}
}

@inproceedings{ma2024fine,
  title={Fine-tuning llama for multi-stage text retrieval},
  author={Ma, Xueguang and Wang, Liang and Yang, Nan and Wei, Furu and Lin, Jimmy},
  booktitle={Proceedings of the 47th International ACM SIGIR Conference on Research and Development in Information Retrieval},
  pages={2421--2425},
  year={2024}
}

@article{wei2022chain,
  title={Chain-of-thought prompting elicits reasoning in large language models},
  author={Wei, Jason and Wang, Xuezhi and Schuurmans, Dale and Bosma, Maarten and Xia, Fei and Chi, Ed and Le, Quoc V and Zhou, Denny and others},
  journal={Advances in neural information processing systems},
  volume={35},
  pages={24824--24837},
  year={2022}
}

@article{li2022competition,
  title={Competition-Level Code Generation with {AlphaCode}},
  author       = {Yujia Li and
                  David H. Choi and
                  Junyoung Chung and
                  Nate Kushman and
                  Julian Schrittwieser and
                  R{\'{e}}mi Leblond and
                  Tom Eccles and
                  James Keeling and
                  Felix Gimeno and
                  Agustin Dal Lago and
                  Thomas Hubert and
                  Peter Choy and
                  Cyprien de Masson d'Autume and
                  Igor Babuschkin and
                  Xinyun Chen and
                  Po{-}Sen Huang and
                  Johannes Welbl and
                  Sven Gowal and
                  Alexey Cherepanov and
                  James Molloy and
                  Daniel J. Mankowitz and
                  Esme Sutherland Robson and
                  Pushmeet Kohli and
                  Nando de Freitas and
                  Koray Kavukcuoglu and
                  Oriol Vinyals},
  journal={Science},
  volume={378},
  number={6624},
  pages={1092--1097},
  year={2022},
  publisher={American Association for the Advancement of Science}
}

@article{guo2024deepseek,
  title={DeepSeek-Coder: When the Large Language Model Meets Programming--The Rise of Code Intelligence},
  author={Guo, Daya and Zhu, Qihao and Yang, Dejian and Xie, Zhenda and Dong, Kai and Zhang, Wentao and Chen, Guanting and Bi, Xiao and Wu, Yu and Li, YK and others},
  journal={arXiv preprint arXiv:2401.14196},
  year={2024}
}

@article{zhu2024deepseek,
  title={Deepseek-coder-v2: Breaking the barrier of closed-source models in code intelligence},
  author={Zhu, Qihao and Guo, Daya and Shao, Zhihong and Yang, Dejian and Wang, Peiyi and Xu, Runxin and Wu, Y and Li, Yukun and Gao, Huazuo and Ma, Shirong and others},
  journal={arXiv preprint arXiv:2406.11931},
  year={2024}
}

@article{yao2023tree,
  title={Tree of thoughts: Deliberate problem solving with large language models},
  author={Yao, Shunyu and Yu, Dian and Zhao, Jeffrey and Shafran, Izhak and Griffiths, Tom and Cao, Yuan and Narasimhan, Karthik},
  journal={Advances in neural information processing systems},
  volume={36},
  pages={11809--11822},
  year={2023}
}

@inproceedings{gao2023pal,
  title={Pal: Program-aided language models},
  author={Gao, Luyu and Madaan, Aman and Zhou, Shuyan and Alon, Uri and Liu, Pengfei and Yang, Yiming and Callan, Jamie and Neubig, Graham},
  booktitle={International Conference on Machine Learning},
  pages={10764--10799},
  year={2023},
  organization={PMLR}
}

@article{bocklisch2017rasa,
  title={Rasa: Open source language understanding and dialogue management},
  author={Bocklisch, Tom and Faulkner, Joey and Pawlowski, Nick and Nichol, Alan},
  journal={arXiv preprint arXiv:1712.05181},
  year={2017}
}

@inproceedings{banerjee2025llms,
  title={Llms will always hallucinate, and we need to live with this},
  author={Banerjee, Sourav and Agarwal, Ayushi and Singla, Saloni},
  booktitle={Intelligent Systems Conference},
  pages={624--648},
  year={2025},
  organization={Springer}
}

@article{bang2023multitask,
  title={A multitask, multilingual, multimodal evaluation of chatgpt on reasoning, hallucination, and interactivity},
  author={Bang, Yejin and Cahyawijaya, Samuel and Lee, Nayeon and Dai, Wenliang and Su, Dan and Wilie, Bryan and Lovenia, Holy and Ji, Ziwei and Yu, Tiezheng and Chung, Willy and others},
  journal={arXiv preprint arXiv:2302.04023},
  year={2023}
}

@article{yang2025qwen3,
  title={Qwen3 technical report},
  author={Yang, An and Li, Anfeng and Yang, Baosong and Zhang, Beichen and Hui, Binyuan and Zheng, Bo and Yu, Bowen and Gao, Chang and Huang, Chengen and Lv, Chenxu and others},
  journal={arXiv preprint arXiv:2505.09388},
  year={2025}
}

@article{dubey2024llama,
  title={The llama 3 herd of models},
  author={Dubey, Abhimanyu and Jauhri, Abhinav and Pandey, Abhinav and Kadian, Abhishek and Al-Dahle, Ahmad and Letman, Aiesha and Mathur, Akhil and Schelten, Alan and Yang, Amy and Fan, Angela and others},
  journal={arXiv e-prints},
  pages={arXiv--2407},
  year={2024}
}

@article{hendryckstest2021,
      title={Measuring Massive Multitask Language Understanding},
      author={Dan Hendrycks and Collin Burns and Steven Basart and Andy Zou and Mantas Mazeika and Dawn Song and Jacob Steinhardt},
      journal={Proceedings of the International Conference on Learning Representations (ICLR)},
      year={2021}
    }

@article{cobbe2021gsm8k,
  title={Training Verifiers to Solve Math Word Problems},
  author={Cobbe, Karl and Kosaraju, Vineet and Bavarian, Mohammad and Chen, Mark and Jun, Heewoo and Kaiser, Lukasz and Plappert, Matthias and Tworek, Jerry and Hilton, Jacob and Nakano, Reiichiro and Hesse, Christopher and Schulman, John},
  journal={arXiv preprint arXiv:2110.14168},
  year={2021}
}

@article{lightman2023lets,
      title={Let's Verify Step by Step}, 
      author={Lightman, Hunter and Kosaraju, Vineet and Burda, Yura and Edwards, Harrison and Baker, Bowen and Lee, Teddy and Leike, Jan and Schulman, John and Sutskever, Ilya and Cobbe, Karl},
      journal={arXiv preprint arXiv:2305.20050},
      year={2023}
}

@article{hendrycksmath2021,
    title={Measuring Mathematical Problem Solving With the MATH Dataset},
    author={Dan Hendrycks
    and Collin Burns
    and Saurav Kadavath
    and Akul Arora
    and Steven Basart
    and Eric Tang
    and Dawn Song
    and Jacob Steinhardt},
    journal={arXiv preprint arXiv:2103.03874},
    year={2021}
}

@article{wei2023magicoder,
  title={Magicoder: Empowering code generation with oss-instruct},
  author={Wei, Yuxiang and Wang, Zhe and Liu, Jiawei and Ding, Yifeng and Zhang, Lingming},
  journal={arXiv preprint arXiv:2312.02120},
  year={2023}
}

@misc{chen2021evaluating,
      title={Evaluating Large Language Models Trained on Code},
      author={Mark Chen and Jerry Tworek and Heewoo Jun and Qiming Yuan and Henrique Ponde de Oliveira Pinto and Jared Kaplan and Harri Edwards and Yuri Burda and Nicholas Joseph and Greg Brockman and Alex Ray and Raul Puri and Gretchen Krueger and Michael Petrov and Heidy Khlaaf and Girish Sastry and Pamela Mishkin and Brooke Chan and Scott Gray and Nick Ryder and Mikhail Pavlov and Alethea Power and Lukasz Kaiser and Mohammad Bavarian and Clemens Winter and Philippe Tillet and Felipe Petroski Such and Dave Cummings and Matthias Plappert and Fotios Chantzis and Elizabeth Barnes and Ariel Herbert-Voss and William Hebgen Guss and Alex Nichol and Alex Paino and Nikolas Tezak and Jie Tang and Igor Babuschkin and Suchir Balaji and Shantanu Jain and William Saunders and Christopher Hesse and Andrew N. Carr and Jan Leike and Josh Achiam and Vedant Misra and Evan Morikawa and Alec Radford and Matthew Knight and Miles Brundage and Mira Murati and Katie Mayer and Peter Welinder and Bob McGrew and Dario Amodei and Sam McCandlish and Ilya Sutskever and Wojciech Zaremba},
      year={2021},
      eprint={2107.03374},
      archivePrefix={arXiv},
      primaryClass={cs.LG}
}

@misc{singh2024aya,
      title={Aya Dataset: An Open-Access Collection for Multilingual Instruction Tuning}, 
      author={Shivalika Singh and Freddie Vargus and Daniel Dsouza and Börje F. Karlsson and Abinaya Mahendiran and Wei-Yin Ko and Herumb Shandilya and Jay Patel and Deividas Mataciunas and Laura OMahony and Mike Zhang and Ramith Hettiarachchi and Joseph Wilson and Marina Machado and Luisa Souza Moura and Dominik Krzemiński and Hakimeh Fadaei and Irem Ergün and Ifeoma Okoh and Aisha Alaagib and Oshan Mudannayake and Zaid Alyafeai and Vu Minh Chien and Sebastian Ruder and Surya Guthikonda and Emad A. Alghamdi and Sebastian Gehrmann and Niklas Muennighoff and Max Bartolo and Julia Kreutzer and Ahmet Üstün and Marzieh Fadaee and Sara Hooker},
      year={2024},
      eprint={2402.06619},
      archivePrefix={arXiv},
      primaryClass={cs.CL}
}

@inproceedings{lhoest-etal-2021-datasets,
    title = "Datasets: A Community Library for Natural Language Processing",
    author = "Lhoest, Quentin  and
      Villanova del Moral, Albert  and
      Jernite, Yacine  and
      Thakur, Abhishek  and
      von Platen, Patrick  and
      Patil, Suraj  and
      Chaumond, Julien  and
      Drame, Mariama  and
      Plu, Julien  and
      Tunstall, Lewis  and
      Davison, Joe  and
      {\v{S}}a{\v{s}}ko, Mario  and
      Chhablani, Gunjan  and
      Malik, Bhavitvya  and
      Brandeis, Simon  and
      Le Scao, Teven  and
      Sanh, Victor  and
      Xu, Canwen  and
      Patry, Nicolas  and
      McMillan-Major, Angelina  and
      Schmid, Philipp  and
      Gugger, Sylvain  and
      Delangue, Cl{\'e}ment  and
      Matussi{\`e}re, Th{\'e}o  and
      Debut, Lysandre  and
      Bekman, Stas  and
      Cistac, Pierric  and
      Goehringer, Thibault  and
      Mustar, Victor  and
      Lagunas, Fran{\c{c}}ois  and
      Rush, Alexander  and
      Wolf, Thomas",
    booktitle = "Proceedings of the 2021 Conference on Empirical Methods in Natural Language Processing: System Demonstrations",
    month = nov,
    year = "2021",
    address = "Online and Punta Cana, Dominican Republic",
    publisher = "Association for Computational Linguistics",
    url = "https://aclanthology.org/2021.emnlp-demo.21",
    pages = "175--184",
    eprint={2109.02846},
    archivePrefix={arXiv},
    primaryClass={cs.CL},
}

@article{hendrycks2021ethics,
      title={Aligning AI With Shared Human Values},
      author={Dan Hendrycks and Collin Burns and Steven Basart and Andrew Critch and Jerry Li and Dawn Song and Jacob Steinhardt},
      journal={Proceedings of the International Conference on Learning Representations (ICLR)},
      year={2021}
    }

@inproceedings{Jiang2024scaling,
  author={Ting Jiang and Shaohan Huang and Zhongzhi Luan and Deqing Wang and Fuzhen Zhuang},
  title={Scaling Sentence Embeddings with Large Language Models},
  year={2024},
  cdate={1704067200000},
  pages={3182-3196},
  url={https://aclanthology.org/2024.findings-emnlp.181},
  booktitle={EMNLP (Findings)}
}

@article{tao2024llms,
  title={Llms are also effective embedding models: An in-depth overview},
  author={Tao, Chongyang and Shen, Tao and Gao, Shen and Zhang, Junshuo and Li, Zhen and Hua, Kai and Hu, Wenpeng and Tao, Zhengwei and Ma, Shuai},
  journal={arXiv preprint arXiv:2412.12591},
  year={2024}
}

@article{nie2024text,
  title={When text embedding meets large language model: a comprehensive survey},
  author={Nie, Zhijie and Feng, Zhangchi and Li, Mingxin and Zhang, Cunwang and Zhang, Yanzhao and Long, Dingkun and Zhang, Richong},
  journal={arXiv preprint arXiv:2412.09165},
  year={2024}
}

@article{alain2016understanding,
  title={Understanding intermediate layers using linear classifier probes},
  author={Alain, Guillaume and Bengio, Yoshua},
  journal={arXiv preprint arXiv:1610.01644},
  year={2016}
}

@inproceedings{hewitt2019structural,
  title={A structural probe for finding syntax in word representations},
  author={Hewitt, John and Manning, Christopher D},
  booktitle={Proceedings of the 2019 Conference of the North American Chapter of the Association for Computational Linguistics: Human Language Technologies, Volume 1 (Long and Short Papers)},
  pages={4129--4138},
  year={2019}
}

@article{jitkrittum2025universal,
  title={Universal model routing for efficient llm inference},
  author={Jitkrittum, Wittawat and Narasimhan, Harikrishna and Rawat, Ankit Singh and Juneja, Jeevesh and Wang, Congchao and Wang, Zifeng and Go, Alec and Lee, Chen-Yu and Shenoy, Pradeep and Panigrahy, Rina and others},
  journal={arXiv preprint arXiv:2502.08773},
  year={2025}
}

@article{dekoninck2024unified,
  title={A unified approach to routing and cascading for llms},
  author={Dekoninck, Jasper and Baader, Maximilian and Vechev, Martin},
  journal={arXiv preprint arXiv:2410.10347},
  year={2024}
}

@article{somerstep2025carrot,
  title={Carrot: A cost aware rate optimal router},
  author={Somerstep, Seamus and Polo, Felipe Maia and de Oliveira, Allysson Flavio Melo and Mangal, Prattyush and Silva, M{\'\i}rian and Bhardwaj, Onkar and Yurochkin, Mikhail and Maity, Subha},
  journal={arXiv preprint arXiv:2502.03261},
  year={2025}
}

@article{stripelis2024tensoropera,
  title={Tensoropera router: A multi-model router for efficient llm inference},
  author={Stripelis, Dimitris and Hu, Zijian and Zhang, Jipeng and Xu, Zhaozhuo and Shah, Alay Dilipbhai and Jin, Han and Yao, Yuhang and Avestimehr, Salman and He, Chaoyang},
  journal={arXiv preprint arXiv:2408.12320},
  year={2024}
}

@article{feng2024graphrouter,
  title={Graphrouter: A graph-based router for llm selections},
  author={Feng, Tao and Shen, Yanzhen and You, Jiaxuan},
  journal={arXiv preprint arXiv:2410.03834},
  year={2024}
}

@article{hari2023tryage,
  title={Tryage: Real-time, intelligent routing of user prompts to large language models},
  author={Hari, Surya Narayanan and Thomson, Matt},
  journal={arXiv preprint arXiv:2308.11601},
  year={2023}
}

@article{jiang2023llm,
  title={Llm-blender: Ensembling large language models with pairwise ranking and generative fusion},
  author={Jiang, Dongfu and Ren, Xiang and Lin, Bill Yuchen},
  journal={arXiv preprint arXiv:2306.02561},
  year={2023}
}

@article{wang2023fusing,
  title={Fusing models with complementary expertise},
  author={Wang, Hongyi and Polo, Felipe Maia and Sun, Yuekai and Kundu, Souvik and Xing, Eric and Yurochkin, Mikhail},
  journal={arXiv preprint arXiv:2310.01542},
  year={2023}
}

@article{aggarwal2023automix,
  title={Automix: Automatically mixing language models},
  author={Aggarwal, Pranjal and Madaan, Aman and Anand, Ankit and Potharaju, Srividya Pranavi and Mishra, Swaroop and Zhou, Pei and Gupta, Aditya and Rajagopal, Dheeraj and Kappaganthu, Karthik and Yang, Yiming and others},
  journal={arXiv preprint arXiv:2310.12963},
  year={2023}
}

@article{chen2023frugalgpt,
  title={Frugalgpt: How to use large language models while reducing cost and improving performance},
  author={Chen, Lingjiao and Zaharia, Matei and Zou, James},
  journal={arXiv preprint arXiv:2305.05176},
  year={2023}
}

@article{yue2023large,
  title={Large language model cascades with mixture of thoughts representations for cost-efficient reasoning},
  author={Yue, Murong and Zhao, Jie and Zhang, Min and Du, Liang and Yao, Ziyu},
  journal={arXiv preprint arXiv:2310.03094},
  year={2023}
}

@article{zhao2024eagle,
  title={Eagle: Efficient training-free router for multi-llm inference},
  author={Zhao, Zesen and Jin, Shuowei and Mao, Z Morley},
  journal={arXiv preprint arXiv:2409.15518},
  year={2024}
}

@article{wu2025efficient,
  title={Efficient Training-Free Online Routing for High-Volume Multi-LLM Serving},
  author={Wu, Fangzhou and Silwal, Sandeep},
  journal={arXiv preprint arXiv:2509.02718},
  year={2025}
}

\newpage
\appendix
\section{Additional Discussion}
\label{app:add_discussion}
\subsection{Training/Calibration Cost}
\label{app:training_cost}

\begin{wrapfigure}[14]{r}{0.4\textwidth} 
\vspace{-1em}
\centering
\includegraphics[width=\linewidth]{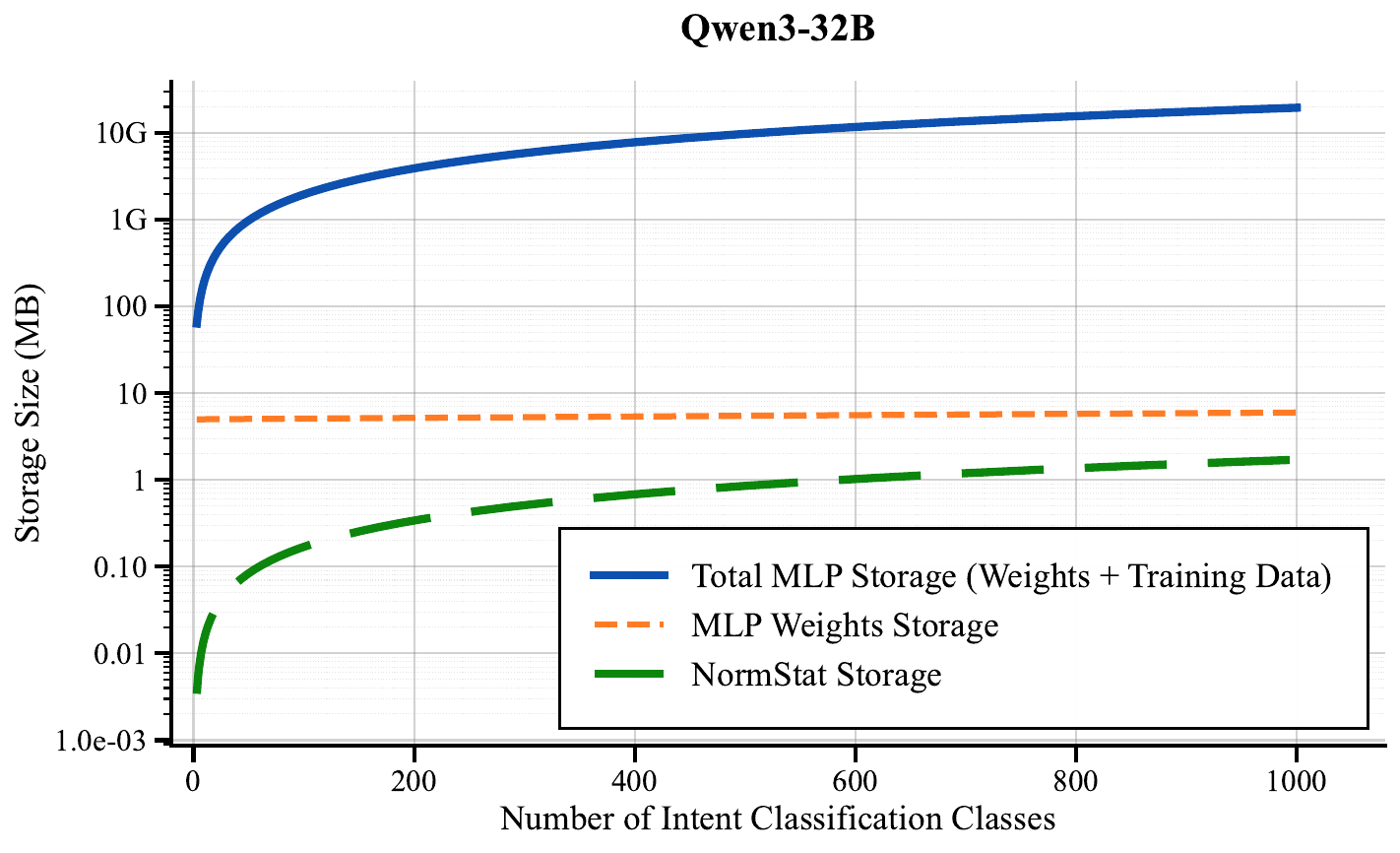}
    \caption{\small Storage simulation for intent classification on Qwen3-32B as the number of classes increases to 1,000.}
    \label{fig:storage_simulation}
      \vspace{-2em}
\end{wrapfigure}

Both training-based and training-free methods add negligible latency to intent classification and their computational overhead is minimal compared to the LLM prefill phase. The fundamental difference lies in how adaptable these methods are with evolving classification targets. When adding a new class, training-free methods maintain constant cost: \emph{computing statistics for the new class data}. On the contrary, training-based methods face a trade-off: either (i) regenerate training embeddings for all classes (both existing and new) from scratch to enable retraining—a process whose cost grows linearly with the total number of classes, or (ii) cache all training embeddings to enable rapid retraining at the expense of storage. \cref{fig:storage_simulation} quantifies the storage implications of option (ii) for Qwen3-32B: as the number of classes gradually increases to $1,000$, the total storage requirement reaches approximately $20$GB (primarily for cached embeddings), although the MLP weights alone require only $6$MB. By comparison, \normstat{} requires only 1.7MB for the same $1,000$ classes—a difference of over an order of magnitude. As a result, \emph{training-free methods are particularly well-suited for dynamic intent classification} systems where new classes are frequently added or removed.

\subsection{Inference-Time Runtime Cost}
\revise{
Let \(T\), \(d\), and \(m\) denote the prompt length, LLM hidden width, and number of intent classes, respectively. 
Given the prefill features, the inference-time costs of the two methods are
\[
\normstat = O(Td + m),
\qquad
\vecstat = O(Td + md).
\]
For both methods, the \(O(Td)\) term computes the prompt statistics. 
\normstat{} then performs \(m\) scalar class comparisons, whereas \vecstat{} compares \(d\)-dimensional coordinate-wise statistics against each class.
When \(m \ll T\), the shared \(O(Td)\) term dominates both methods, so \normstat{} and \vecstat{} have similar inference costs. 
In the opposite regime \(m \gg T\),
\vecstat{}'s \(O(md)\) scoring term dominates its statistic-computation cost, and its class-scoring cost is roughly a factor of \(d\) larger than \normstat's \(O(m)\) scoring cost.
Nevertheless, in typical routing settings where \(m \ll T(d+T)\), \vecstat{}'s scoring overhead remains small relative to the LLM prefill cost, which is \(\Theta(Td^2 + T^2d)\). 
When $m$ becomes very large, \normstat{} is preferable because its scoring cost scales only as \(O(m)\).
}

\subsection{When are training-free methods preferable?}
\label{app:training_free_discussion}
\begin{itemize}[leftmargin=1.6em]
    \item High-throughput, multi-tenant systems. A provider may host a large number of routers (per product, per customer, or per domain), where the intent label space evolves over time as new tools, experts, or domains are introduced. In such settings, every change in the class set would require retraining an MLP head, whereas VecStat/NormStat only require adding or removing per-class statistics—keeping the adaptation cost essentially constant as the system scales.
    \item Untrusted or safety-critical deployments. In systems that must handle untrusted inputs (e.g. public-facing assistants), reliable uncertainty quantification is crucial for detecting malicious or out-of-distribution prompts, and human interpretability is needed for post-hoc audits (e.g., to assess potential fairness issues). Our training-free methods directly expose calibrated per-class statistics, which can be inspected and monitored without the additional modeling and engineering complexity required to obtain well-calibrated uncertainty estimates from MLP heads. 
\end{itemize}

\subsection{Limitations in Experiment Design}
\revise{Our evaluation studies routing an input sequence at a fixed decision point and relies primarily on public benchmark datasets, since realistic production routing logs are confidential and generally unavailable. 
Our mixed-intent dataset introduces controlled ambiguity by varying the ratio and ordering of math and code content, but it does not capture the full range of underspecified, conversational, domain-overlapping, or history-dependent requests encountered in deployment. 
Although the routed sequence could in principle be a single prompt, a concatenated conversation history, or a summarized
dialogue state, our experiments instantiate it primarily with single-turn prompts and do not evaluate dynamic re-routing as a conversation evolves. 
Such re-routing may require a newly selected downstream model to process the full conversation history, adding prefill cost and latency. 
Nonetheless, evaluating these methods on realistic routing traffic and multi-turn conversations, together with the resulting trade-offs among accuracy, cost, and latency,
remains important future work.}

\section{Additional Theoretical Analysis}\label{app:theory}

\subsection{Two endpoints on the compression ladder}
For each class $k\in[m]$, per-class baselines are computed at the same module $W_\ell$ on calibration data.
We compare the two methods in terms of FLOPs and memory.

\paragraph{\vecstat.}
\textbf{Method 1:}
Compute $(S_{\mathrm{vec}},Q_{\mathrm{vec}})$ as in \eqref{eq:vecstat}.
With per-class parameters $(\mu_k,\Sigma_k=\mathrm{Diag}(\sigma_{k,1}^2,\ldots,\sigma_{k,d}^2))$.
The log-likelihood ratio (LLR) between classes $i$ and $j$ is
\begin{align}
\label{eq:LLR}
\log\frac{p_i(Y)}{p_j(Y)}
=-\frac{T}{2}\sum_{m=1}^d
\Bigg[
\log\frac{\sigma_{i,m}^2}{\sigma_{j,m}^2}
+\frac{Q_{\mathrm{vec},m}-2\mu_{i,m}S_{\mathrm{vec},m}+\mu_{i,m}^2}{\sigma_{i,m}^2}
-\frac{Q_{\mathrm{vec},m}-2\mu_{j,m}S_{\mathrm{vec},m}+\mu_{j,m}^2}{\sigma_{j,m}^2}
\Bigg],
\end{align}
which is equivalently the average of coordinate-wise Gaussian KLs (since $\Sigma_k$ is diagonal): 
\begin{align}
 \sum_{m=1}^d \KL\big(\Normal(\mu_{i,m},\sigma_{i,m}^2)\,\|\Normal(\mu_{j,m},\sigma_{j,m}^2)\big) = \frac{1}{2}\sum_{m=1}^d \left[
\log\frac{\sigma_{j,m}^2}{\sigma_{i,m}^2}
+ \frac{\sigma_{i,m}^2 + (\mu_{i,m}-\mu_{j,m})^2}{\sigma_{j,m}^2}
- 1
\right]. \label{eq:vecstat-kl-sum}
\end{align}

\noindent\textbf{Method 2:}
Using $S_{\mathrm{vec}}$, classify via
\[
\mathrm{cos}_k(S_{\mathrm{vec}},\mu_k)=\frac{\langle S_{\mathrm{vec}},\mu_k\rangle}{\|S_{\mathrm{vec}}\|\|\mu_k\|},
\qquad
\hat b=\arg\max_{k\in[m]} \ \mathrm{cos}_k(S_{\mathrm{vec}},\mu_k).
\]

\noindent\textbf{Costs:}
\emph{Per-token compute:} $\Theta(d)$.
\emph{Prompt-state:} $O(d)$.
\emph{Baseline storage:} $O(md)$ numbers.
(If only cosine scoring is used, $Q_{\mathrm{vec}}$ need not be stored.)

\medskip
\paragraph{\normstat.}
\textbf{Method:}
Compute $(S_{\text{norm}},Q_{\text{norm}})$ as in \eqref{eq:normstat}, then compare
$(S_{\text{norm}},Q_{\text{norm}})$ to per-class baselines $(\mu_{x,k},\sigma_{x,k}^2)$ via a 1D Gaussian KL:
\begin{align}
\KL\big(\Normal(\mu_{x,i},\sigma_{x,i}^2)\|\Normal(\mu_{x,j},\sigma_{x,j}^2)\big)
= \frac{1}{2}\left[
\log\frac{\sigma_{x,j}^2}{\sigma_{x,i}^2}
+ \frac{\sigma_{x,i}^2 + (\mu_{x,i}-\mu_{x,j})^2}{\sigma_{x,j}^2}
- 1
\right]. \label{eq:normstat-kl-1d}
\end{align}

\noindent\textbf{Cost.}
\emph{Per-token compute:} $\Theta(d)$.
\emph{Prompt-state:} $O(1)$.
\emph{Baseline storage:} $O(m)$ scalars.



\subsection{Sufficiency}
We establish the minimal sufficiency of $(S_{\mathrm{vec}},Q_{\mathrm{vec}})$ for the diagonal–Gaussian model, which is a classical result, see \citep{lehmann1998theory,lehmann2011completeness}. Intuitively, a sufficient statistic is a lossless compression for inference about the unknown class/parameters:
once $(S_{\mathrm{vec}},Q_{\mathrm{vec}})$ is known, the raw sample $Y$ contains no further information.
Minimal sufficiency means no additional compression is possible without losing information—every other
sufficient statistic is a measurable function of $(S_{\mathrm{vec}},Q_{\mathrm{vec}})$.

\begin{lemma}
\label{lemma:sufficiency}
Under \eqref{eq:model} with diagonal $\Sigma_k$, the pair $(S_{\mathrm{vec}},Q_{\mathrm{vec}})$ is a \emph{minimal sufficient statistic}
for the family $\{\normal(\mu_k,\Sigma_k)^{\otimes T}\}$, and the class LLR
\eqref{eq:LLR}
depends on the data only through $(S_{\mathrm{vec}},Q_{\mathrm{vec}})$.
\end{lemma}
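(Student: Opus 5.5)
We will prove the lemma by a direct factorization argument, using the exponential-family structure of the diagonal Gaussian. The first step is to write the joint density of $Y=(y_1,\dots,y_T)$ under $\normal(\mu,\Sigma)^{\otimes T}$ with $\Sigma=\mathrm{Diag}(\sigma_1^2,\dots,\sigma_d^2)$ as a product over coordinates. For each coordinate we then use the elementary identity
\[
\sum_{t=1}^T (y_{t,m}-\mu_m)^2=(T-1)Q_{\mathrm{vec},m}+T\,(S_{\mathrm{vec},m}-\mu_m)^2 .
\]
This expresses the log-density as $-\tfrac{T}{2}\sum_m\log(2\pi\sigma_m^2)-\sum_m\frac{(T-1)Q_{\mathrm{vec},m}+T(S_{\mathrm{vec},m}-\mu_m)^2}{2\sigma_m^2}$. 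That expression depends on $Y$ only through $(S_{\mathrm{vec}},Q_{\mathrm{vec}})$, so the Fisher--Neyman factorization theorem (with $h\equiv 1$) gives sufficiency.

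The LLR claim follows immediately. We subtract the log-densities for classes $i$ and $j$, the $2\pi$ terms cancel, and what remains is a function of $(S_{\mathrm{vec}},Q_{\mathrm{vec}})$ and the class parameters alone. Expanding $(S-\mu)^2$ recovers the form displayed in \eqref{eq:LLR}, up to the paper's normalization of the quadratic term: there, $Q_{\mathrm{vec},m}$ stands in for the second-moment combination $\frac{T-1}{T}Q_{\mathrm{vec},m}+S_{\mathrm{vec},m}^2$. We will note this bookkeeping explicitly rather than re-derive the constants.

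For minimality we use the Lehmann--Scheff\'e criterion: a statistic $U$ is minimal sufficient if, for any two samples $Y,Y'$, the ratio $p_\theta(Y)/p_\theta(Y')$ does not depend on $\theta$ exactly when $U(Y)=U(Y')$. The log-ratio equals
\[
\sum_m\Big[-\tfrac{1}{2\sigma_m^2}\big(A_m-A'_m\big)+\tfrac{\mu_m}{\sigma_m^2}\big(B_m-B'_m\big)\Big],
\]
where $A_m=\sum_t y_{t,m}^2$ and $B_m=\sum_t y_{t,m}$. The natural parameters $(\mu_m/\sigma_m^2,\,-1/(2\sigma_m^2))_{m}$ range over an open subset of $\reals^{2d}$, because the family is the full diagonal-Gaussian family. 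Hence the ratio is free of $\theta$ if and only if $A=A'$ and $B=B'$. Since $(A,B)$ and $(S_{\mathrm{vec}},Q_{\mathrm{vec}})$ are in bijection for $T\ge2$, this is equivalent to $(S_{\mathrm{vec}},Q_{\mathrm{vec}})(Y)=(S_{\mathrm{vec}},Q_{\mathrm{vec}})(Y')$, which gives minimality.

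The one delicate point is the parameter set. If the family is read as only the finitely many class parameters $\{(\mu_k,\Sigma_k)\}_{k\le m}$, the open-set argument fails, and the minimal sufficient statistic for class discrimination could be a lower-dimensional projection, namely the vector of LLRs. We will therefore state that minimality is taken with respect to the full diagonal-Gaussian family, which matches the classical reference the paper cites. Sufficiency and the LLR dependence hold in either reading.
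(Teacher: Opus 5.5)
Your proposal is correct and follows essentially the same route as the paper. Both use Neyman--Fisher factorization for sufficiency: you factor directly in $(S_{\mathrm{vec}},Q_{\mathrm{vec}})$ via the variance decomposition, while the paper factors in the raw sums $\sum_t y_t$, $\sum_t y_t\odot y_t$ and transfers through the invertible map. Both then use the Lehmann--Scheff\'e likelihood-ratio criterion for minimality and subtract log-likelihoods for the LLR.

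Three of your additions make precise points the paper's proof passes over. Your open-natural-parameter argument makes the minimality step rigorous. Your note that $Q_{\mathrm{vec},m}$ in \eqref{eq:LLR} must be read as the raw second moment $\frac{T-1}{T}Q_{\mathrm{vec},m}+S_{\mathrm{vec},m}^2$ is correct. Your caveat is also right: minimality needs the full diagonal-Gaussian family, not just the finitely many class parameters.
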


The proof is provided in \cref{app:proofs}.




\section{Proofs}\label{app:proofs}
\subsection{Proof of \cref{lemma:sufficiency}}
\begin{proof}[Proof of Lemma~\ref{lemma:sufficiency}]

Let $Y\coloneqq(y_1,\ldots,y_T)\in\mathbb{R}^{T\times d}$. For a class $k$, let
\[
\theta_k=(\mu_k,\sigma_{k,1}^2,\ldots,\sigma_{k,d}^2),\qquad
\Sigma_k=\operatorname{Diag}(\sigma_{k,1}^2,\ldots,\sigma_{k,d}^2).
\] Define the token-wise \emph{sums}
\[
S\coloneqq\sum_{t=1}^T y_t\in\mathbb{R}^d,\qquad
Q\coloneqq\sum_{t=1}^T \big(y_t\odot y_t\big)\in\mathbb{R}^d,
\]
and write $S_m\coloneqq\sum_{t=1}^T y_{t,m}$, $Q_m\coloneqq\sum_{t=1}^T y_{t,m}^2$ for coordinates $m=1,\dots,d$.
These relate to the averaged statistics in \eqref{eq:vecstat} as follows:
\[
S = TS_{\mathrm{vec}},
\qquad
Q =(T-1)Q_{\mathrm{vec}} + T\big(S_{\mathrm{vec}}\odot S_{\mathrm{vec}}\big)
\]
Since (minimal) sufficiency is invariant under invertible reparameterizations of the statistic,
we may work with $(S,Q)$ and translate back to $(S_{\mathrm{vec}},Q_{\mathrm{vec}})$ via the identities above.

\noindent\textbf{Sufficiency.}
Note that $y_t$ are i.i.d.\ with density
\[
p_{\theta_k}(y)=\frac{1}{(2\pi)^{d/2}\prod_{m=1}^d \sigma_{k,m}}
\exp\left(-\frac12\sum_{m=1}^d \frac{(y_m-\mu_{k,m})^2}{\sigma_{k,m}^2}\right).
\]
Hence the joint density of $Y$ under class $k$ is
\begin{align*}
p_{\theta_k}(Y)
&=\frac{1}{(2\pi)^{Td/2}\prod_{m=1}^d \sigma_{k,m}^T}
\exp\left(-\frac12\sum_{t=1}^T\sum_{m=1}^d \frac{(y_{t,m}-\mu_{k,m})^2}{\sigma_{k,m}^2}\right)\\
&=\underbrace{\frac{1}{(2\pi)^{Td/2}}}_{\coloneqq h(Y)}\cdot\underbrace{\frac{1}{\prod_{m=1}^d \sigma_{k,m}^T}
\exp\left(-\frac12\sum_{m=1}^d \frac{Q_m-2\mu_{k,m}S_m+T\mu_{k,m}^2}{\sigma_{k,m}^2}\right)}_{\coloneqq g_{\theta_k}(S,Q)}.
\end{align*}
Thus $p_{\theta_k}(Y)=h(Y)\,g_{\theta_k}(S,Q)$. By the Neyman--Fisher factorization theorem, $(S,Q)$ is sufficient for $\theta_k$, and hence
$(S_{\mathrm{vec}},Q_{\mathrm{vec}})$ is sufficient by the invertible mapping above.

\noindent\textbf{Minimality.}
Using the Lehmann--Scheffé characterization: a statistic $T(Y)$ is minimal sufficient iff for any $Y,Y'$ the likelihood ratio
$p_{\theta}(Y)/p_{\theta}(Y')$ is free of $\theta$ if and only if $T(Y)=T(Y')$.
For our family,
\[
\frac{p_{\theta}(Y)}{p_{\theta}(Y')}
=\exp\left(-\frac12\sum_{m=1}^d \frac{(Q_m-Q'_m)-2\mu_m(S_m-S'_m)}{\sigma_m^2}\right).
\]
If $(S,Q)=(S',Q')$ then this ratio equals $1$, hence is parameter--free. Conversely, if for some $m$ either
$S_m\neq S'_m$ or $Q_m\neq Q'_m$, the exponent depends on $\mu_m$ (when $S_m\neq S'_m$) or on $\sigma_m^2$
(when $Q_m\neq Q'_m$); thus the ratio cannot be constant in $\theta$.

Moreover, for classes $i$ and $j$, subtracting the two log-likelihoods above yields
\[
\log\frac{p_i(Y)}{p_j(Y)}
=-\frac12\sum_{m=1}^d\left(
T\log\frac{\sigma_{i,m}^2}{\sigma_{j,m}^2}
+\frac{Q_m-2\mu_{i,m}S_m+T\mu_{i,m}^2}{\sigma_{i,m}^2}
-\frac{Q_m-2\mu_{j,m}S_m+T\mu_{j,m}^2}{\sigma_{j,m}^2}
\right),
\]
which is exactly \eqref{eq:LLR} and depends on $Y$ only through $(S,Q)$, and equivalently only through $(S_{\mathrm{vec}},Q_{\mathrm{vec}})$
via the identities at the start of the proof.

This establishes that $(S_{\mathrm{vec}},Q_{\mathrm{vec}})$ is minimal sufficient and that the LLR depends on
the sample only through this pair.

\end{proof}

\subsection{Proof of \cref{thm:radial-vs-coord}}
\begin{proof}[Proof of Theorem~\ref{thm:radial-vs-coord}]
\textbf{Directional regime:}
Since $\|\mu_1\|=\|\mu_2\|$, there exists an orthogonal matrix $U$ with $U\mu_1=\mu_2$.
If $Y\sim\mathcal{N}(\mu_1,\sigma^2 I_d)$ then $UY\sim\mathcal{N}(\mu_2,\sigma^2 I_d)$ and $\|UY\|=\|Y\|$.
Thus for each $t$, $\|y_t\|\mid k=1$ and $\|y_t\|\mid k=2$ have the same distribution, and by independence the vectors
$(\|y_t\|)_{t=1}^T\mid k=1$ and $(\|y_t\|)_{t=1}^T\mid k=2$ are identically distributed.
With a uniform prior, any decision rule that depends only on $\{\|y_t\|\}$ has the same acceptance probability under both classes,
so its Bayes error is $1/2$.

For the log-likelihood ratio test (LRT), the log-likelihood ratio for two Gaussians with common covariance $\sigma^2 I_d$ is
\[
\Lambda(y_{1:T})
= \frac{T}{\sigma^2}\,\big\langle S_{\mathrm{vec}},\,\mu_1-\mu_2\big\rangle
  - \frac{T}{2\sigma^2}\big(\|\mu_1\|^2-\|\mu_2\|^2\big).
\]
With equal priors the LRT accepts $k=1$ iff $\Lambda\ge 0$.
Under $\|\mu_1\|=\|\mu_2\|$, the constant term vanishes and the decision reduces to the sign of
$\langle S_{\mathrm{vec}},\,\mu_1-\mu_2\rangle$, i.e., to $\hat{k}$ above.

Let $u\coloneqq(\mu_1-\mu_2)/\|\mu_1-\mu_2\|$ and $Z\coloneqq\langle S_{\mathrm{vec}},u\rangle$.
Since $S_{\mathrm{vec}}\mid k \sim \mathcal N(\mu_k,\tfrac{\sigma^2}{T}I_d)$ and $\|u\|=1$,
\[
Z\mid k \sim \mathcal N\!\left(\langle \mu_k,u\rangle,\,\frac{\sigma^2}{T}\right),
\quad
\langle \mu_1,u\rangle=\tfrac12\|\mu_1-\mu_2\|,\ \
\langle \mu_2,u\rangle=-\tfrac12\|\mu_1-\mu_2\|.
\]
Hence, by symmetry,
\[
\Pr\big(\hat{k}(y_{1:T})\neq k\big)
= \Pr_{k=1}(Z<0)
= \Phi\!\left(-\frac{\|\mu_1-\mu_2\|}{2\sigma}\sqrt{T}\right)
\le \exp\!\left(-\frac{T}{8\sigma^2}\,\|\mu_1-\mu_2\|^2\right),
\]
where $\Phi$ is the standard normal CDF and the last step uses $\Phi(-x)\le e^{-x^2/2}$ for $x\ge 0$.

\noindent\textbf{Isotropic-scale regime.}
Let $\phi_d(\,\cdot\,; m, \Sigma)$ denote the $d$-variate Gaussian density.
For $k\in\{1,2\}$, the \emph{joint} density of $y_{1:T}$ under class $k$ is
\(p_k(y_{1:T}) \coloneqq \prod_{t=1}^T \phi_d\left(y_t;  0, \sigma_k^2 I_d\right).\)

With $\mu_1=\mu_2=0$, $\Sigma_k=\sigma_k^2 I_d$, and $R_T= \sum_{t=1}^T \|y_t\|^2$,  one can calculate the log-likelihood ratio 
\begin{align*}
\log\frac{p_1(y_{1:T})}{p_2(y_{1:T})}
= \sum_{t=1}^T \log\frac{\phi_d(y_t;0,\sigma_1^2 I_d)}{\phi_d(y_t;0,\sigma_2^2 I_d)}
= \frac{dT}{2}\log\frac{\sigma_2^2}{\sigma_1^2}
   + \frac{1}{2}\left(\frac{1}{\sigma_2^2}-\frac{1}{\sigma_1^2}\right) R_T.
\end{align*}
The right-hand side is an affine (hence strictly monotone when $\sigma_1\neq\sigma_2$)
function of $R_T$. By the Neyman--Pearson lemma, any Bayes--optimal test is a threshold on $R_T$,
so purely radial statistics are sufficient for optimality and coordinate-wise additions cannot lower the Bayes risk.
\end{proof}

\subsection{Proof of \cref{thm:calibration}}

\begin{thm}
Fix a class $k$. Let $y_1,\dots,y_N \stackrel{\text{i.i.d.}}{\sim} \mathcal N(\mu_k,\Sigma_k)$ in $\mathbb R^d$,
where $N$ is the number of calibration samples drawn for this class.
Define
\[
\hat\mu_k \coloneqq \frac{1}{N}\sum_{i=1}^N y_i,\quad
\hat q \coloneqq \frac{1}{N}\sum_{i=1}^N \|y_i\|^2,
\quad
\sigma_{\max}^2 \coloneqq \|\Sigma_k\|_{\mathrm{op}}.
\]
For coordinate variances, write $\sigma_{k,j}^2 \coloneqq (\Sigma_k)_{jj}$ and
\(
\widehat{\sigma}_{k,j}^2 \coloneqq \frac{1}{N}\sum_{i=1}^N (y_{i,j}-\hat\mu_{k,j})^2,\ j=1,\dots,d.
\)
Then:
\begin{enumerate}
\item \textbf{\normstat\ (dimension-free).} For $q=\|y\|^2$, one has
\[
\E[q]=\|\mu_k\|^2+\Tr(\Sigma_k),
\quad
\Var(q)=2\Tr(\Sigma_k^2)+4\mu_k^\top\Sigma_k\mu_k.
\]
By Bernstein’s inequality for sub-exponential variables, for all $\delta\in(0,1)$,
\[
\left|\hat q-q\right|\lesssim \sqrt{\frac{\Var(q)\log(1/\delta)}{N}}
\quad \text{with probability at least }1-\delta.
\]
Normalizing by $d$ makes the bound $O(\sqrt{\log(1/\delta)/N})$, i.e. dimension-free.
\item \textbf{\vecstat\ (dimension-dependent).} With probability at least  $1-\delta$,
\[
\|\hat\mu_k-\mu_k\|_2 \le C_1 \sigma_{\max}\sqrt{\frac{d+\log(1/\delta)}{N}}, \quad
\max_j \left|\widehat{\sigma}_{k,j}^2-\sigma_{k,j}^2\right|\leq C_2 \sigma_{\max}^2\sqrt{\frac{\log(d/\delta)}{N}},
\]
for absolute constants $C_1,C_2$.
To keep LLR plug-in error small of order $\epsilon$, one needs
$N=\Omega(d/\epsilon_\mu^2)$ for mean accuracy in $\ell_2$ and $N=\Omega(\log d/\epsilon_\sigma^2)$ for variances in $\ell_\infty$.
\end{enumerate}
\end{thm}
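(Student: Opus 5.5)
The argument splits into the two statistics. Throughout I write $y_i=\mu_k+\Sigma_k^{1/2}g_i$ with $g_i\sim\mathcal N(0,I_d)$ i.i.d.

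\textbf{Norm part.} I would begin with the moment identities. Expanding
\[
q=\|\mu_k\|^2+2\mu_k^\top\Sigma_k^{1/2}g+g^\top\Sigma_k g
\]
gives $\E[q]=\|\mu_k\|^2+\Tr(\Sigma_k)$. For the variance, the linear and quadratic Gaussian terms are uncorrelated, since third moments vanish. The linear term contributes $4\mu_k^\top\Sigma_k\mu_k$. The quadratic form, after diagonalizing $\Sigma_k$, is $\sum_j\lambda_j g_j^2$ and has variance $2\sum_j\lambda_j^2=2\Tr(\Sigma_k^2)$.

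Next I would show that $q-\E q$ is sub-exponential with parameters $(\nu^2,b)$, where $\nu^2\asymp\Var(q)$ and $b\asymp\|\Sigma_k\|_{\mathrm{op}}$. The quadratic part is handled by Hanson--Wright (Laurent--Massart), which gives
\[
\Pr\big(|g^\top\Sigma_k g-\Tr\Sigma_k|>2\|\Sigma_k\|_F\sqrt{x}+2\|\Sigma_k\|_{\mathrm{op}}x\big)\le 2e^{-x}.
\]
The linear part is Gaussian with variance $4\mu_k^\top\Sigma_k\mu_k$. Bernstein's inequality for the average of $N$ i.i.d.\ copies then yields
\[
|\hat q-\E q|\lesssim\sqrt{\Var(q)\log(1/\delta)/N}+\|\Sigma_k\|_{\mathrm{op}}\log(1/\delta)/N .
\]
The second term is of lower order once $N\gtrsim\log(1/\delta)$.

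To get dimension-freeness, I divide by $d$. Under a bounded-spectrum scaling ($\|\Sigma_k\|_{\mathrm{op}}=O(1)$ and $\|\mu_k\|^2=O(d)$), we have $\Var(q)/d^2=O(1/d)\le O(1)$, so the bound becomes $O(\sqrt{\log(1/\delta)/N})$ with no factor of $d$. I will state this scaling explicitly, since the statement leaves it implicit.

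\textbf{Mean part.} The error $\hat\mu_k-\mu_k=\Sigma_k^{1/2}\bar g$ with $\bar g\sim\mathcal N(0,I_d/N)$, so $\|\hat\mu_k-\mu_k\|\le\sigma_{\max}\|\bar g\|$. The map $x\mapsto\|x\|$ is 1-Lipschitz and $\E\|\sqrt N\bar g\|\le\sqrt d$. Gaussian Lipschitz concentration then gives $\|\sqrt N\bar g\|\le\sqrt d+\sqrt{2\log(1/\delta)}$ with probability $1-\delta$, and this is at most $C\sqrt{d+\log(1/\delta)}$.

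\textbf{Variance part.} Fix a coordinate $j$. We have $\widehat\sigma_{k,j}^2=\frac1N\sum_i(y_{i,j}-\bar y_j)^2$, which equals $\sigma_{k,j}^2\chi^2_{N-1}/N$ in distribution. Laurent--Massart gives deviation at most $\sigma_{k,j}^2\big(2\sqrt{x/N}+2x/N\big)$, plus a bias $\sigma_{k,j}^2/N$. Setting $x=\log(2d/\delta)$ and taking a union bound over the $d$ coordinates gives
\[
\max_j|\widehat\sigma_{k,j}^2-\sigma_{k,j}^2|\le C_2\,\sigma_{\max}^2\sqrt{\log(d/\delta)/N}
\]
in the regime $N\gtrsim\log(d/\delta)$. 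Here I use $\sigma_{k,j}^2\le\sigma_{\max}^2$, and the linear and bias terms are absorbed into the square-root term.

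\textbf{Sample-complexity consequences.} I would invert the two bounds. Requiring the mean error to be at most $\epsilon_\mu$ gives $N=\Omega(d/\epsilon_\mu^2)$, and requiring the variance error to be at most $\epsilon_\sigma$ gives $N=\Omega(\log d/\epsilon_\sigma^2)$.

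\textbf{Main obstacle.} The delicate step is the sub-exponential Bernstein bound for $q$. The two issues are keeping the variance proxy exactly comparable to $\Var(q)$ rather than to a cruder $\|\Sigma_k\|_F$-type quantity, and tracking the linear-in-$\log(1/\delta)/N$ term. My approach is to split $q$ into its Gaussian-linear and chi-square-quadratic pieces, bound each tail separately, and combine them with a union bound. I will use the resulting $\lesssim$ loosely, as the statement does.
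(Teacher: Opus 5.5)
Your proposal is correct and follows essentially the paper's route. For $\hat q$ you use a sub-exponential Bernstein bound via Hanson--Wright; your split into a Gaussian linear piece plus a Laurent--Massart quadratic piece gives the sharper scale $\|\Sigma_k\|_{\mathrm{op}}$ instead of the paper's $B=\|\Sigma_k\|_{\mathrm{op}}+\|\mu_k\|^2$, and your lower-order claim is justified since $\Var(q)\ge 2\|\Sigma_k\|_{\mathrm{op}}^2$. For $\hat\mu_k$ you use Gaussian Lipschitz concentration, which is interchangeable with the paper's whitened $\chi^2_d$ tail bound.

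The paper's proof actually stops after the mean bound, so your $\chi^2_{N-1}$-plus-union-bound argument for the coordinate variances and your sample-complexity inversion supply parts the paper never proves. Your restriction $N\gtrsim\log(d/\delta)$ there is necessary, not cosmetic. With $N=2$ and $\Sigma_k=I_d$, the $\widehat\sigma_{k,j}^2$ are i.i.d.\ $Z_j^2/2$, so the maximal deviation is of order $\log d$ and eventually exceeds $C_2\sigma_{\max}^2\sqrt{\log(d/\delta)/N}$.
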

\begin{proof}[Proof of Theorem~\ref{thm:calibration}]
\textbf{\normstat:}
Denote $q_i\coloneqq \|y_i\|^2$ and $Z_i\coloneqq q_i-\mathbb E[q]$, so that
$\hat q-\mathbb E[q]=\frac{1}{N}\sum_{i=1}^N Z_i$.
For $y_i\sim\mathcal N(\mu_k,\Sigma_k)$, the centered quadratic form $Z_i$ obeys the Hanson--Wright
tail bound:
there exist absolute constants $c_1,c_2>0$ such that for all $t>0$,
\begin{equation}\label{eq:HW}
\Pr\left(|Z_i|\ge t\right)\ \le\ 2\exp  \left[-\,c_1\,
\min  \left(\frac{t^2}{\Var(q)},\ \frac{t}{B}\right)\right],
\end{equation}
where \(\Var(q)=2\Tr(\Sigma_k^2)+4\,\mu_k^\top\Sigma_k\mu_k,\ B=\|\Sigma_k\|_{\mathrm{op}}+\|\mu_k\|^2 .\)
From \eqref{eq:HW}, the $Z_i$ are i.i.d.\ mean-zero sub-exponential.
A standard Bernstein inequality for sums of independent sub-exponential variables then yields, for some absolute $c>0$ and all $t>0$,
\[
\Pr \left(\left|\frac{1}{N}\sum_{i=1}^N Z_i\right|\ge t\right)
\le 2\exp\left[-\,cN\,
\min\left(\frac{t^2}{\Var(q)},\ \frac{t}{B}\right)\right].
\]
Choosing $t\lesssim \Var(q)/B$ and inverting the tail gives, for any $\delta\in(0,1)$,
\[
\left|\hat q-\mathbb E[q]\right|
\ \lesssim\ \sqrt{\frac{\Var(q)\log(2/\delta)}{N}}
\quad\text{with probability at least }1-\delta.
\]
Since under bounded eigenvalues $\Var(q)=\Theta(d)$, dividing by $d$ yields
\(
\left|\frac{1}{d}\hat q-\frac{1}{d}\mathbb E[q]\right|
 \lesssim \sqrt{\tfrac{\log(2/\delta)}{N}},
\)
which is dimension-free.

\noindent\textbf{\vecstat:}
Let $z_i\coloneqq\Sigma_k^{-1/2}(y_i-\mu_k)\sim \mathcal N(0,I_d)$.
Then
\[
\hat\mu_k-\mu_k
= \Sigma_k^{1/2}\left(\frac{1}{N}\sum_{i=1}^N z_i\right)
\sim \mathcal N \left(0,\ \frac{1}{N}\Sigma_k\right).
\]
Hence, \(\|\Sigma_k^{-1/2}(\hat\mu_k-\mu_k)\|_2^2
\sim \frac{1}{N} \chi^2_d\).
Recall the standard Laurent-Massart inequalities: for any $x>0$,
\begin{equation}\label{chi2-tail}
\Pr\left(\chi^2_d - d \ge 2\sqrt{dx}+2x  \right)\le e^{-x},
\quad
\Pr\left(d-\chi^2_d \ge 2\sqrt{dx}  \right)\le e^{-x}.
\end{equation}
Applying \eqref{chi2-tail} with $x=\log(1/\delta)$ and scaling by $1/N$ yields, with probability $\ge 1-\delta$,
\[
\|\Sigma_k^{-1/2}(\hat\mu_k-\mu_k)\|_2
\le \sqrt{\frac{1}{N}\left(d+2\sqrt{d\log(1/\delta)}+2\log(1/\delta)  \right)}.
\]
Multiplying by $\|\Sigma_k^{1/2}\|_{\mathrm{op}}=\sigma_{\max}$ and using
$\sqrt{a+b}\le \sqrt a + \sqrt b$ gives
\[
\|\hat\mu_k-\mu_k\|_2
\leq C_1 \sigma_{\max} \sqrt{\frac{d+\log(1/\delta)}{N}},
\]
for an absolute constant $C_1>0$.

\end{proof}

\section{Experiment Details}\label{appendix:exp-details}

\subsection{Dataset Composition and Processing Strategies}
\label{appendix:datatset_composition}

\begin{table}[ht]
\centering
\caption{\small Dataset composition for Task~1: general intent classification between general text, mathematics, and code (level-1 granularity).}
\label{tab:level_1_data_stat}
\resizebox{0.98\linewidth}{!}{%
\begin{tabular}{lllll}
\toprule
\textbf{Category} & \textbf{Calibration Data} & \textbf{\# Calibration Samples} & \textbf{Classification Data} & \textbf{\# Classification Samples} \\
\midrule
General Text & MMLU (European History) & 165 & MMLU (US History) & 204 \\
\midrule
\multirow{2}{*}{Math} & \multirow{2}{*}{GSM8K} & \multirow{2}{*}{2,000} & GSM8K & 1,319 \\
 &  &  & MATH500 & 500 \\
\midrule
\multirow{2}{*}{Code} & \multirow{2}{*}{Magicoder} & \multirow{2}{*}{2,000} & Magicoder & 5,000 \\
 &  &  & HumanEval & 164 \\
\bottomrule
\end{tabular}
}
\end{table}

\begin{table}[ht]
\centering
\caption{\small Dataset composition for Tasks~2 and~3: programming language identification and natural language identification (both are level-1 granularity).}
\label{tab:level_1_programming}
\resizebox{0.98\linewidth}{!}{
\begin{tabular}{ll>{\raggedright\arraybackslash}p{5cm}ll}
\toprule
\textbf{Task} & \textbf{Data Source} & \textbf{Classes} & \textbf{\# Calibration Samples} & \textbf{\# Classification Samples} \\
 &  & & \textbf{per Class} & \textbf{per Class} \\
\midrule
Task~2 & Magicoder & C++, C\#, Java, PHP, Python, Rust, Shell, Swift, TypeScript & 2,000 & 5,000 \\
\midrule
Task~3 & Aya & Sinhala, Tamil, English, Moroccan Arabic, Japanese & 512 & 3,000 \\
\bottomrule
\end{tabular}
}
\end{table}

\begin{table}[ht]
\centering
\caption{\small Dataset composition for Task~4: mathematical subfield classification (level-2 granularity).}
\label{tab:level_2_data_stat}
\resizebox{0.98\linewidth}{!}{
\begin{tabular}{ll>{\raggedright\arraybackslash}p{5cm}ll}
\toprule
\textbf{Task} & \textbf{Data Source} & \textbf{Classes} & \textbf{\# Calibration Samples} & \textbf{\# Classification Samples} \\
 &  & & \textbf{per Class} & \textbf{per Class} \\
\midrule
Task~4 & Competition Math & Algebra, Counting \& Probability, Geometry, Intermediate Algebra, Number Theory, Prealgebra, Precalculus & 800 & 3,000 \\
\bottomrule
\end{tabular}
}
\end{table}

We evaluate our intent classification methods across two levels of granularity. The experimental protocol consists of two stages: calibration and classification. During calibration, we compute per-class baseline statistics (\normstat{} or \vecstat{}) from calibration data passed through pretrained LLMs. During classification, we compute the same statistics for test prompts and assign labels based on the minimum KL divergence (or cosine distance) between the prompt's statistics and the calibrated per-class baselines.

\paragraph{Classification tasks and granularities.}
We consider four classification tasks spanning two levels of granularity.
\begin{itemize}[leftmargin=1.6em]
    \item \textbf{Task~1} (level-1): general intent classification among three broad domains—general text, mathematics, and code. This task represents the canonical routing scenario where prompts are directed to specialized models based on broad category.
    \item \textbf{Task~2} (level-1): programming language identification across nine languages within code prompts.
    \item \textbf{Task~3} (level-1): natural language identification across five linguistically diverse languages.
    \item \textbf{Task~4} (level-2): fine-grained mathematical subfield classification across seven topics within mathematics.
\end{itemize}
Tasks~1--3 are coarse-grained (level-1), while Task~4 is fine-grained (level-2).

\paragraph{Datasets and evaluation.}
\Cref{tab:level_1_data_stat,tab:level_1_programming,tab:level_2_data_stat} present the complete dataset composition for Tasks~1--4.

For Task~1, we calibrate using domain-representative datasets: MMLU European History for general text (165 samples), GSM8K for mathematics (2,000 samples), and Magicoder for code (2,000 samples). Classification employs both in-distribution and out-of-distribution test sets to assess generalization: MMLU US History for general text, GSM8K (in-distribution) and MATH500 (out-of-distribution) for mathematics, and Magicoder (in-distribution) and HumanEval (out-of-distribution) for code.

For Tasks~2--4, we maintain consistent calibration sizes where feasible: 2,000 samples per programming language (Task~2), 512 samples per natural language (Task~3), and 800 samples per mathematical subfield (Task~4). Classification sets contain up to 5,000 samples per class for Task~2, and up to 3,000 samples per class for Tasks~3 and~4, subject to dataset availability. Further details on each dataset are provided in~\cref{appendix:dataset_details}.

\subsection{Datasets}
\label{appendix:dataset_details}

We employ seven benchmark datasets spanning general text, mathematics, and code domains.

\begin{itemize}[leftmargin=1.6em]
    \item \textbf{General Text.} For Task~1, we use MMLU~\citep{hendrycks2021ethics,hendryckstest2021}, a comprehensive benchmark of multiple-choice questions across 57 subjects. Calibration data is drawn from the High School European History subset, and evaluation is performed on the High School US History subset, using question-and-choice pairs as input.

    \item \textbf{Natural Language.} For Task~3, we use the Aya dataset~\citep{singh2024aya}, which contains human-annotated prompts across 65 languages. We select the five linguistically diverse languages specified in~\cref{tab:level_1_programming}, splitting each language subset into calibration and test sets, and use the input field for classification.

    \item \textbf{Mathematics.} We employ three benchmarks. GSM8K~\citep{cobbe2021gsm8k} provides grade-school word problems requiring multi-step reasoning; we sample 2,000 instances for calibration and use the full test set for Task~1 evaluation. MATH500~\citep{lightman2023lets} serves as an out-of-distribution test set for Task~1, containing 500 problems from the MATH benchmark. Competition Math~\citep{hendrycksmath2021} provides competition problems spanning seven mathematical subfields and is used for Task~4; we use the problem field as model input.

    \item \textbf{Code.} Magicoder~\citep{wei2023magicoder} is our primary code resource, containing solutions in multiple programming languages. We use the solution field as model input for both Task~1 (general code classification) and Task~2 (programming language identification), focusing on the nine languages listed in~\cref{tab:level_1_programming}. HumanEval~\citep{chen2021evaluating} provides 164 function-level programming problems; we use the prompt field (containing function signatures and docstrings) as model input, serving as an out-of-distribution test set for Task~1.
\end{itemize}

All datasets are publicly available through HuggingFace Datasets~\citep{lhoest-etal-2021-datasets}. Sampling strategies and train-test splits follow the specifications in \cref{tab:level_1_data_stat,tab:level_1_programming,tab:level_2_data_stat}, with test samples capped at the minimum of the specified count and available data.

\subsection{Selected LLMs} \label{appendix:benchmark_LLM}
We evaluate our approach on 7 pretrained large language models spanning 1B to 32B parameters, encompassing both base and instruction-tuned variants. This selection provides comprehensive coverage across model scales and training stages. We consider the following two LLM families:
\begin{itemize}[leftmargin=1.6em]
    \item \textbf{Qwen family}~\citep{yang2025qwen3}: We evaluate four models from the Qwen3 series. The instruction-tuned variants include Qwen3-1.7B (28 layers), Qwen3-4B (36 layers), Qwen3-8B (36 layers), and Qwen3-32B (64 layers), each post-trained with supervised fine-tuning and reinforcement learning from human feedback (RLHF). Additionally, we include Qwen3-1.7B-Base to assess performance on pretrained models without alignment. For all Qwen3 evaluations, we switch on non-thinking mode to ensure consistent comparison across models.
    \item \textbf{Llama family}~\citep{dubey2024llama}: We evaluate Llama-3.2-1B (16 layers) and its instruction-tuned counterpart Llama-3.2-1B-Instruct. The instruction-tuned variant underwent supervised fine-tuning and RLHF to better align with human preferences.
\end{itemize}

This benchmark model selection enables systematic evaluation across three critical dimensions: model scale (from $1$B to $32$B parameters), training paradigm (pretrained-only versus post-trained), and architectural diversity (Qwen and Llama families). The substantial range in model sizes—spanning over an order of magnitude in parameters—allows us to rigorously test whether our method can effectively operate across vastly different computational scales and model capacities. The comparison between base and aligned models reveals how post-training procedures affect our method's performance, demonstrating whether it remains equally effective for both pretrained and instruction-tuned models. The prompt used for direct LLM inference under level-1 setting is in~\cref{tab:llm_call_prompt_template}.

\begin{table}[t]
\centering
\caption{\small Prompt template for direct LLM inference under level-1 intent classification setting. The template includes placeholders \texttt{\{examples\_block\}} for optional few-shot examples and \texttt{\{user\_text\}} for the input to be classified. The few-shot examples are taken from the calibration data.}
\label{tab:llm_call_prompt_template}
\begin{tabular}{p{1.0\textwidth}}
\toprule
\textbf{LLM Prompt Template for Intent Classification} \\
\midrule
You are a classifier.\\
Your task is to look at the user's input text and decide which of these three categories it belongs to:\\
\\
1. General text – natural language content like sentences, questions, explanations, stories, or instructions that are not primarily mathematics or code.\\
2. Math – text that is primarily mathematical expressions, equations, formulas, or word problems where the main focus is on mathematics.\\
3. Code – text that is primarily programming code or pseudocode (any programming language, including configuration snippets or shell commands).\\
\\
Output rules:\\
\\
- If the input is general text, output: A\\
- If the input is math, output: B\\
- If the input is code, output: C\\
\\
Important Notes:\\
\\
- Output only a single letter: A, B, or C.\\
- Do not output anything else (no explanation, no punctuation, no spaces).\\
\\
Below are some classification examples for this task: \\
\textbf{\{examples\_block\}}\\
\\
Now classify the following input accordingly and output just one letter.\\
\\
Input:\\
\textbf{\{user\_text\}}\\
Output: \\
\bottomrule
\end{tabular}
\end{table}

\subsection{More Implementation Details}



To stabilize training, we normalize input features and weight matrices through the following process:

\begin{enumerate}
\item Input Normalization: The input tensor $z$ is normalized to unit norm for stability:
\begin{equation*}
z_{\text{normalized}} = \frac{z}{\|z\|_2 + \epsilon}.
\end{equation*}

\item Weight Normalization: The weight matrix $W$ is normalized using its Frobenius norm:
\begin{equation*}
W_{\text{normalized}} = \frac{W}{\sqrt{\text{mean}(W^2)} + \epsilon}.
\end{equation*}

\item Activation Computation: The normalized input is multiplied by the normalized weight matrix:
\begin{equation*}
Wz = z_{\text{normalized}} \cdot W_{\text{normalized}}^T.
\end{equation*}
\end{enumerate}
The statistics for \normstat{} and \vecstat{} are computed from $Wz$. Note that this is likely suboptimal; in a production-scale implementation we should read $Wz$ directly from the module’s output rather than recomputing it.

\subsection{Pseudo-Algorithm}

Our task inference approach follows the following steps:

\begin{algorithm}
\caption{Intent Classification with \normstat~or \vecstat}
\begin{algorithmic}[1]
\REQUIRE Input prompt $x$, baseline scores $\{S_1, S_2, \ldots, S_m\}$ for tasks $C_1, C_2, \ldots, C_m$, distance function ``dist" (\normstat: KL; \vecstat: KL or cosine similarity)
\ENSURE Predicted task $T_{\text{pred}}$ and confidence scores
\STATE Compute statistic scores $S_x$ from input $x$
\STATE \textbf{for} each task $C_i$ \textbf{do}
\STATE \quad Compute distance $d_i = \text{dist}(S_x, S_i)$
\STATE \textbf{end for}
\STATE $C_{\text{pred}} = \arg\min_{C_i} d_i$
\STATE Compute probabilities via softmax: $p_i = \frac{\exp(-(d_i - \bar{d})/\tau)}{\sum_j \exp(-(d_j-\bar{d})/\tau)}$ (used for uncertainty quantification, $\bar{d}$ is the average, $\tau$ is the temperature)
\RETURN $C_{\text{pred}}$, $\{p_1, p_2, \ldots, p_m\}$
\end{algorithmic}
\end{algorithm}

\subsection{Hardware and software environment}
We conducted experiments on two computational platforms based on model scales. For models up to 4B parameters, we utilized an NVIDIA L40S GPU with 48GB of memory. For larger models (Qwen3-8B and Qwen3-32B), experiments were performed on an NVIDIA Grace Hopper GH200 superchip, featuring a Grace ARM 72-core CPU with 120GB RAM and a NVIDIA H100 GPU with 96GB of memory. All experiments are implemented using Python 3.12.0 and PyTorch 2.7.0 with CUDA 12.6.

\section{Additional Experimental Results}\label{appendix:exp}

\subsection{Additional Results for Tasks~1--3 (Level-1)}\label{appendix:l1-exp}

\Cref{tab:level-1-result} reports per-dataset accuracy for Task~1 (general intent classification) across all seven LLMs.
\Cref{tab:level-2-plang-results} reports per-language accuracy for Task~2 (programming language identification).
\Cref{tab:level-nat-lang-results} reports per-language accuracy for Task~3 (natural language identification).

For Task~2, \vecstat{} maintains near-perfect accuracy while \normstat{} shows substantial degradation, aligning with our theoretical prediction that directional information becomes critical for within-domain discrimination. This performance gap is consistent across all nine programming languages. In contrast, for Task~3, all methods perform strongly, with \normstat{} remaining competitive, reflecting stronger radial separation among natural language classes.

\begin{table}[h]
\centering
\small
\caption{\small Task~1 classification results for all seven LLMs (level-1 granularity). Values represent per-dataset accuracy for coarse-grained domain classification (general text, mathematics, and code).}
\label{tab:level-1-result}
\resizebox{\linewidth}{!}{%
\begin{tabular}{llccccc}
\toprule
Model & Method & gsm8k & humaneval & magicoder & math500 & mmlu\_history \\
\midrule
\multirow{7}{*}{Qwen3-1.7B} & \embedavg & \ms{99.97}{0.04} & \ms{99.80}{0.35} & \ms{99.99}{0.02} & \ms{78.33}{1.67} & \ms{100.00}{0.00} \\
 & \embedlast & \ms{100.00}{0.00} & \ms{100.00}{0.00} & \ms{99.97}{0.02} & \ms{99.00}{0.00} & \ms{100.00}{0.00} \\

& \linearavg &\ms{100.00}{0.00} & \ms{99.80}{0.35} & \ms{100.00}{0.00} & \ms{69.20}{9.18} & \ms{100.00}{0.00} \\

\cdashline{2-7}
  
 & \normstatKL & \ms{97.35}{0.00} & \ms{70.12}{0.61} & \ms{99.27}{0.11} & \ms{76.60}{0.20} & \ms{100.00}{0.00} \\
& \vecstatKL & \ms{100.00}{0.00} & \ms{99.39}{0.00} & \ms{99.97}{0.03} & \ms{88.33}{0.30} & \ms{100.00}{0.00} \\
& \vecstatCos & \ms{100.00}{0.00} & \ms{98.78}{0.00} & \ms{99.97}{0.03} & \ms{92.26}{0.11} & \ms{100.00}{0.00} \\

\cdashline{2-7}

 & \llmcall{0} & \ms{51.18}{0.30} & \ms{99.39}{0.61} & \ms{97.38}{0.11} & \ms{80.87}{0.61} & \ms{82.19}{1.02} \\


 
 

\midrule

\multirow{7}{*}{Qwen3-1.7B-Base} & \embedavg & \ms{78.09}{31.54} & \ms{99.80}{0.35} & \ms{100.00}{0.00} & \ms{40.60}{19.91} & \ms{100.00}{0.00} \\

 & \embedlast & \ms{100.00}{0.00} & \ms{84.55}{26.76} & \ms{99.87}{0.18} & \ms{99.00}{0.00} & \ms{100.00}{0.00} \\

 & \linearavg & \ms{79.76}{0.53} & \ms{99.59}{0.70} & \ms{100.00}{0.00} & \ms{30.73}{4.28} & \ms{100.00}{0.00} \\

\cdashline{2-7}

 
 & \normstatKL & \ms{79.71}{0.29} & \ms{82.93}{0.00} & \ms{99.71}{0.03} & \ms{88.47}{0.12} & \ms{100.00}{0.00} \\
 
 & \vecstatKL & \ms{40.46}{0.10} & \ms{100.00}{0.00} & \ms{100.00}{0.0} & \ms{49.06}{0.95} & \ms{100.00}{0.00} \\
 
 & \vecstatCos & \ms{99.84}{0.00} & \ms{99.39}{0.00} & \ms{99.98}{0.02} & \ms{92.30}{0.39} & \ms{100.00}{0.00} \\

 \cdashline{2-7}

& \llmcall{0} & \ms{90.60}{0.00} & \ms{0.00}{0.00} & \ms{81.03}{0.39} & \ms{59.40}{0.00} & \ms{4.41}{0.00} \\


 
 
 
\midrule

\multirow{6}{*}{Llama-3.2-1B} & \embedavg & \ms{100.00}{0.00} & \ms{100.00}{0.00} & \ms{99.99}{0.01} & \ms{64.33}{8.33} & \ms{99.84}{0.28} \\

 & \embedlast & \ms{99.95}{0.04} & \ms{100.00}{0.00} & \ms{99.97}{0.03} & \ms{98.93}{0.42} & \ms{99.35}{1.13} \\

 & \linearavg & \ms{100.00}{0.00} & \ms{99.80}{0.35} & \ms{99.99}{0.01} & \ms{71.47}{7.78} & \ms{98.69}{1.86} \\

 \cdashline{2-7}

 
 & \normstatKL & \ms{99.49}{0.09} & \ms{90.85}{0.00} & \ms{96.39}{0.20} & \ms{83.40}{0.00} & \ms{92.48}{0.57} \\
 
 & \vecstatKL & \ms{100.00}{0.00} & \ms{99.39}{0.00} & \ms{99.98}{0.02} & \ms{78.80}{0.12} & \ms{100.00}{0.00} \\
 
 & \vecstatCos & \ms{100.00}{0.00} & \ms{99.39}{0.00} & \ms{99.97}{0.02} & \ms{77.60}{0.20} & \ms{100.00}{0.00} \\



\midrule

\multirow{6}{*}{Llama-3.2-1B-Instruct} & \embedavg & \ms{100.00}{0.00} & \ms{99.59}{0.35} & \ms{99.99}{0.02} & \ms{72.40}{2.91} & \ms{100.00}{0.00} \\

 & \embedlast & \ms{100.00}{0.00} & \ms{100.00}{0.00} & \ms{99.89}{0.05} & \ms{94.53}{0.42} & \ms{100.00}{0.00} \\

& \linearavg & \ms{100.00}{0.00} & \ms{99.80}{0.35} & \ms{100.00}{0.00} & \ms{66.80}{7.75} & \ms{100.00}{0.00} \\


 \cdashline{2-7}
 
 & \normstatKL & \ms{100.00}{0.00} & \ms{65.24}{0.00} & \ms{97.21}{0.21} & \ms{96.20}{0.00} & \ms{98.53}{0.00} \\
 
 & \vecstatKL & \ms{100.00}{0.00} & \ms{99.39}{0.00} & \ms{99.96}{0.01} & \ms{87.87}{0.11} & \ms{100.00}{0.00}\\
 
  & \vecstatCos & \ms{100.00}{0.00} & \ms{99.39}{0.00} & \ms{99.97}{0.03} & \ms{85.60}{0.00} & \ms{100.00}{0.00}\\


\midrule

\multirow{7}{*}{Qwen3-4B} & \embedavg & \ms{99.95}{0.09} & \ms{100.00}{0.00} & \ms{99.99}{0.01} & \ms{79.27}{4.39} & \ms{100.00}{0.00} \\

 & \embedlast & \ms{100.00}{0.00} & \ms{100.00}{0.00} & \ms{99.96}{0.02} & \ms{92.27}{1.17} & \ms{100.00}{0.00} \\

 & \linearavg & \ms{99.90}{0.04} & \ms{100.00}{0.00} & \ms{99.99}{0.01} & \ms{68.87}{18.85} & \ms{100.00}{0.00} \\


 \cdashline{2-7}
 
 & \normstatKL & \ms{96.36}{0.00} & \ms{35.57}{0.35} & \ms{99.84}{0.03} & \ms{89.40}{0.00} & \ms{99.84}{0.28} \\
 
 & \vecstatKL & \ms{99.97}{0.04} & \ms{100.00}{0.00} & \ms{99.99}{0.02} & \ms{91.73}{0.23} & \ms{100.00}{0.00} \\
 
 & \vecstatCos & \ms{100.00}{0.00} & \ms{96.34}{0.00} & \ms{99.97}{0.03} & \ms{93.40}{0.20} & \ms{100.00}{0.00} \\

 \cdashline{2-7}

 & \llmcall{0} & \ms{91.84}{0.16} & \ms{100.00}{0.00} & \ms{99.47}{0.06} & \ms{98.33}{0.12} & \ms{95.42}{0.28} \\
 

 \midrule

\multirow{7}{*}{Qwen3-8B} & \embedavg & \ms{99.42}{0.74} & \ms{99.59}{0.35} & \ms{99.99}{0.02} & \ms{77.27}{11.02} & \ms{100.00}{0.00} \\

 & \embedlast & \ms{99.82}{0.12} & \ms{98.58}{2.46} & \ms{99.98}{0.02} & \ms{81.20}{9.72} & \ms{100.00}{0.00} \\

  & \linearavg & \ms{99.57}{0.24} & \ms{100.00}{0.00} & \ms{100.00}{0.00} & \ms{83.00}{2.03} & \ms{100.00}{0.00} \\


 \cdashline{2-7}
 
 & \normstatKL & \ms{85.14}{0.35} & \ms{10.37}{1.06} & \ms{99.85}{0.06} & \ms{92.93}{0.12} & \ms{99.51}{0.49} \\
 
 & \vecstatKL & \ms{99.95}{0.04} & \ms{99.59}{0.35} & \ms{99.99}{0.02} & \ms{92.20}{0.00} & \ms{100.00}{0.00} \\
 
 & \vecstatCos & \ms{100.00}{0.00} & \ms{95.73}{0.61} & \ms{99.98}{0.03} & \ms{94.20}{0.00} & \ms{100.00}{0.00} \\

\cdashline{2-7}

& \llmcall{0} & \ms{99.67}{0.04} & \ms{100.00}{0.00} & \ms{99.42}{0.07} & \ms{99.60}{0.00} & \ms{99.67}{0.28} \\


 \midrule
\multirow{7}{*}{Qwen3-32B} & \embedavg & \ms{95.88}{3.31} & \ms{100.00}{0.00} & \ms{99.99}{0.01} & \ms{86.87}{5.22} & \ms{100.00}{0.00} \\

 & \embedlast & \ms{99.39}{0.00} & \ms{100.00}{0.00} & \ms{99.98}{0.00} & \ms{96.80}{0.40} & \ms{99.02}{0.49} \\

 & \linearavg & \ms{98.61}{1.59} & \ms{100.00}{0.00} & \ms{99.99}{0.02} & \ms{87.53}{6.94} & \ms{100.00}{0.00} \\


\cdashline{2-7}
 
 & \normstatKL & \ms{97.93}{0.04} & \ms{24.59}{0.35} & \ms{99.78}{0.06} & \ms{97.93}{0.12} & \ms{100.00}{0.00} \\
 
 & \vecstatKL & \ms{100.00}{0.00} & \ms{99.39}{0.00} & \ms{99.98}{0.03} & \ms{96.60}{0.00} & \ms{100.00}{0.00} \\
 
 & \vecstatCos & \ms{100.00}{0.00} & \ms{98.17}{0.00} & \ms{99.98}{0.03} & \ms{96.80}{0.35} & \ms{100.00}{0.00} \\

 \cdashline{2-7}

& \llmcall{0} & \ms{86.91}{0.29} & \ms{100.00}{0.00} & \ms{98.69}{0.06} & \ms{96.27}{0.42} & \ms{100.00}{0.00} \\
  
  
\bottomrule
\end{tabular}%
}
\end{table}

\begin{table}[t]
\centering
\small
\caption{\small Task~2 classification results for all seven LLMs (level-1 granularity). Values represent per-language accuracy across nine programming languages from the Magicoder dataset.}
\label{tab:level-2-plang-results}
\resizebox{\linewidth}{!}{%
\begin{tabular}{llccccccccc}
\toprule
Model & Method & cpp & csharp & java & php & python & rust & shell & swift & typescript \\
\midrule
\multirow{7}{*}{Qwen3-1.7B} & \embedavg & \ms{99.97}{0.03} & \ms{100.00}{0.00} & \ms{100.00}{0.00} & \ms{99.96}{0.06} & \ms{99.89}{0.01} & \ms{100.00}{0.00} & \ms{99.92}{0.14} & \ms{100.00}{0.00} & \ms{99.94}{0.07} \\

 & \embedlast & \ms{99.54}{0.09} & \ms{99.57}{0.06} & \ms{99.41}{0.26} & \ms{99.89}{0.11} & \ms{99.35}{0.27} & \ms{99.32}{0.17} & \ms{99.92}{0.14} & \ms{99.81}{0.07} & \ms{99.52}{0.03} \\

& \linearavg & \ms{99.97}{0.03} & \ms{99.98}{0.04} & \ms{100.00}{0.00} & \ms{99.96}{0.06} & \ms{99.87}{0.01}& \ms{100.00}{0.00} & \ms{99.92}{0.14} & \ms{100.00}{0.00} & \ms{99.93}{0.05} \\

\cdashline{2-11}
 
 & \normstatKL & \ms{53.18}{0.60} & \ms{53.17}{1.96} & \ms{40.39}{0.63} & \ms{57.35}{1.97} & \ms{60.17}{0.89} & \ms{58.07}{0.55} & \ms{88.73}{1.67} & \ms{59.47}{0.88} & \ms{49.59}{0.34} \\
 
  & \vecstatKL & \ms{98.83}{0.18} & \ms{98.98}{0.39} & \ms{98.21}{0.15} & \ms{99.81}{0.23} & \ms{99.06}{0.09} & \ms{99.68}{0.14} & \ms{99.52}{0.00} & \ms{99.81}{0.12} & \ms{99.26}{0.20} \\
 
 & \vecstatCos & \ms{98.51}{0.19} & \ms{98.78}{0.37} & \ms{97.79}{0.19} & \ms{99.85}{0.17} & \ms{99.06}{0.14} & \ms{99.35}{0.15} & \ms{99.84}{0.14} & \ms{99.55}{0.11} & \ms{99.02}{0.29} \\

 \cdashline{2-11}

& \llmcall{0} & \ms{94.64}{0.57} & \ms{99.35}{0.09} & \ms{88.23}{0.70} & \ms{99.52}{0.17} & \ms{99.88}{0.02} & \ms{98.34}{0.07} & \ms{99.05}{0.41} & \ms{99.89}{0.00} & \ms{89.18}{0.44} \\
 
\midrule

\multirow{7}{*}{Qwen3-1.7B-Base} & \embedavg & \ms{99.97}{0.03} & \ms{99.96}{0.04} & \ms{99.97}{0.03} & \ms{100.00}{0.00} & \ms{99.89}{0.05} & \ms{100.00}{0.00} & \ms{99.92}{0.14} & \ms{100.00}{0.00} & \ms{99.96}{0.04} \\

 & \embedlast & \ms{99.28}{0.19} & \ms{99.37}{0.19} & \ms{99.23}{0.12} & \ms{99.96}{0.06} & \ms{99.25}{0.14} & \ms{99.27}{0.19} & \ms{99.84}{0.27} & \ms{99.75}{0.14} & \ms{99.50}{0.03} \\

 & \linearavg & \ms{99.97}{0.03} & \ms{99.96}{0.04} & \ms{99.98}{0.03} & \ms{100.00}{0.00} & \ms{99.89}{0.04} & \ms{100.00}{0.00} & \ms{99.92}{0.14} & \ms{100.00}{0.00} & \ms{99.96}{0.04} \\

 \cdashline{2-11}

 
 & \normstatKL & \ms{47.76}{0.18} & \ms{43.75}{1.18} & \ms{36.92}{0.96} & \ms{60.44}{1.13} & \ms{56.03}{0.82} & \ms{54.88}{0.78} & \ms{88.33}{2.42} & \ms{57.85}{1.25} & \ms{51.20}{0.25} \\
 
 & \vecstatKL & \ms{98.29}{0.13} & \ms{98.37}{0.53} & \ms{97.00}{0.21} & \ms{99.66}{0.11} & \ms{98.63}{0.09} & \ms{99.45}{0.12} & \ms{99.68}{0.14} & \ms{99.64}{0.12} & \ms{99.17}{0.19} \\
 
 & \vecstatCos & \ms{98.27}{0.17} & \ms{98.23}{0.50} & \ms{96.91}{0.05} & \ms{99.63}{0.17} & \ms{98.84}{0.09} & \ms{99.47}{0.13} & \ms{99.68}{0.14} & \ms{99.58}{0.07} & \ms{98.54}{0.30} \\

\midrule

\multirow{6}{*}{Llama-3.2-1B} & \embedavg & \ms{99.97}{0.03} & \ms{99.98}{0.04} & \ms{99.97}{0.05} & \ms{99.96}{0.06} & \ms{99.91}{0.02} & \ms{100.00}{0.00} & \ms{100.00}{0.00} & \ms{100.00}{0.00} & \ms{99.94}{0.03} \\

 & \embedlast & \ms{99.21}{0.24} & \ms{99.08}{0.16} & \ms{99.41}{0.10} & \ms{99.85}{0.17} & \ms{99.25}{0.26} & \ms{99.21}{0.32} & \ms{99.84}{0.27} & \ms{99.75}{0.09} & \ms{99.58}{0.11} \\

 & \linearavg & \ms{99.97}{0.03} & \ms{99.96}{0.04} & \ms{99.94}{0.05} & \ms{99.96}{0.06} & \ms{99.89}{0.01} & \ms{99.98}{0.03} & \ms{100.00}{0.00} & \ms{100.00}{0.00} & \ms{99.93}{0.03} \\

\cdashline{2-11}
 
 & \normstatKL & \ms{44.23}{1.37} & \ms{39.54}{2.26} & \ms{35.09}{1.07} & \ms{49.25}{1.60} & \ms{38.65}{0.71} & \ms{54.91}{0.78} & \ms{90.95}{1.56} & \ms{55.63}{1.24} & \ms{34.67}{1.81} \\
  
  & \vecstatKL & \ms{98.77}{0.19} & \ms{98.37}{0.43} & \ms{97.14}{0.07} & \ms{99.63}{0.17} & \ms{98.66}{0.21} & \ms{99.79}{0.03} & \ms{99.84}{0.27} & \ms{99.73}{0.18} & \ms{99.02}{0.25} \\
 
 & \vecstatCos & \ms{98.53}{0.12} & \ms{97.78}{0.25} & \ms{96.34}{0.09} & \ms{99.52}{0.28} & \ms{98.86}{0.18} & \ms{99.48}{0.15} & \ms{99.60}{0.14} & \ms{99.49}{0.25} & \ms{98.81}{0.36} \\

\midrule

\multirow{6}{*}{Llama-3.2-1B-Instruct} & \embedavg & \ms{99.95}{0.05} & \ms{99.98}{0.04} & \ms{99.97}{0.03} & \ms{99.96}{0.06} & \ms{99.91}{0.01} & \ms{100.00}{0.00} & \ms{99.92}{0.14} & \ms{100.00}{0.00} & \ms{99.99}{0.03} \\

 & \embedlast & \ms{99.07}{0.39} & \ms{99.23}{0.15} & \ms{99.12}{0.40} & \ms{99.55}{0.22} & \ms{99.27}{0.11} & \ms{99.00}{0.49} & \ms{99.84}{0.27} & \ms{99.70}{0.16} & \ms{99.64}{0.04} \\

 & \linearavg & \ms{99.97}{0.03} & \ms{99.96}{0.04} & \ms{99.95}{0.05} & \ms{99.96}{0.06} & \ms{99.88}{0.03} & \ms{100.00}{0.00} & \ms{99.92}{0.14} & \ms{100.00}{0.00} & \ms{99.99}{0.03} \\

 \cdashline{2-11}

 
 & \normstatKL & \ms{41.34}{1.45} & \ms{45.89}{0.57} & \ms{33.32}{1.51} & \ms{45.23}{2.25} & \ms{46.42}{1.32} & \ms{55.80}{0.67} & \ms{89.76}{2.52} & \ms{57.64}{0.72} & \ms{34.05}{2.20} \\
  
  & \vecstatKL & \ms{98.56}{0.19} & \ms{97.58}{0.19} & \ms{97.25}{0.13} & \ms{99.70}{0.17} & \ms{98.87}{0.16} & \ms{99.66}{0.08} & \ms{99.84}{0.27} & \ms{99.62}{0.14} & \ms{99.02}{0.27} \\
 
 & \vecstatCos & \ms{98.56}{0.19} & \ms{97.46}{0.34} & \ms{96.85}{0.23} & \ms{99.78}{0.19} & \ms{99.00}{0.13} & \ms{99.37}{0.10} & \ms{99.84}{0.27} & \ms{99.64}{0.14} & \ms{98.87}{0.32} \\

 \midrule

\multirow{7}{*}{Qwen3-4B} & \embedavg & \ms{99.97}{0.03} & \ms{99.94}{0.00} & \ms{100.00}{0.00} & \ms{99.96}{0.06} & \ms{99.89}{0.01} & \ms{100.00}{0.00} & \ms{100.00}{0.00} & \ms{100.00}{0.00} & \ms{99.94}{0.03} \\

 & \embedlast & \ms{99.37}{0.11} & \ms{99.53}{0.13} & \ms{99.46}{0.03} & \ms{99.81}{0.13} & \ms{99.48}{0.02} & \ms{99.35}{0.27} & \ms{99.84}{0.27} & \ms{99.73}{0.14} & \ms{99.66}{0.07} \\

 & \linearavg & \ms{99.97}{0.03} & \ms{99.96}{0.07} & \ms{99.98}{0.03} & \ms{99.96}{0.06} & \ms{99.89}{0.02} & \ms{100.00}{0.00} & \ms{100.00}{0.00} & \ms{100.00}{0.00} & \ms{99.93}{0.05} \\

 \cdashline{2-11}
 
 
 & \normstatKL & \ms{54.55}{1.04} & \ms{40.86}{1.39} & \ms{35.45}{0.28} & \ms{56.26}{0.68} & \ms{69.71}{1.04} & \ms{68.76}{0.65} & \ms{93.10}{1.33} & \ms{72.17}{1.20} & \ms{48.70}{0.32} \\
 
 & \vecstatKL & \ms{99.26}{0.08} & \ms{98.60}{0.38} & \ms{97.95}{0.12} & \ms{99.74}{0.26} & \ms{99.07}{0.09} & \ms{99.82}{0.03} & \ms{99.76}{0.00} & \ms{99.87}{0.12} & \ms{99.30}{0.19} \\
 
 & \vecstatCos & \ms{98.92}{0.09} & \ms{98.86}{0.44} & \ms{98.18}{0.05} & \ms{99.89}{0.19} & \ms{99.13}{0.08} & \ms{99.53}{0.10} & \ms{99.84}{0.14} & \ms{99.79}{0.13} & \ms{99.26}{0.23} \\

 \cdashline{2-11}

 & \llmcall{0} & \ms{99.11}{0.17} & \ms{99.78}{0.07} & \ms{99.49}{0.12} & \ms{99.89}{0.00} & \ms{100.00}{0.00} & \ms{99.97}{0.03} & \ms{99.92}{0.14} & \ms{94.11}{0.62} & \ms{52.13}{0.36} \\

\midrule

 \multirow{7}{*}{Qwen3-8B} & \embedavg & \ms{99.95}{0.05} & \ms{100.00}{0.00} & \ms{100.00}{0.00} & \ms{99.96}{0.06} & \ms{99.91}{0.01} & \ms{100.00}{0.00} & \ms{100.00}{0.00} & \ms{100.00}{0.00} & \ms{99.94}{0.03} \\
 
 & \embedlast & \ms{99.61}{0.16} & \ms{99.69}{0.18} & \ms{99.47}{0.12} & \ms{100.00}{0.00} & \ms{99.39}{0.22} & \ms{99.53}{0.10} & \ms{99.92}{0.14} & \ms{99.75}{0.07} & \ms{99.61}{0.10} \\

 & \linearavg & \ms{99.97}{0.03} & \ms{100.00}{0.00} & \ms{100.00}{0.00} & \ms{99.96}{0.06} & \ms{99.91}{0.03} & \ms{100.00}{0.00} & \ms{100.00}{0.00} & \ms{100.00}{0.00} & \ms{99.93}{0.03} \\

 \cdashline{2-11}
 
 
 & \normstatKL & \ms{47.37}{0.94} & \ms{38.26}{1.28} & \ms{27.59}{0.65} & \ms{60.14}{1.27} & \ms{67.73}{0.54} & \ms{63.05}{0.43} & \ms{91.51}{0.60} & \ms{62.54}{1.25} & \ms{49.28}{0.84} \\
 
 & \vecstatKL & \ms{99.09}{0.11} & \ms{98.58}{0.42} & \ms{97.78}{0.09} & \ms{99.78}{0.22} & \ms{99.01}{0.04} & \ms{99.81}{0.00} & \ms{99.76}{0.00} & \ms{99.91}{0.09} & \ms{99.38}{0.20} \\
 
 & \vecstatCos & \ms{98.90}{0.24} & \ms{98.96}{0.38} & \ms{98.49}{0.03} & \ms{99.85}{0.13} & \ms{99.13}{0.12} & \ms{99.55}{0.07} & \ms{100.00}{0.00} & \ms{99.81}{0.12} & \ms{99.35}{0.19} \\

\cdashline{2-11}

& LM Call & \ms{99.81}{0.15} & \ms{100.00}{0.00} & \ms{99.98}{0.03} & \ms{100.00}{0.00} & \ms{100.00}{0.00} & \ms{100.00}{0.00} & \ms{99.92}{0.14} & \ms{100.00}{0.00} & \ms{90.33}{0.35} \\

\midrule
\multirow{7}{*}{Qwen3-32B} & \embedavg & \ms{99.91}{0.03} & \ms{99.98}{0.04} & \ms{100.00}{0.00} & \ms{99.96}{0.06} & \ms{99.91}{0.03} & \ms{99.97}{0.06} & \ms{100.00}{0.00} & \ms{100.00}{0.00} & \ms{99.96}{0.04} \\

 & \embedlast & \ms{98.01}{0.41} & \ms{97.27}{0.95} & \ms{97.99}{0.53} & \ms{98.77}{0.22} & \ms{98.26}{0.21} & \ms{99.21}{0.06} & \ms{98.97}{0.60} & \ms{98.71}{0.48} & \ms{98.50}{0.07} \\

 & \linearavg & \ms{99.91}{0.03} & \ms{99.98}{0.04} & \ms{100.00}{0.00} & \ms{99.96}{0.06} & \ms{99.92}{0.00} & \ms{99.97}{0.06} & \ms{100.00}{0.00} & \ms{100.00}{0.00} & \ms{99.96}{0.04} \\

\cdashline{2-11}

 
 & \normstatKL & \ms{53.57}{0.94} & \ms{36.94}{1.38} & \ms{25.47}{1.43} & \ms{59.47}{0.57} & \ms{61.66}{1.40} & \ms{67.63}{0.54} & \ms{89.29}{0.71} & \ms{69.35}{1.19} & \ms{49.84}{1.08} \\
 
 & \vecstatKL & \ms{99.01}{0.11} & \ms{99.35}{0.34} & \ms{98.72}{0.03} & \ms{99.89}{0.11} & \ms{99.29}{0.06} & \ms{99.90}{0.05} & \ms{100.00}{0.00} & \ms{99.85}{0.09} & \ms{99.75}{0.16} \\
 
 & \vecstatCos & \ms{99.18}{0.14} & \ms{99.53}{0.19} & \ms{99.01}{0.10} & \ms{99.89}{0.11} & \ms{99.50}{0.04} & \ms{99.81}{0.13} & \ms{100.00}{0.00} & \ms{99.87}{0.12} & \ms{99.72}{0.09} \\

\cdashline{2-11}

& \llmcall{0} & \ms{99.90}{0.05} & \ms{100.00}{0.00} & \ms{99.98}{0.03} & \ms{99.96}{0.06} & \ms{99.98}{0.00} & \ms{100.00}{0.00} & \ms{99.84}{0.14} & \ms{100.00}{0.00} & \ms{98.68}{0.23} \\

\bottomrule
\end{tabular}%
}
\end{table}

\begin{table}[h]
        \centering
        \small
        \caption{\small Task~3 classification results for all seven LLMs (level-1 granularity). Values represent per-language accuracy across five natural languages from the Aya dataset.}
        \label{tab:level-nat-lang-results}
        \resizebox{\linewidth}{!}{%
        \begin{tabular}{llccccc}
        \toprule
        Model & Method & English & Japanese & Moroccan Arabic & Sinhala & Tamil \\
        \midrule
        \multirow{7}{*}{Qwen3-1.7B} & \embedavg & \ms{99.74}{0.12} & \ms{99.99}{0.02} & \ms{99.97}{0.03} & \ms{99.99}{0.02} & \ms{99.94}{0.04} \\
 
 & \embedlast & \ms{99.49}{0.15} & \ms{100.00}{0.00} & \ms{99.97}{0.03} & \ms{99.93}{0.06} & \ms{99.94}{0.10} \\

 & \linearavg & \ms{99.79}{0.11} & \ms{100.00}{0.00} & \ms{99.97}{0.03} & \ms{99.99}{0.02} & \ms{99.94}{0.04} \\

 \cdashline{2-7}
 
 
 & \normstatKL & \ms{82.26}{1.39} & \ms{81.42}{0.81} & \ms{85.78}{0.86} & \ms{99.87}{0.12} & \ms{99.52}{0.12} \\
 
 & \vecstatKL & \ms{96.27}{0.34} & \ms{99.83}{0.03} & \ms{99.97}{0.00} & \ms{99.99}{0.02} & \ms{99.93}{0.06} \\
 
 & \vecstatCos & \ms{98.09}{0.10} & \ms{99.82}{0.05} & \ms{99.99}{0.02} & \ms{99.99}{0.02} & \ms{99.94}{0.04} \\

\cdashline{2-7}

& \llmcall{0} & \ms{97.27}{0.12} & \ms{72.24}{0.48} & \ms{94.57}{0.38} & \ms{9.78}{0.38} & \ms{99.92}{0.02} \\
 
\midrule

\multirow{7}{*}{Qwen3-1.7B-Base} & \embedavg & \ms{99.64}{0.15} & \ms{100.00}{0.00} & \ms{99.94}{0.04} & \ms{99.99}{0.02} & \ms{99.96}{0.02} \\
 
 & \embedlast & \ms{99.67}{0.12} & \ms{100.00}{0.00} & \ms{99.97}{0.03} & \ms{99.98}{0.02} & \ms{99.87}{0.03} \\

 & \linearavg & \ms{99.66}{0.16} & \ms{99.98}{0.04} & \ms{99.94}{0.04} & \ms{99.99}{0.02} & \ms{99.92}{0.08} \\
 
\cdashline{2-7}
 
 & \normstatKL & \ms{82.58}{1.88} & \ms{83.08}{1.35} & \ms{73.59}{0.57} & \ms{99.83}{0.09} & \ms{98.43}{0.17} \\
 
 & \vecstatKL & \ms{97.41}{0.13} & \ms{99.84}{0.05} & \ms{99.97}{0.03} & \ms{99.99}{0.02} & \ms{99.93}{0.06} \\
 
 & \vecstatCos & \ms{98.46}{0.13} & \ms{99.83}{0.09} & \ms{100.00}{0.00} & \ms{99.99}{0.02} & \ms{99.94}{0.04} \\

\cdashline{2-7}

& \llmcall{0} & \ms{82.37}{0.30} & \ms{58.67}{0.67} & \ms{96.14}{0.41} & \ms{35.30}{0.88} & \ms{99.26}{0.07} \\

\midrule

\multirow{6}{*}{Llama-3.2-1B} & \embedavg & \ms{99.63}{0.21} & \ms{100.00}{0.00} & \ms{99.97}{0.03} & \ms{99.99}{0.02} & \ms{99.96}{0.02} \\

 & \embedlast & \ms{99.60}{0.18} & \ms{100.00}{0.00} & \ms{99.97}{0.03} & \ms{99.87}{0.00} & \ms{99.71}{0.28} \\

 & \linearavg & \ms{99.61}{0.08} & \ms{99.99}{0.02} & \ms{99.97}{0.03} & \ms{99.99}{0.02} & \ms{99.96}{0.02} \\
 
\cdashline{2-7}
 
 & \normstatKL & \ms{77.26}{6.21} & \ms{57.36}{3.14} & \ms{98.56}{0.11} & \ms{99.67}{0.20} & \ms{99.82}{0.19} \\
 
 & \vecstatKL & \ms{96.12}{0.39} & \ms{99.92}{0.02} & \ms{99.96}{0.02} & \ms{100.00}{0.00} & \ms{99.96}{0.02} \\
 
 & \vecstatCos & \ms{98.76}{0.07} & \ms{99.98}{0.02} & \ms{99.96}{0.02} & \ms{99.99}{0.02} & \ms{99.94}{0.04} \\

\midrule

\multirow{6}{*}{Llama-3.2-1B-Instruct} & \embedavg & \ms{99.72}{0.10} & \ms{99.99}{0.02} & \ms{99.94}{0.04} & \ms{99.99}{0.02} & \ms{99.97}{0.03} \\

 & \embedlast & \ms{99.58}{0.15} & \ms{100.00}{0.00} & \ms{99.96}{0.02} & \ms{99.94}{0.07} & \ms{99.82}{0.20} \\

 & \linearavg & \ms{99.67}{0.12} & \ms{100.00}{0.00} & \ms{99.94}{0.04} & \ms{99.99}{0.02} & \ms{99.96}{0.02} \\
 
\cdashline{2-7}
 
 & \normstatKL & \ms{79.47}{5.07} & \ms{42.79}{2.16} & \ms{98.54}{0.18} & \ms{99.86}{0.13} & \ms{99.86}{0.07} \\
 
 & \vecstatKL & \ms{96.04}{0.39} & \ms{99.84}{0.13} & \ms{99.98}{0.02} & \ms{99.99}{0.02} & \ms{99.96}{0.02} \\
 
 & \vecstatCos & \ms{98.59}{0.12} & \ms{99.98}{0.02} & \ms{99.94}{0.04} & \ms{99.99}{0.02} & \ms{99.94}{0.04} \\

\midrule

\multirow{7}{*}{Qwen3-4B} & \embedavg & \ms{99.71}{0.10} & \ms{100.00}{0.00} & \ms{99.94}{0.04} & \ms{99.99}{0.02} & \ms{99.96}{0.02} \\

 & \embedlast & \ms{99.60}{0.15} & \ms{100.00}{0.00} & \ms{99.97}{0.03} & \ms{99.99}{0.02} & \ms{99.99}{0.02} \\

 & \linearavg & \ms{99.74}{0.12} & \ms{100.00}{0.00} & \ms{99.94}{0.04} & \ms{99.99}{0.02} & \ms{99.96}{0.02} \\

 \cdashline{2-7}

 & \normstatKL & \ms{84.60}{0.34} & \ms{90.29}{1.27} & \ms{90.53}{0.61} & \ms{99.84}{0.07} & \ms{99.11}{0.54} \\
 
 & \vecstatKL & \ms{93.99}{0.82} & \ms{99.77}{0.09} & \ms{100.00}{0.00} & \ms{99.99}{0.02} & \ms{99.93}{0.06} \\
 
 & \vecstatCos & \ms{97.73}{0.12} & \ms{99.88}{0.10} & \ms{100.00}{0.00} & \ms{99.99}{0.02} & \ms{99.94}{0.04} \\

 \cdashline{2-7}

 & \llmcall{0} & \ms{97.00}{0.09} & \ms{99.16}{0.08} & \ms{98.56}{0.19} & \ms{88.37}{0.66} & \ms{99.93}{0.00} \\

\midrule

\multirow{7}{*}{Qwen3-8B} & \embedavg & \ms{99.70}{0.06} & \ms{100.00}{0.00} & \ms{99.94}{0.04} & \ms{99.99}{0.02} & \ms{99.97}{0.00} \\
 
 & \embedlast & \ms{99.53}{0.15} & \ms{100.00}{0.00} & \ms{99.97}{0.03} & \ms{99.99}{0.02} & \ms{99.92}{0.08} \\

 & \linearavg & \ms{99.81}{0.05} & \ms{100.00}{0.00} & \ms{99.94}{0.04} & \ms{99.99}{0.02} & \ms{99.96}{0.02} \\

 \cdashline{2-7}
 
 & \normstatKL & \ms{85.06}{0.85} & \ms{92.03}{1.32} & \ms{76.76}{0.77} & \ms{99.92}{0.04} & \ms{96.69}{2.25} \\
 
 & \vecstatKL & \ms{96.27}{0.38} & \ms{99.82}{0.05} & \ms{100.00}{0.00} & \ms{100.00}{0.00} & \ms{99.93}{0.06} \\
 
 & \vecstatCos & \ms{98.66}{0.13} & \ms{99.83}{0.09} & \ms{100.00}{0.00} & \ms{99.99}{0.02} & \ms{99.94}{0.04} \\

 \cdashline{2-7}

 & \llmcall{0} & \ms{97.64}{0.12} & \ms{97.28}{0.14} & \ms{99.96}{0.05} & \ms{13.83}{0.61} & \ms{99.89}{0.04} \\

\midrule
\multirow{6}{*}{Qwen3-32B} & \embedavg & \ms{99.76}{0.17} & \ms{99.99}{0.02} & \ms{99.94}{0.04} & \ms{99.99}{0.02} & \ms{99.96}{0.04} \\
 
 & \embedlast & \ms{99.31}{0.02} & \ms{100.00}{0.00} & \ms{99.96}{0.02} & \ms{99.99}{0.02} & \ms{99.94}{0.04} \\

 & \linearavg & \ms{99.87}{0.06} & \ms{100.00}{0.00} & \ms{99.94}{0.04} & \ms{99.98}{0.02} & \ms{99.93}{0.03} \\

 \cdashline{2-7}
 
 
 & \normstatKL & \ms{82.00}{0.74} & \ms{72.11}{1.63} & \ms{96.17}{0.40} & \ms{99.48}{0.10} & \ms{98.32}{0.35} \\
 
 & \vecstatKL & \ms{94.08}{0.27} & \ms{99.68}{0.18} & \ms{100.00}{0.00} & \ms{99.99}{0.02} & \ms{99.94}{0.04} \\
 
 & \vecstatCos & \ms{98.20}{0.12} & \ms{99.78}{0.07} & \ms{100.00}{0.00} & \ms{99.99}{0.02} & \ms{99.96}{0.02} \\

\cdashline{2-7}

& \llmcall{0} & \ms{97.66}{0.02} & \ms{98.34}{0.12} & \ms{99.96}{0.05} & \ms{99.84}{0.05} & \ms{100.00}{0.00} \\

\bottomrule
\end{tabular}%
}
\end{table}

\subsection{Additional Results for Task~4 (Level-2)}\label{appendix:l2-exp}

\Cref{tab:level-2-math-results} reports per-subfield accuracy for Task~4 (mathematical subfield classification) across all seven LLMs.

\begin{table}[h]
\centering
\small
\caption{\small Task~4 classification results for all seven LLMs (level-2 granularity). Values represent per-subfield accuracy across seven mathematical subfields from the Competition Math dataset.}
\label{tab:level-2-math-results}
\resizebox{\linewidth}{!}{%
\begin{tabular}{llccccccc}
\toprule
Model & Method & Algebra & Counting \& Probability & Geometry & Intermediate Algebra & Number Theory & Prealgebra & Precalculus \\
\midrule
\multirow{7}{*}{Qwen3-1.7B} & \embedavg & \ms{71.78}{5.57} & \ms{79.48}{1.91} & \ms{89.98}{3.46} & \ms{79.33}{3.96} & \ms{84.02}{2.56} & \ms{50.00}{2.29} & \ms{82.79}{3.37} \\
 
 & \embedlast & \ms{65.79}{3.55} & \ms{76.10}{4.24} & \ms{84.88}{4.43} & \ms{71.86}{6.02} & \ms{82.32}{3.80} & \ms{50.31}{2.59} & \ms{73.58}{4.30} \\

 & \linearavg & \ms{64.13}{3.32} & \ms{75.88}{0.79} & \ms{89.31}{1.47} & \ms{81.14}{3.20} & \ms{79.47}{4.34} & \ms{52.66}{2.86} & \ms{79.74}{1.54} \\

\cdashline{2-9}
 
 
 & \normstatKL & \ms{23.67}{1.27} & \ms{37.83}{0.65} & \ms{48.15}{4.09} & \ms{58.75}{1.08} & \ms{65.46}{1.25} & \ms{0.97}{0.28} & \ms{33.88}{1.02} \\

& \vecstatKL & \ms{33.29}{1.39} & \ms{53.71}{2.00} & \ms{35.88}{3.28} & \ms{81.88}{0.58} & \ms{86.26}{0.62} & \ms{1.46}{0.05} & \ms{46.07}{1.82} \\

& \vecstatCos & \ms{57.66}{0.70} & \ms{61.50}{2.16} & \ms{36.07}{3.37} & \ms{73.58}{0.83} & \ms{86.92}{0.38} & \ms{2.22}{0.32} & \ms{51.22}{1.61} \\

\cdashline{2-9}

 &\llmcall{0} & \ms{28.30}{1.21} & \ms{53.18}{1.06} & \ms{55.49}{1.47} & \ms{1.31}{0.15} & \ms{14.61}{0.72} & \ms{7.37}{0.16} & \ms{71.48}{2.46} \\
 
\midrule

\multirow{7}{*}{Qwen3-1.7B-Base} & \embedavg & \ms{67.96}{1.83} & \ms{79.55}{0.67} & \ms{90.53}{0.55} & \ms{83.93}{2.34} & \ms{84.02}{2.62} & \ms{48.85}{2.19} & \ms{83.47}{3.46} \\
 
 & \embedlast & \ms{70.97}{4.48} & \ms{80.97}{2.25} & \ms{90.89}{1.14} & \ms{74.49}{2.32} & \ms{81.50}{6.42} & \ms{51.93}{7.96} & \ms{78.25}{2.55} \\

 & \linearavg & \ms{66.46}{4.16} & \ms{75.81}{2.04} & \ms{88.95}{1.00} & \ms{81.57}{1.99} & \ms{81.06}{2.52} & \ms{51.33}{2.29} & \ms{79.54}{1.31} \\

 \cdashline{2-9}
 
 
 & \normstatKL & \ms{23.35}{0.70} & \ms{37.23}{2.12} & \ms{47.36}{4.20} & \ms{56.65}{1.30} & \ms{58.84}{2.38} & \ms{5.67}{2.17} & \ms{34.82}{0.82} \\

& \vecstatKL & \ms{39.48}{3.28} & \ms{50.86}{1.72} & \ms{35.88}{3.37} & \ms{81.31}{0.65} & \ms{88.12}{0.58} & \ms{1.52}{0.52} & \ms{46.41}{1.24} \\

& \vecstatCos & \ms{61.24}{0.90} & \ms{62.62}{2.31} & \ms{36.00}{3.55} & \ms{73.46}{0.80} & \ms{88.18}{0.28} & \ms{2.66}{0.67} & \ms{53.59}{0.82} \\

\cdashline{2-9}

& \llmcall{0} & \ms{82.37}{0.30} & \ms{58.67}{0.67} & \ms{96.14}{0.41} & \ms{35.30}{0.88} & \ms{99.26}{0.07} \\
 
\midrule

\multirow{6}{*}{Llama-3.2-1B} & \embedavg & \ms{57.99}{6.53} & \ms{75.13}{5.42} & \ms{86.22}{4.37} & \ms{75.94}{4.00} & \ms{78.43}{0.09} & \ms{41.67}{5.38} & \ms{76.02}{1.42} \\

 & \embedlast & \ms{58.38}{4.88} & \ms{78.35}{2.40} & \ms{91.20}{1.28} & \ms{74.44}{3.57} & \ms{77.45}{8.21} & \ms{36.26}{4.71} & \ms{67.01}{5.55} \\

 & \linearavg & \ms{53.70}{6.28} & \ms{70.94}{5.13} & \ms{81.60}{9.17} & \ms{74.15}{2.30} & \ms{80.41}{7.23} & \ms{44.36}{5.95} & \ms{76.96}{7.70} \\

 \cdashline{2-9}

 
 & \normstatKL & \ms{19.46}{3.91} & \ms{16.33}{1.35} & \ms{38.07}{2.98} & \ms{37.01}{3.69} & \ms{80.24}{2.55} & \ms{0.05}{0.05} & \ms{27.71}{1.73} \\

& \vecstatKL & \ms{34.57}{0.92} & \ms{50.34}{1.92} & \ms{38.19}{4.30} & \ms{76.13}{0.97} & \ms{84.89}{0.59} & \ms{1.38}{0.12} & \ms{48.17}{2.66} \\
 
& \vecstatCos & \ms{44.71}{0.78} & \ms{59.48}{2.26} & \ms{44.63}{5.73} & \ms{72.65}{0.35} & \ms{83.96}{1.19} & \ms{2.30}{0.32} & \ms{52.57}{0.92} \\

\midrule

\multirow{6}{*}{Llama-3.2-1B-Instruct} & \embedavg & \ms{64.88}{4.97} & \ms{78.95}{1.01} & \ms{86.76}{4.30} & \ms{76.82}{5.70} & \ms{85.39}{0.43} & \ms{45.27}{2.58} & \ms{84.35}{2.00} \\

 & \embedlast & \ms{73.63}{4.07} & \ms{79.48}{2.60} & \ms{88.71}{2.03} & \ms{78.02}{2.53} & \ms{83.58}{0.16} & \ms{51.28}{3.11} & \ms{83.47}{1.84} \\

 & \linearavg & \ms{57.23}{5.70} & \ms{77.83}{1.72}& \ms{83.36}{2.16} & \ms{82.14}{0.69} & \ms{80.57}{1.53} & \ms{48.75}{2.43} & \ms{82.05}{0.96} \\

 \cdashline{2-9}
 
 
 & \normstatKL & \ms{4.38}{3.24} & \ms{11.69}{1.57} & \ms{36.13}{3.19} & \ms{37.34}{3.24} & \ms{84.40}{1.00} & \ms{0.26}{0.12} & \ms{27.98}{4.39} \\
  
  & \vecstatKL & \ms{40.03}{2.62} & \ms{45.09}{1.50} & \ms{35.88}{3.46} & \ms{74.32}{1.49} & \ms{88.67}{0.43} & \ms{1.25}{0.14} & \ms{46.75}{2.34} \\
 
 & \vecstatCos & \ms{53.20}{1.76} & \ms{59.85}{1.80} & \ms{36.79}{4.02} & \ms{72.72}{1.04} & \ms{86.59}{0.19} & \ms{1.78}{0.09} & \ms{52.17}{1.54} \\


\midrule

 \multirow{7}{*}{Qwen3-4B} & \embedavg & \ms{73.52}{1.54} & \ms{79.63}{4.68} & \ms{89.80}{2.86} & \ms{79.16}{1.48} & \ms{80.84}{0.78} & \ms{51.23}{3.88} & \ms{84.62}{2.94} \\
 
 & \embedlast & \ms{68.98}{9.23} & \ms{82.47}{1.96} & \ms{82.70}{2.37} & \ms{67.84}{7.26} & \ms{81.55}{3.51} & \ms{46.89}{4.30} & \ms{79.95}{1.50} \\

 & \linearavg & \ms{70.64}{2.89} & \ms{80.07}{0.34} & \ms{90.59}{3.57} & \ms{80.02}{2.61} & \ms{79.97}{3.76} & \ms{49.53}{5.57} & \ms{81.91}{2.12} \\

 \cdashline{2-9}
 
 & \normstatKL & \ms{19.57}{3.97} & \ms{17.15}{0.91} & \ms{42.26}{3.41} & \ms{47.00}{2.72} & \ms{76.79}{2.63} & \ms{0.60}{0.25} & \ms{34.28}{1.70} \\
 
 & \vecstatKL & \ms{29.80}{3.56} & \ms{51.99}{1.30} & \ms{35.82}{3.37} & \ms{82.31}{1.15} & \ms{89.00}{0.28} & \ms{1.36}{0.05} & \ms{54.81}{3.00} \\
 
 & \vecstatCos & \ms{62.24}{2.09} & \ms{61.72}{2.03} & \ms{36.13}{3.19} & \ms{73.89}{1.24} & \ms{88.83}{0.33} & \ms{1.96}{0.08} & \ms{53.59}{1.32} \\

\cdashline{2-9}

& \llmcall{0} & \ms{6.05}{0.17} & \ms{78.88}{0.81} & \ms{59.14}{1.17} & \ms{60.21}{0.32} & \ms{67.16}{1.78} & \ms{21.39}{0.77} & \ms{80.69}{0.20} \\

 \midrule

 \multirow{7}{*}{Qwen3-8B} & \embedavg & \ms{73.94}{1.63} & \ms{82.70}{0.98} & \ms{87.13}{0.56} & \ms{81.00}{3.61} & \ms{86.59}{4.35} & \ms{52.95}{5.24} & \ms{89.36}{0.42} \\
 
 & \embedlast & \ms{68.43}{1.81} & \ms{80.52}{2.03} & \ms{89.92}{1.91} & \ms{73.87}{6.12} & \ms{82.92}{5.55} & \ms{53.58}{0.64} & \ms{81.10}{5.72} \\

 & \linearavg & \ms{73.28}{4.60} & \ms{80.82}{3.83} & \ms{90.29}{2.47} & \ms{79.92}{3.87} & \ms{82.65}{0.90} & \ms{52.19}{3.70} & \ms{84.76}{0.54} \\

 \cdashline{2-9}
 
 & \normstatKL & \ms{21.01}{6.35} & \ms{19.03}{1.50} & \ms{40.19}{3.56} & \ms{46.38}{0.91} & \ms{73.45}{4.22} & \ms{0.73}{0.59} & \ms{31.98}{1.31} \\
 
 & \vecstatKL & \ms{36.23}{3.18} & \ms{51.24}{1.17} & \ms{35.88}{3.28} & \ms{81.00}{1.60} & \ms{88.89}{0.47} & \ms{1.52}{0.12} & \ms{55.15}{3.27} \\
 
 & \vecstatCos & \ms{64.46}{1.45} & \ms{61.95}{1.87} & \ms{36.25}{3.19} & \ms{74.13}{0.95} & \ms{89.33}{0.33} & \ms{2.06}{0.40} & \ms{55.08}{1.08} \\

\cdashline{2-9}

& \llmcall{0} & \ms{44.49}{0.22} & \ms{83.22}{1.58} & \ms{78.32}{2.46} & \ms{67.76}{0.44} & \ms{52.38}{2.21} & \ms{18.70}{1.02} & \ms{64.63}{1.63} \\

 \midrule

 \multirow{7}{*}{Qwen3-32B} & \embedavg & \ms{73.25}{4.75} & \ms{77.83}{2.89} & \ms{89.01}{3.74} & \ms{84.14}{3.27} & \ms{81.88}{2.64} & \ms{57.92}{4.22} & \ms{86.65}{3.02} \\
 
 & \embedlast & \ms{66.70}{8.95} & \ms{78.43}{2.02} & \ms{88.52}{4.65} & \ms{79.61}{3.33} & \ms{74.06}{11.10} & \ms{50.57}{10.68} & \ms{80.62}{3.30} \\

& \linearavg & \ms{71.97}{5.76} & \ms{79.93}{3.82} & \ms{87.86}{0.92} & \ms{82.33}{3.88} & \ms{85.06}{0.72} & \ms{52.82}{4.70} & \ms{87.53}{1.89} \\

\cdashline{2-9}
 
 
 & \normstatKL & \ms{31.99}{2.02} & \ms{21.72}{0.85} & \ms{40.56}{3.48} & \ms{44.61}{1.14} & \ms{75.53}{2.56} & \ms{0.13}{0.05} & \ms{31.44}{1.47} \\
 
 & \vecstatKL & \ms{55.76}{1.11} & \ms{58.50}{2.04} & \ms{35.94}{3.28} & \ms{77.80}{1.04} & \ms{89.87}{0.34} & \ms{1.46}{0.12} & \ms{52.78}{0.59} \\
 
 & \vecstatCos & \ms{71.16}{0.46} & \ms{64.42}{2.16} & \ms{37.04}{3.20} & \ms{73.32}{0.82} & \ms{89.49}{0.87} & \ms{3.00}{0.72} & \ms{58.47}{1.31} \\

 \cdashline{2-9}

& \llmcall{0} & \ms{69.28}{0.36} & \ms{86.59}{1.44} & \ms{91.92}{0.42} & \ms{35.81}{1.07} & \ms{59.61}{1.31} & \ms{20.04}{0.51} & \ms{63.28}{0.12} \\
 
\bottomrule
\end{tabular}%
}
\end{table}

\begin{table}[t]
        \centering
        \caption{\small Robustness to adversarial attack: accuracy on adversarial MATH500 across all seven LLMs and three difficulty tiers (Easy, Medium, Hard).}
        \label{tab:adv_dataset}
        \small
        \begin{tabular}{llccc}
        \toprule
        Model & Method & Easy & Medium & Hard \\
        \midrule
        \multirow{5}{*}{Qwen3-1.7B} & \embedavg & \ms{21.07}{1.55} & \ms{12.07}{0.61} & \ms{0.00}{0.00} \\
 & \embedlast & \ms{89.13}{13.28} & \ms{54.87}{33.39} & \ms{0.00}{0.00} \\

 \cdashline{2-5}
 & \normstatKL & \ms{62.67}{0.61} & \ms{47.73}{0.50} & \ms{0.00}{0.00} \\
 & \vecstatKL & \ms{64.93}{0.46} & \ms{34.87}{0.76} & \ms{0.00}{0.00} \\
 & \vecstatCos & \ms{81.27}{0.50} & \ms{63.60}{0.20} & \ms{0.20}{0.00} \\
\midrule
\multirow{5}{*}{Qwen3-1.7B-Base} & \embedavg & \ms{2.60}{2.60} & \ms{0.87}{1.17} & \ms{0.00}{0.00} \\
 & \embedlast & \ms{69.73}{19.06} & \ms{40.73}{23.60} & \ms{0.00}{0.00} \\

 \cdashline{2-5}
 
 & \normstatKL & \ms{10.33}{0.23} & \ms{1.20}{0.00} & \ms{0.00}{0.00} \\
 & \vecstatKL & \ms{0.53}{0.12} & \ms{0.00}{0.00} & \ms{0.00}{0.00} \\
 & \vecstatCos & \ms{15.53}{0.46} & \ms{8.13}{0.12} & \ms{0.00}{0.00} \\
\midrule
\multirow{5}{*}{Qwen3-32B} & \embedavg & \ms{11.73}{8.92} & \ms{0.40}{0.69} & \ms{0.00}{0.00} \\
 & \embedlast & \ms{61.60}{14.67} & \ms{0.93}{0.81} & \ms{0.20}{0.00} \\

 \cdashline{2-5}
 
 & \normstatKL & \ms{92.33}{0.12} & \ms{80.67}{0.12} & \ms{0.00}{0.00} \\
 & \vecstatKL & \ms{70.93}{0.31} & \ms{32.33}{0.31} & \ms{0.00}{0.00} \\
 & \vecstatCos & \ms{64.87}{0.58} & \ms{30.33}{0.64} & \ms{0.00}{0.00} \\
\midrule
\multirow{5}{*}{Qwen3-4B} & \embedavg & \ms{18.47}{3.19} & \ms{10.27}{2.34} & \ms{0.00}{0.00} \\
 & \embedlast & \ms{60.73}{16.21} & \ms{8.33}{3.83} & \ms{0.00}{0.00} \\

 \cdashline{2-5}
 
 & \normstatKL & \ms{31.87}{0.31} & \ms{28.33}{0.42} & \ms{0.00}{0.00} \\
 & \vecstatKL & \ms{36.33}{0.12} & \ms{23.33}{0.23} & \ms{0.00}{0.00} \\
 & \vecstatCos & \ms{79.47}{0.31} & \ms{55.80}{0.72} & \ms{0.20}{0.00} \\
\midrule
\multirow{5}{*}{Qwen3-8B} & \embedavg & \ms{12.67}{7.82} & \ms{4.20}{3.30} & \ms{0.00}{0.00} \\
 & \embedlast & \ms{36.67}{31.67} & \ms{1.47}{2.20} & \ms{0.00}{0.00} \\

\cdashline{2-5}
 
 & \normstatKL & \ms{36.13}{0.50} & \ms{27.00}{0.40} & \ms{0.00}{0.00} \\
 & \vecstatKL & \ms{41.00}{0.69} & \ms{25.00}{0.35} & \ms{0.00}{0.00} \\
 & \vecstatCos & \ms{78.60}{1.39} & \ms{54.80}{0.69} & \ms{0.20}{0.00} \\
\midrule
\multirow{5}{*}{Llama-3.2-1B} & \embedavg & \ms{27.07}{8.03} & \ms{14.80}{4.16} & \ms{0.47}{0.42} \\
 & \embedlast & \ms{95.73}{2.20} & \ms{86.80}{8.01} & \ms{6.87}{11.89} \\

\cdashline{2-5}
 
 & \normstatKL & \ms{87.60}{0.00} & \ms{59.20}{0.35} & \ms{5.33}{0.64} \\
 & \vecstatKL & \ms{67.67}{0.12} & \ms{29.67}{0.64} & \ms{1.80}{0.00} \\
 & \vecstatCos & \ms{56.73}{0.58} & \ms{32.40}{0.40} & \ms{2.33}{0.12} \\
\midrule
\multirow{5}{*}{Llama-3.2-1B-Instruct} & \embedavg & \ms{24.60}{9.04} & \ms{19.13}{6.13} & \ms{0.87}{0.99} \\
 & \embedlast & \ms{72.20}{8.72} & \ms{31.27}{8.88} & \ms{0.20}{0.20} \\

\cdashline{2-5}
 
 & \normstatKL & \ms{97.60}{0.00} & \ms{85.13}{0.23} & \ms{11.47}{0.81} \\
 & \vecstatKL & \ms{85.73}{0.12} & \ms{60.00}{0.53} & \ms{3.40}{0.40} \\
 & \vecstatCos & \ms{77.33}{0.12} & \ms{56.40}{0.35} & \ms{4.13}{0.12} \\

 \midrule

 \revise{RoBERTa} & - & \revise{\ms{10.60}{4.72}} & \revise{\ms{5.60}{7.66}} & \revise{{0.00}} \\
 
        \bottomrule
        \end{tabular}%
        \end{table}

\subsection{Additional Results for Uncertainty Quantification for Mixed-Intent Prompts}\label{appendix:mix-intent}
We construct a mixed-intent dataset by interleaving samples from both math and code datasets at five known mix ratios (math:code) \(r \in \{0:1,\ 0.25:0.75,\ 0.5:0.5,\ 0.75:0.25,\ 1:0\}\), assessed across two prompt orderings (code-first and math-first) to probe sensitivity to token-sequence position.
For each method, a scalar temperature $T$ is fitted via least
squares to calibrate raw scores into probabilities:
\[\hat{T} = \arg\min_{T} \sum_{i}
    \Bigl(\sigma\!\bigl(\Delta s_i / T\bigr) - r_i\Bigr)^2, \quad r_i \in \{0:1,\ 0.25:0.75,\ 0.5:0.5,\ 0.75:0.25,\ 1:0\}\]
where $\Delta s_i = s_{\text{math}} - s_{\text{code}}$ is the score
difference at ratio $r_i$ ($s = {-}d$ for distance-based methods;
$s = \text{logit}$ for MLP-based methods), and $\sigma$ is the sigmoid
function.

The calibration error for temperature of each method is calculated by the Root Mean Square Error (RMSE): $\mathrm{RMSE} = \sqrt{\frac{1}{N}\sum_{i=1}^{N}(\hat{p}_i - p_i^*)^2}$,
           where $\hat{p}_i = P(\mathrm{math})$ is the predicted math probability and
           $p_i^*$ is the target math fraction, evaluated at mix-ratio points.

\cref{fig:mixed-intent-prompt} presents an example of the mixed-intent prompt used in~\cref{subsec:mixed-intent}, constructed by concatenating code content with mathematical content. We conducted the experiments across four models, with the order \emph{math\_first} or \emph{code\_first}, with results shown in~\cref{fig:mix-intent-all}.  Moreover, we report in \cref{tab:mix-intent-calibration} the corresponding calibration error. 

\begin{figure}[h]
    \centering
    
    \includegraphics[width=0.998\textwidth]{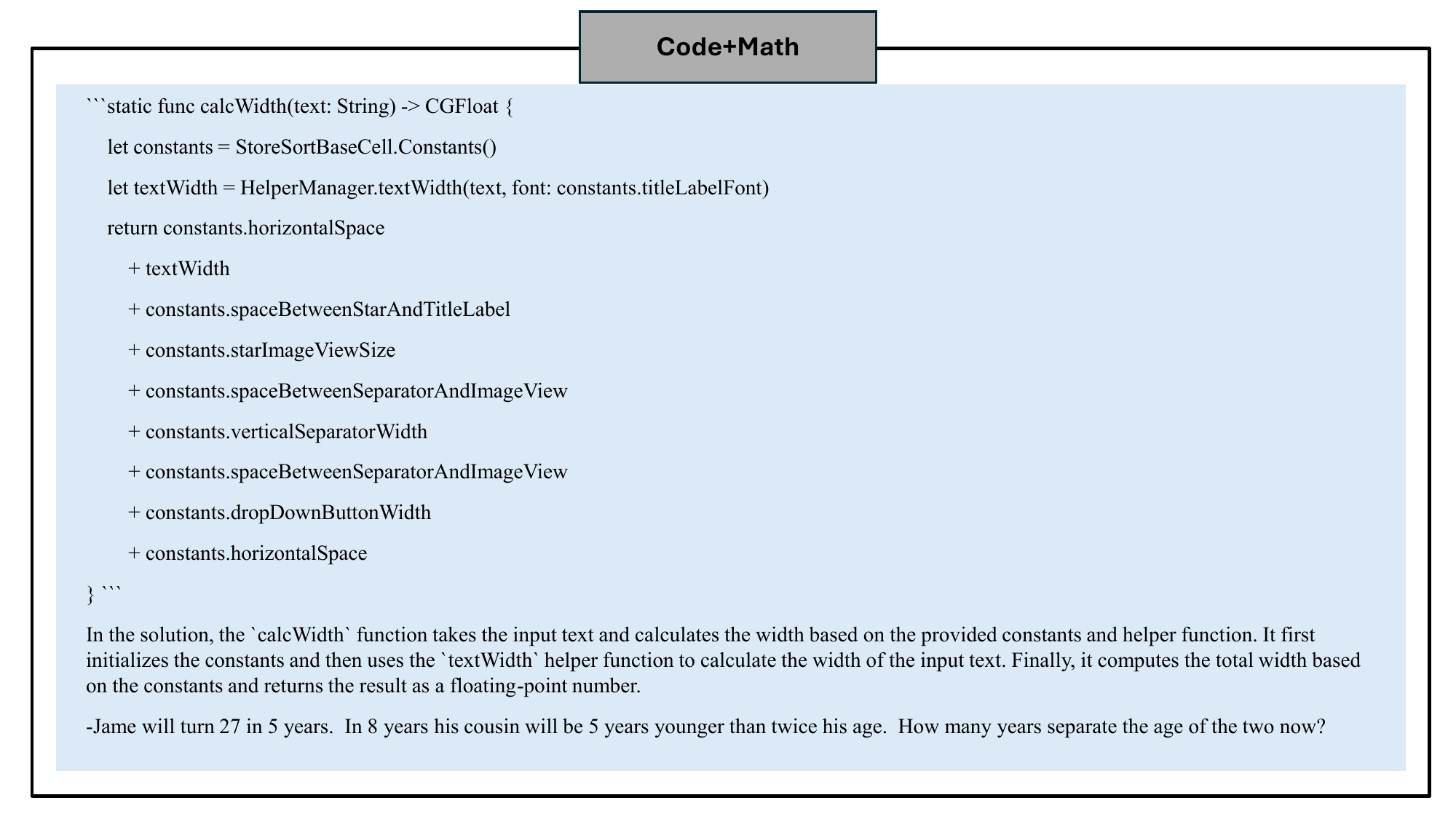} 
    \caption{\small Mixed-intent prompt example combining Swift code (width calculation function) and a mathematical problem used for uncertainty quantification analysis.}
    \label{fig:mixed-intent-prompt}
\end{figure}

\begin{figure}[htb]
    \centering
    \includegraphics[width=0.95\linewidth]{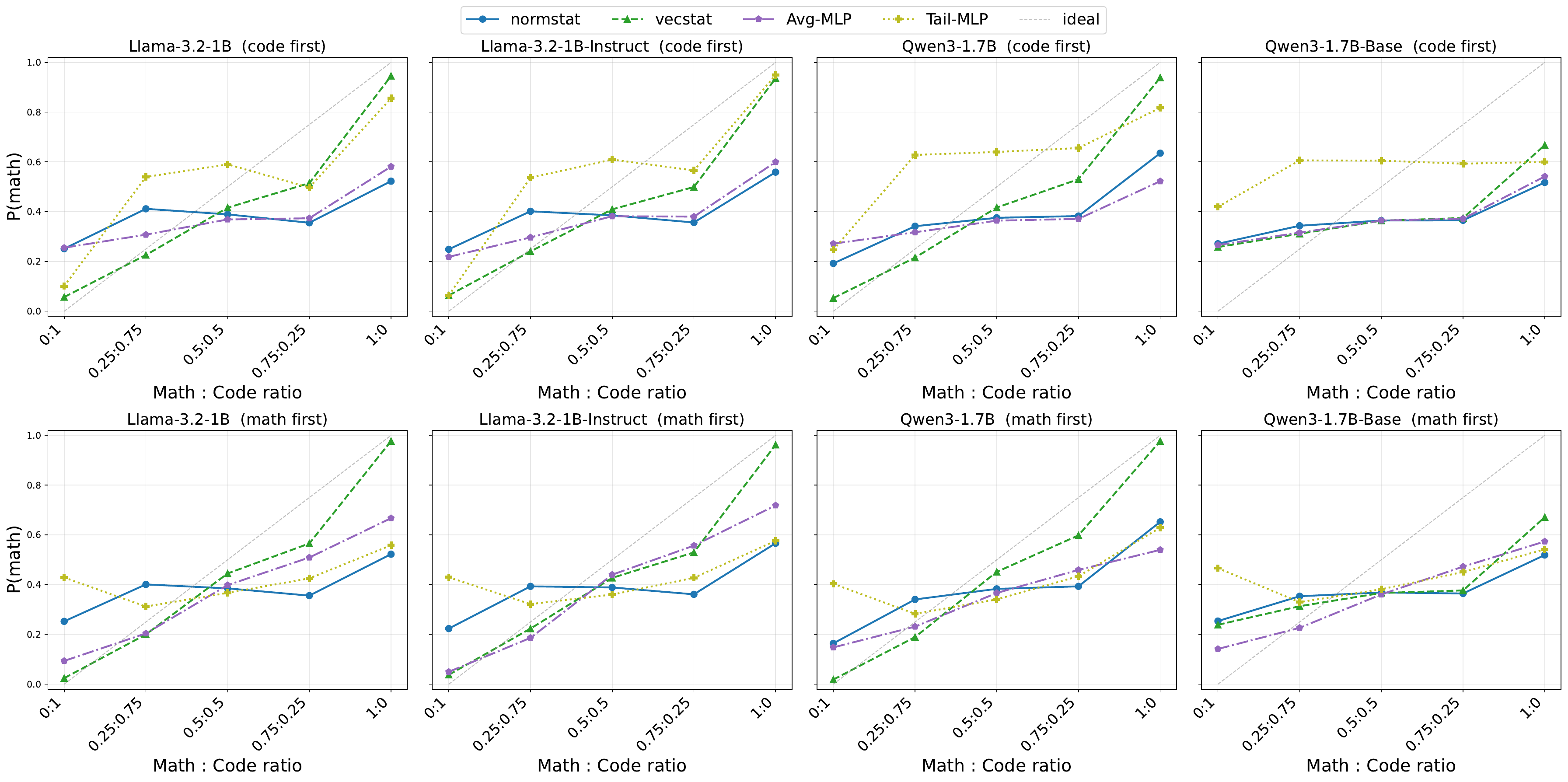}
\caption{Predicted $P(\text{math})$ vs.\ mix ratio for all models and prompt orderings.
Each curve is temperature-calibrated; the dashed diagonal is the ideal response.
Upper and lower subplots show \textit{code-first} and \textit{math-first} prompt orderings, respectively.}
    \label{fig:mix-intent-all}
\end{figure}

\begin{table}[htb]
  \centering
  \caption{Calibration error of each method per model and prompt ordering.
           \textbf{Bold} marks the best method per row.}
  \label{tab:mix-intent-calibration}
  \begin{tabular}{llcccc}
    \toprule
    \textbf{Model} & \textbf{Order} & normstat & vecstat & Avg-MLP & Tail-MLP \\
    \midrule
    \multirow{2}{*}{Llama-3.2-1B}
      & code-first & 0.2535 & \textbf{0.1286} & 0.2384 & 0.2038 \\
      & math-first & 0.2525 & \textbf{0.1002} & 0.1408 & 0.2787 \\
    \midrule
    \multirow{2}{*}{Llama-3.2-1B-Instruct}
      & code-first & 0.2514 & \textbf{0.1371} & 0.2239 & 0.1819 \\
      & math-first & 0.2419 & 0.1183 & \textbf{0.1089} & 0.2799 \\
    \midrule
    \multirow{2}{*}{Qwen3-1.7B}
      & code-first & 0.2213 & \textbf{0.1216} & 0.2451 & 0.2412 \\
      & math-first & 0.2098 & \textbf{0.0858} & 0.1764 & 0.2688 \\
    \midrule
    \multirow{2}{*}{Qwen3-1.7B-Base}
      & code-first & 0.2493 & \textbf{0.2396} & 0.2431 & 0.2910 \\
      & math-first & 0.2455 & 0.2335 & \textbf{0.1710} & 0.2861 \\
    \bottomrule
  \end{tabular}
\end{table}

\subsection{Additional Results for Robustness to Adversarial Attack}\label{appendix:adv}

\revise{\Cref{tab:adv_dataset} reports full adversarial robustness results across all seven LLMs. 
The adversarial prompts are generated by GPT-4o using the templates described in~\cref{tab:adv_prompt_templates}, which specify three increasing levels of disguise: Easy adds a thin lexical code veneer, Medium introduces a function header with a prose mathematical body, and Hard embeds the problem in a bug-report framing. 
Across all levels, the underlying mathematical task is held fixed while misleading code-like cues are progressively injected, allowing us to test whether classifiers follow semantic intent or surface form. 
The resulting dataset is therefore intended as a controlled stress test rather than simulating typical human-authored adversarial prompts or real production failures. 
GPT5-Nano and GPT5 is used only to sanity-check the intended difficulty ordering, not to validate the realism of the generated prompts; accordingly, the results characterize robustness to this specific family of synthetic rephrasings.}

The main observations from \cref{sec:adv} hold consistently: performance degrades monotonically with difficulty, and training-free methods are substantially more robust than training-based ones for larger Qwen models. 
\revise{RoBERTa similarly drops to \(10.60\%\) on Easy and \(5.60\%\) on Medium, showing that the robustness gap is not specific to MLP heads on LLM features.}
We highlight several additional patterns.

\paragraph{Effect of post-training.} Comparing Qwen3-1.7B against Qwen3-1.7B-Base reveals a striking degradation in the base model: almost all methods drop to near-zero even on the Easy tier, indicating that post-training alignment substantially improves adversarial robustness across both paradigms.

\paragraph{Consistency across Qwen scales.} The robustness advantage of training-free methods over \embedavg{} holds across all Qwen model sizes. \vecstatCos{} is the strongest training-free method for smaller and mid-size Qwen models, while \normstatKL{} dominates for Qwen3-32B. This model-dependence of the best training-free variant motivates treating method selection as a deployment decision informed by the target model family and scale.

\begin{table}[htb]
\centering
\caption{\small Adversarial prompt templates at three difficulty levels, each used with GPT-4o. A shared system prompt instructs the model to preserve all mathematical facts, numbers, equations, and constraints from the original, ensuring identical answers and identical reasoning; only framing, vocabulary, formatting, and surrounding scenario may change. Original mathematics problems are inserted at \texttt{\{problem\}}.}
\label{tab:adv_prompt_templates}
\small
\begin{tabular}{@{} p{0.32\textwidth} p{0.32\textwidth} p{0.32\textwidth} @{}}
\toprule
\textbf{Easy} & \textbf{Medium} & \textbf{Hard} \\
\midrule
\multicolumn{3}{c}{\textit{Increasing depth of disguise: lexical $\;\rightarrow\;$ structural $\;\rightarrow\;$ narrative}} \\[0.4em]
\textit{Goal:} Keep the original math problem mostly intact and add only a thin lexical code veneer.
&
\textit{Goal:} Present the problem with a short untyped Python function header followed by a purely prose mathematical description.
&
\textit{Goal:} Embed the problem in a realistic internal bug report for a misbehaving helper function.
\\[0.4em]

\begin{itemize}[leftmargin=1.2em,topsep=1pt,itemsep=1pt,parsep=0pt]
    \item Preserve original sentence order and phrasing.
    \item Name exactly one programming language.
    \item Add a minimal wrapper (\texttt{solve()} or \texttt{compute()}).
    \item Rephrase the final ask as ``what value should the function return?''
    \item Add only 2--5 code-signaling terms: \emph{function}, \emph{return}, \emph{parameter}, \emph{variable}, \emph{input}.
    \item No code block; no extra variables.
    \item Length cap: ${\leq}1.5\times$ original.
\end{itemize}

&

\begin{itemize}[leftmargin=1.2em,topsep=1pt,itemsep=1pt,parsep=0pt]
    \item Begin with one untyped \texttt{def} header; use plain parameter names matching the problem's given quantities; no type annotations; no return-type arrow.
    \item Prose body after the header describes the problem in ordinary sentences, preserving every mathematical fact.
    \item End by asking what value or expression the function should return.
    \item No type annotations, backticks, code blocks, docstrings, function body, loops, conditionals, or data structures.
    \item Length cap: ${\leq}2\times$ original.
\end{itemize}

&

\begin{itemize}[leftmargin=1.2em,topsep=1pt,itemsep=1pt,parsep=0pt]
    \item Issue title referencing a function name.
    \item 2--4 structured sections (e.g., \textsc{Context}, \textsc{Repro}, \textsc{Observed}, \textsc{Expected}); \textsc{Context} must embed all mathematical facts sufficiently to solve the problem.
    \item Example call with original problem values; output marked \texttt{?} (e.g., \texttt{f(a=3, b=4)\ ?}).
    \item End with a 1-sentence reviewer-style comment.
    \item No function body, stack traces, logs, diff hunks, or file paths.
    \item Length cap: ${\leq}4\times$ original.
\end{itemize}

\\
\midrule
\multicolumn{3}{@{} p{0.885\textwidth} @{}}{%
\textbf{Shared system constraints (all levels):}
preserve every mathematical fact, symbol, equation, and relationship from the original;
the rewritten problem must have the same answer and require the same mathematical reasoning;
do not add programming-knowledge requirements, hints, edge cases, type checks, implementation details, or extra assumptions;
output only the rewritten problem text.} \\
\bottomrule
\end{tabular}
\end{table}

\subsection{The effect of the number of layers considered} \label{appendix-nb_layers}
See~\cref{fig:nb_layers}.

\begin{figure}[h]
    \centering
    
    \includegraphics[width=0.998\textwidth]{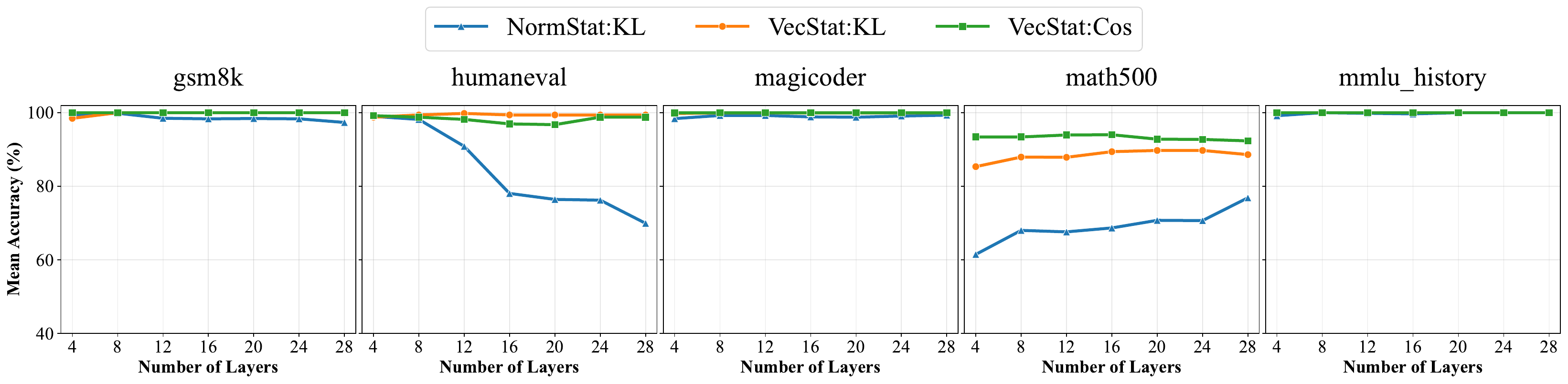} 
    
    \caption{\small Effect of the number of layers on level-1 classification accuracy for Qwen3-1.7B.}
    \label{fig:nb_layers}
\end{figure}

    
    
    


\subsection{The effect of the maximum prompt length}
\label{appendix:seqlen}

See~\cref{fig:seqlen}.

\begin{figure}[h] 
    \centering
    \includegraphics[width=0.98\textwidth]{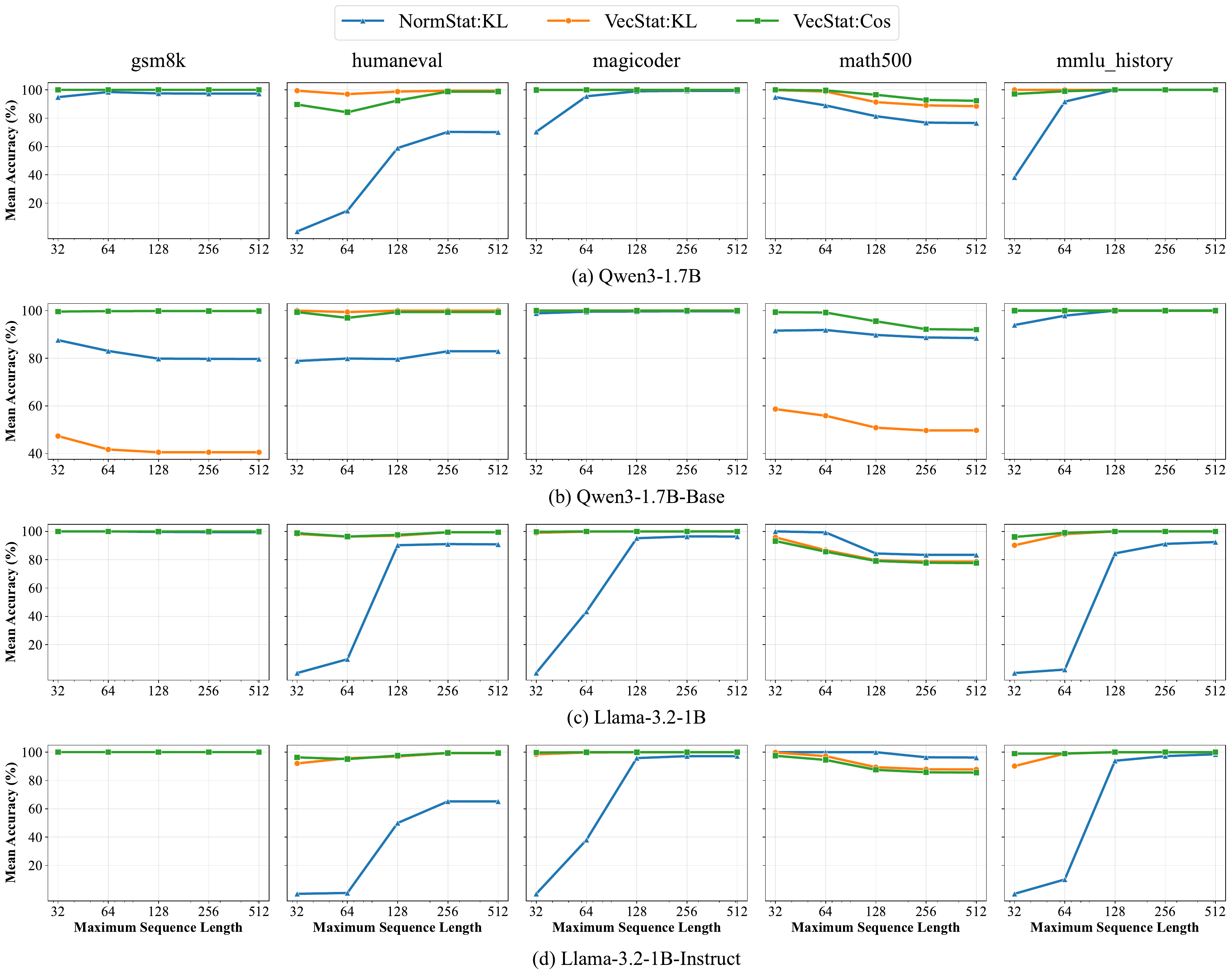} 
    \caption{\small Effect of the maximum prompt length on level-1 classification accuracy for Qwen3-1.7B (a), Qwen3-1.7B-Base (b), Llama-3.2-1B (c), and Llama-3.2-1B-Instruct (d)..}
    \label{fig:seqlen}
\end{figure}

\subsection{Calibration Convergence Analysis}\label{appendix:calibration}
See~\cref{fig:calibration-convergence-appendix}.
  \begin{figure}[h]
      \centering
      \begin{subfigure}[b]{0.48\textwidth}
          \centering
          \includegraphics[width=\textwidth]{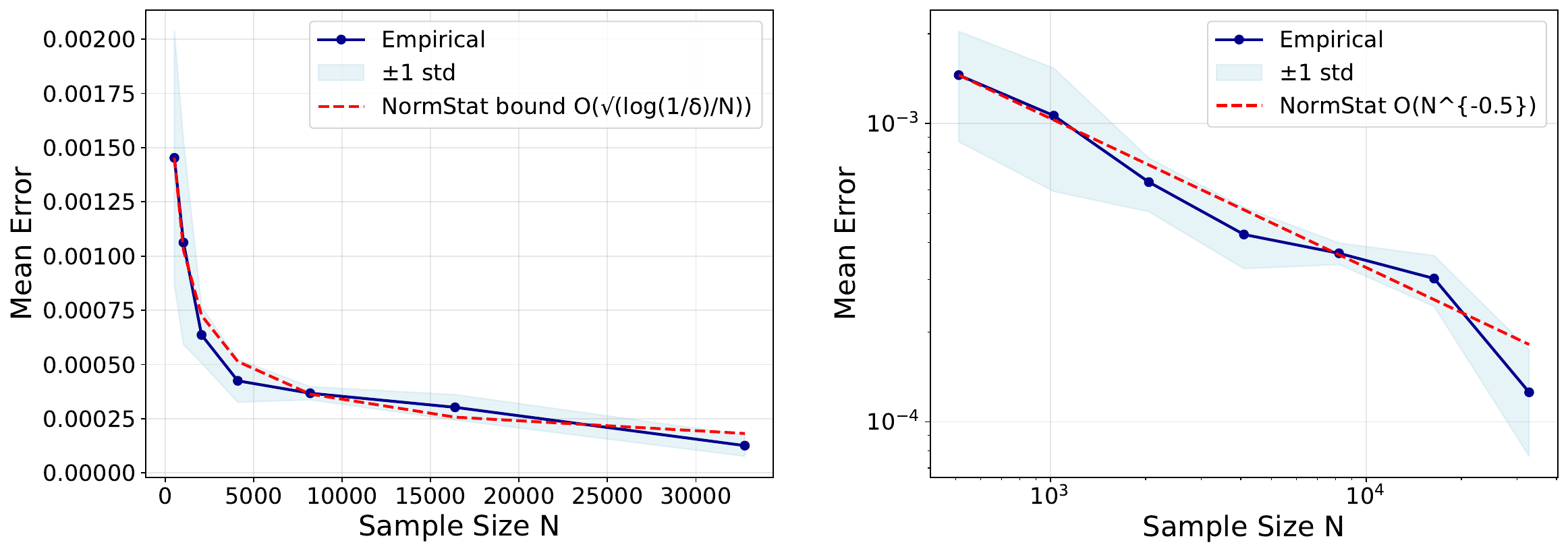}
          \caption{Llama-3.2-1B-Instruct  (\normstat)}
          \label{fig:llama-instruct-magicoder-norm}
      \end{subfigure}
      \hfill
      \begin{subfigure}[b]{0.48\textwidth}
          \centering
          \includegraphics[width=\textwidth]{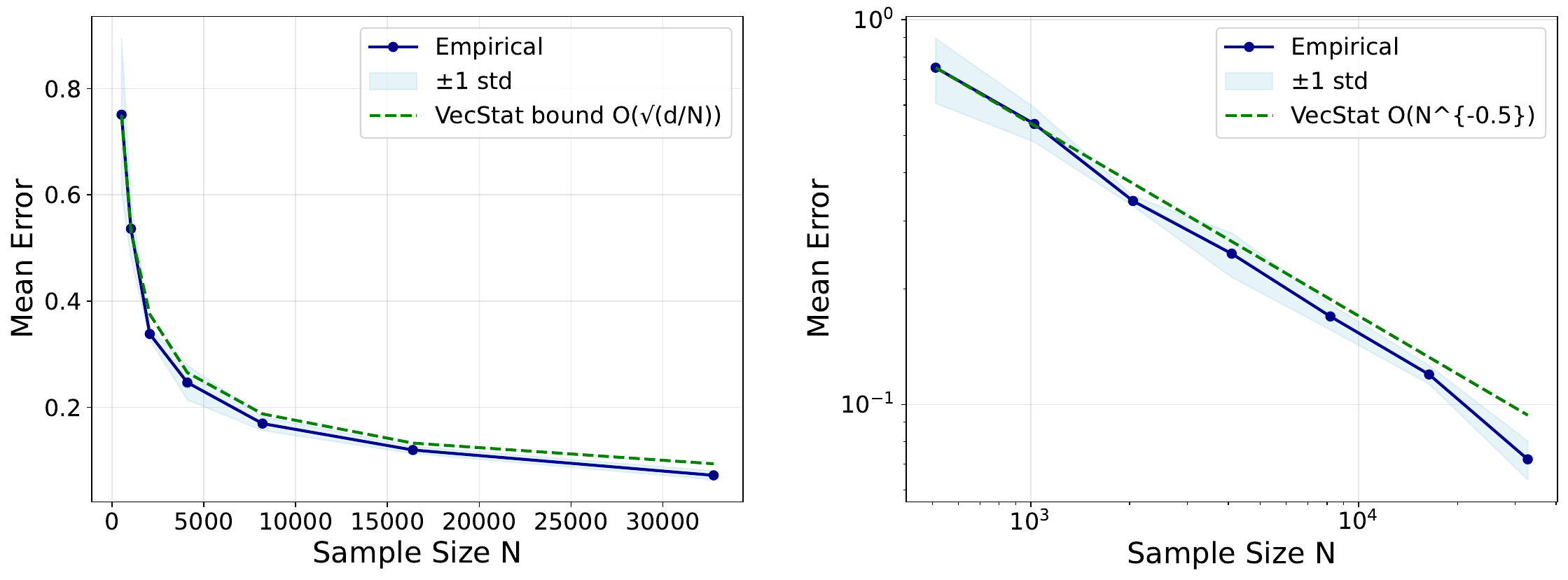}
          \caption{Llama-3.2-1B-Instruct  (\vecstat)}
          \label{fig:llama-instruct-magicoder-proj}
      \end{subfigure}

      \vspace{0.5cm}

      \begin{subfigure}[b]{0.48\textwidth}
          \centering
          \includegraphics[width=\textwidth]{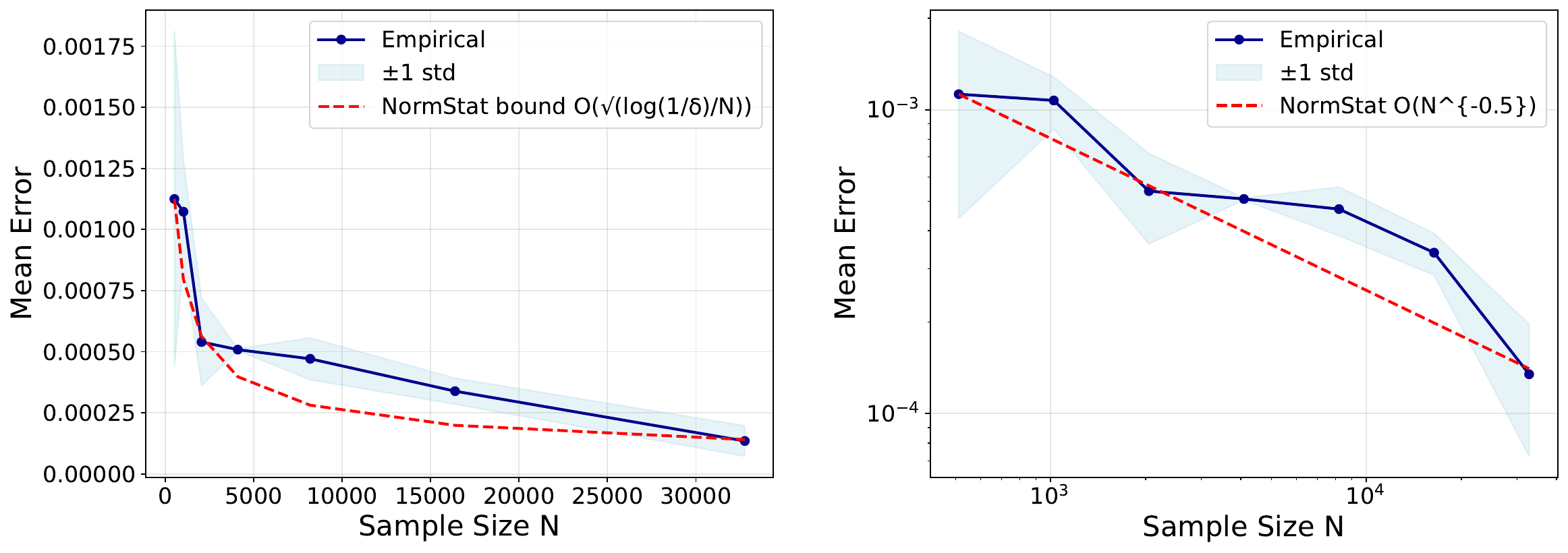}
          \caption{Llama-3.2-1B  (\normstat)}
          \label{fig:llama-magicoder-norm}
      \end{subfigure}
      \hfill
      \begin{subfigure}[b]{0.48\textwidth}
          \centering
          \includegraphics[width=\textwidth]{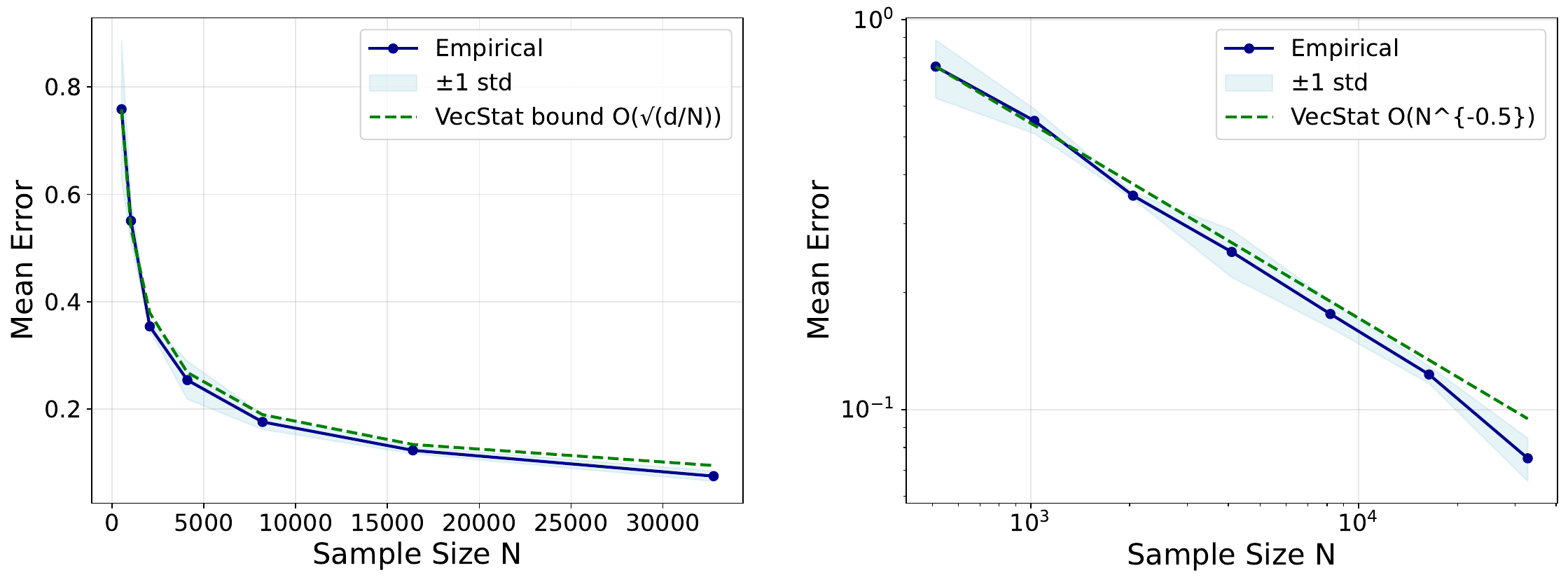}
          \caption{Llama-3.2-1B  (\vecstat)}
          \label{fig:llama-magicoder-proj}
      \end{subfigure}

      \vspace{0.5cm}

      \begin{subfigure}[b]{0.48\textwidth}
          \centering
          \includegraphics[width=\textwidth]{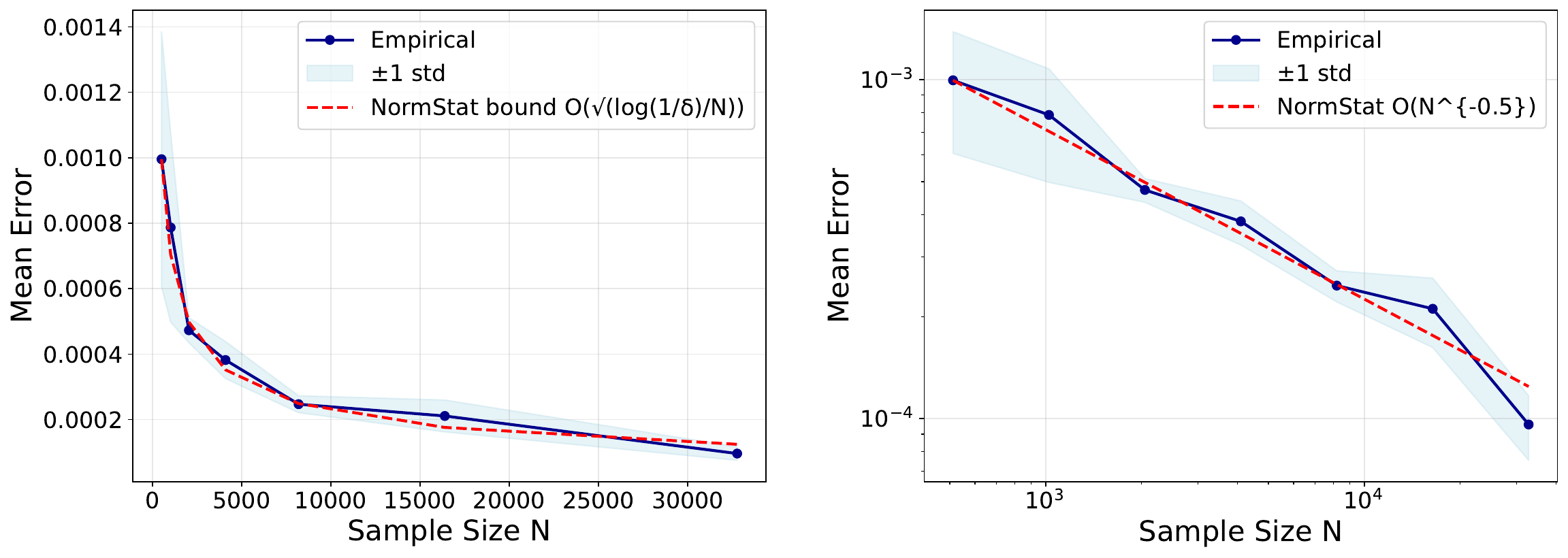}
          \caption{Qwen3-1.7B-Base  (\normstat)}
          \label{fig:qwen-base-magicoder-norm}
      \end{subfigure}
      \hfill
      \begin{subfigure}[b]{0.48\textwidth}
          \centering
          \includegraphics[width=\textwidth]{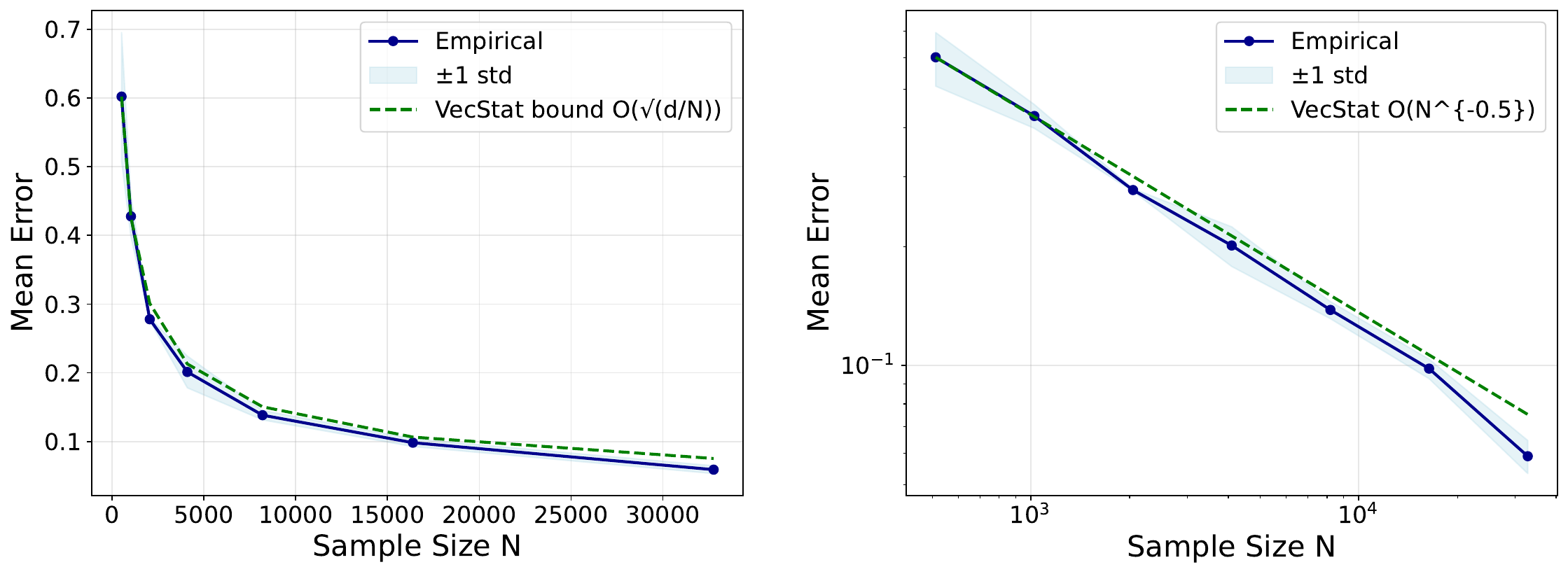}
          \caption{Qwen3-1.7B-Base  (\vecstat)}
          \label{fig:qwen-base-magicoder-proj}
      \end{subfigure}

      \vspace{0.5cm}

      \begin{subfigure}[b]{0.48\textwidth}
          \centering
          \includegraphics[width=\textwidth]{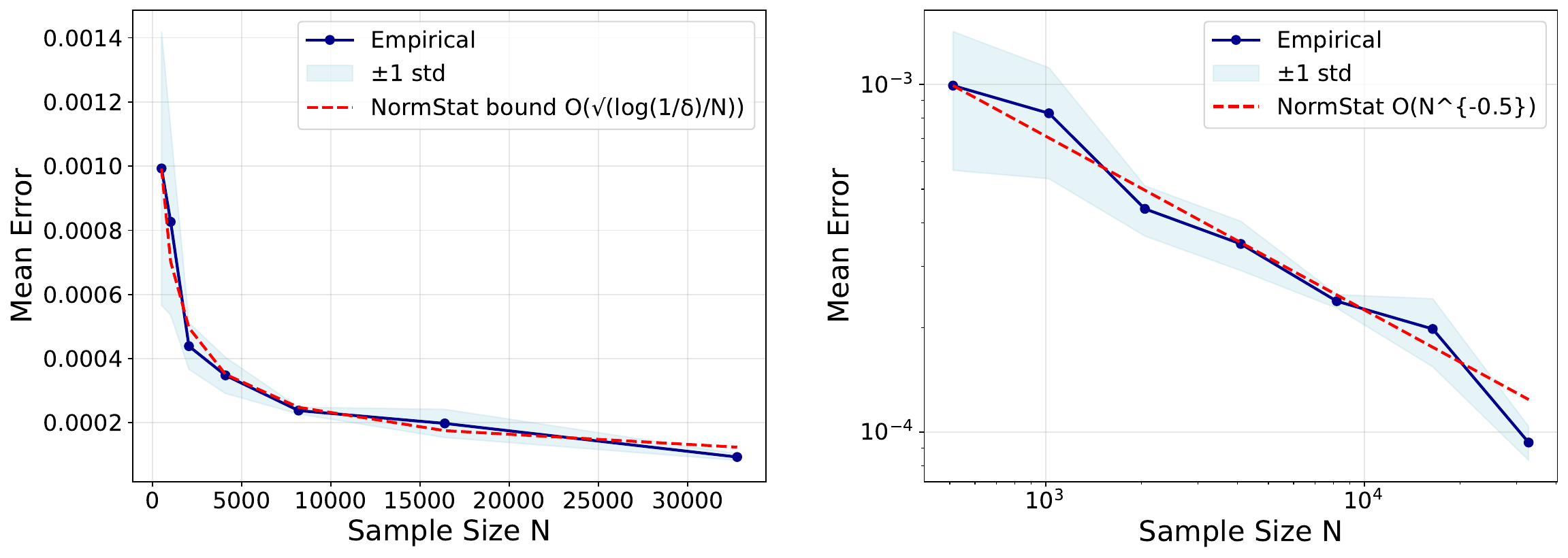}
          \caption{Qwen3-1.7B  (\normstat)}
          \label{fig:qwen-magicoder-norm}
      \end{subfigure}
      \hfill
      \begin{subfigure}[b]{0.48\textwidth}
          \centering
          \includegraphics[width=\textwidth]{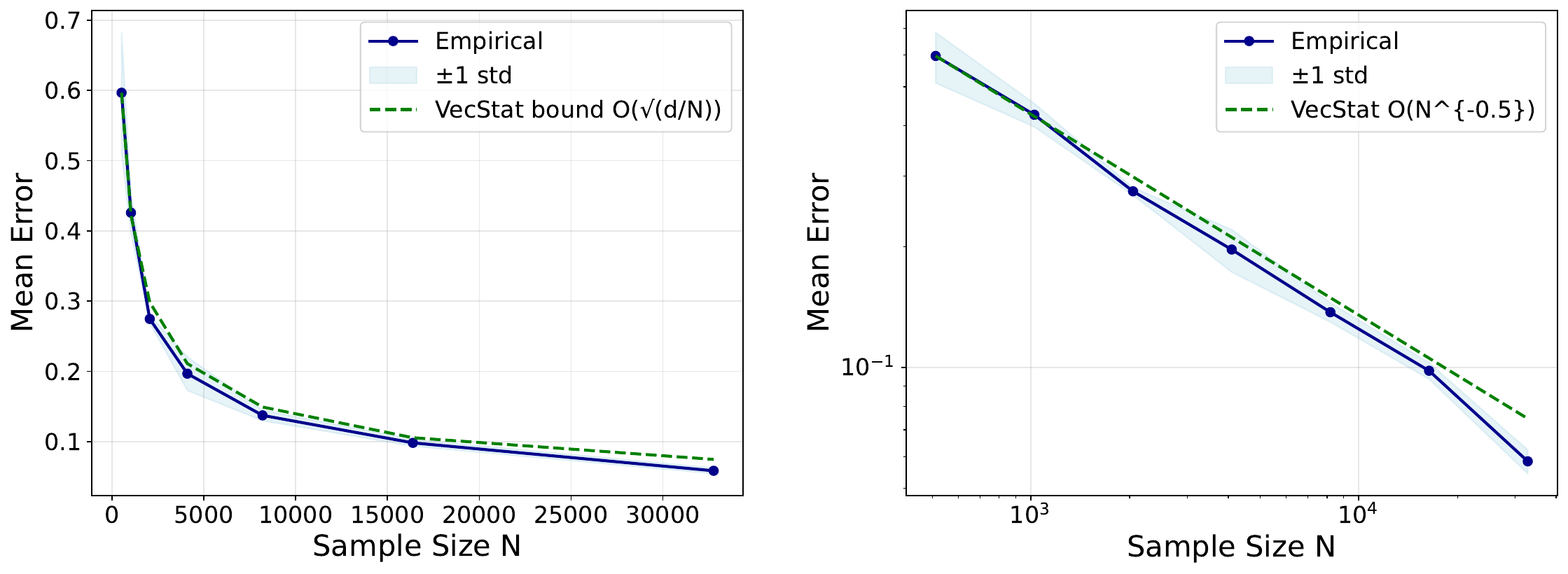}
          \caption{Qwen3-1.7B  (\vecstat)}
          \label{fig:qwen-magicoder-proj}
      \end{subfigure}

      \caption{\small Calibration convergence analysis for different models and methods on the MagiCoder dataset. Each subplot shows both linear and log-log scales
  comparing empirical results with theoretical bounds. \normstat\ (norm method) uses dimension-free bounds while \vecstat\ (projection method) uses dimension-dependent      
  bounds.}
      \label{fig:calibration-convergence-appendix}
  \end{figure}

\section{More Related Works}
\label{app:more_related_works}
\paragraph{LLM Routing}
Early work on LLM routing either ensembles outputs from multiple models \citep{jiang2023llm,wang2023fusing} or uses cascades that query models sequentially by capability \citep{aggarwal2023automix,chen2023frugalgpt,yue2023large}, but both incur high latency and cost due to multiple calls per query. Subsequent approaches train learned routers--model-based predictors--that estimate per-query quality or cost and select the target LLM \citep{hari2023tryage,stripelis2024tensoropera,feng2024graphrouter,dekoninck2024unified,somerstep2025carrot,jitkrittum2025universal}; these reduce unnecessary calls but introduce nontrivial training and maintenance overhead. Meanwhile, there are also training-free routers which choose among LLMs using lightweight ranking or budget-aware criteria \citep{zhao2024eagle,wu2025efficient}. In contrast, our training-free statistical method is cheaper still because it operates entirely within a single LLM’s prefill: we compute simple statistics of internal activations to obtain fast, calibrated intent probabilities that serve as an efficient router without extra forward passes or router training.

\end{document}